\documentclass{article} 


\usepackage{amsmath,amsfonts,bm}


















\def\1{\bm{1}}








\def\vm{{\bm{m}}}

\def\vv{{\bm{v}}}

\def\vx{{\bm{x}}}
\def\vy{{\bm{y}}}
\def\vz{{\bm{z}}}



\DeclareMathAlphabet{\mathsfit}{\encodingdefault}{\sfdefault}{m}{sl}
\SetMathAlphabet{\mathsfit}{bold}{\encodingdefault}{\sfdefault}{bx}{n}






\def\der{{\mathrm{d}}}






\newcommand{\E}{\mathbb{E}}
\newcommand{\Ps}{\mathcal{P}}
\newcommand{\Ls}{\mathcal{L}}
\newcommand{\Is}{\mathcal{I}}
\newcommand{\Gs}{\mathcal{G}}
\newcommand{\Os}{\mathcal{O}}

\newcommand{\Ms}{\mathcal{M}}
\newcommand{\R}{\mathbb{R}}



\DeclareMathOperator*{\argmin}{arg\,min}
\DeclareMathOperator*{\iid}{\overset{\mathrm{iid}}{\sim}}

\usepackage[left=1in,top=1in,right=1in,bottom=1in,letterpaper]{geometry}
\usepackage{hyperref}
\usepackage{url}
\usepackage{microtype}
\usepackage{graphicx}
\usepackage{subfigure}
\usepackage{booktabs} 
\usepackage{amsmath}
\usepackage{amsfonts}
\usepackage{amsthm}
\usepackage{xspace}
\usepackage{soul}
\usepackage{mathtools}
\usepackage[square,sort,comma,numbers]{natbib}
\usepackage{breqn}
\usepackage[T1]{fontenc}
\usepackage[font=small,labelfont=bf,tableposition=top]{caption}
\usepackage[bottom]{footmisc}
\usepackage{tablefootnote}
\usepackage[dvipsnames]{xcolor}
\usepackage{float}
\usepackage{mathtools}

\DeclareCaptionLabelFormat{andtable}{#1~#2  \&  \tablename~\thetable}

\newtheorem{thm}{Theorem}[section]

\newtheorem{prop}[thm]{Proposition}
\newtheorem{lem}[thm]{Lemma}

\newtheorem{defn}[thm]{Definition}
\newtheorem{remark}[thm]{Remark}

\newlength\mylength
\newlength\mylengthLong
\setlength\mylength{\dimexpr.15\columnwidth-2\tabcolsep-0.5\arrayrulewidth\relax}
\setlength\mylengthLong{\dimexpr.22\columnwidth-2\tabcolsep-0.5\arrayrulewidth\relax}


\title{Unsupervised Solution Operator Learning for Mean-Field Games via Sampling-Invariant Parametrizations\footnote{This work is supported in part by NSF DMS-2401297.}}


\author{Han Huang \thanks{H. Huang (huangh14@rpi.edu) is with the department of mathematics, Rensselaer Polytechnic Institute. }\qquad 
Rongjie Lai \thanks{Corresponding author. R Lai (lairj@purdue.edu) is with the department of mathematics, Purdue University. }
}

%

\begin{document}

\date{}
\maketitle

\begin{abstract}


Recent advances in deep learning has witnessed many innovative frameworks that solve high dimensional mean-field games (MFG) accurately and efficiently. These methods, however, are restricted to solving single-instance MFG and demands extensive computational time per instance, limiting practicality. To overcome this, we develop a novel framework to learn the MFG solution operator. Our model takes MFG instances as input and output their solutions with one forward pass. To ensure the proposed parametrization is well-suited for operator learning, we introduce and prove the notion of sampling-invariance for our model, establishing its convergence to a continuous operator in the sampling limit. Our method features two key advantages. First, it is discretization-free, making it particularly suitable for learning operators of high-dimensional MFGs. Secondly, it can be trained without the need for access to supervised labels, significantly reducing the overhead associated with creating training datasets in existing operator learning methods. We test our framework on synthetic and realistic datasets with varying complexity and dimensionality to substantiate its robustness. Compared to single-instance neural MFG solvers, our approach reduces the time to solve a MFG problem by more than five orders of magnitude without compromising the quality of computed solutions.

\end{abstract}

\section{Introduction}

Mean-field games (MFG) have emerged as a powerful mathematical framework with diverse applications in game theory~\cite{MFG_game_thry, Learning_MFG}, economics~\cite{MFG_econ1, MFG_econ2}, and industrial planning~\cite{MFG_industry1, MFG_industry2, MFG_industry3}. At its core, MFG is the study of strategic interactions among a large number of rational agents whose actions collectively impact the overall system, offering a bridge between micro-level decision-making and macro-level dynamics. To obtain the strategic equilibria of a MFG, one can derive a forward and backward system composed of a Hamilton-Jacobi-Bellman (HJB) equation~\cite{NN_MFP} and a Fokker-Planck equation evolving bidirectionally in time that prescribes the MFG's optimiality condition. A plethora of numerical methods can then be employed to solve this system~\cite{achdou2010mean, benamou2014augmented,benamou2017variational,benamou2000computational, jacobs2019solving,papadakis2014optimal,yu2021fast}. While these methods are accurate and sometimes provably convergent, they scale poorly with respect to problem dimension due to their reliance on spatial discretization. 

Recent strides in solving MFG have seen a paradigm shift with the advent of deep learning techniques~\cite{NN_MFP, MFG_NF, Tong_APACNet}. A common underpinning for these methods is to parametrize key terms in the MFG system, e.g., the value function or the agent trajectory, with suitable neural architectures and use black-box optimization to search for the solution within a class of functions. If the architecture is expressive enough, the surrogate serves as a good approximation for the MFG optima. Compared to traditional methods, deep learning approaches sidesteps spatial discretization with parametrization and can accurately solve MFG in hundreds or even thousands of dimensions~\cite{schodinger_bridge_MFG}.

While deep learning has been successful in tackling previously intractable MFG problems, current approaches are calibrated to solve \textit{single instances} of MFG. As a motivating example, consider a motion planning problem that seeks to efficiently move a group of drones to their destination while avoiding spatial obstacles. Current approaches train a network to solve this MFG problem with fixed initial and terminal drone locations. However, were the drones to start or end at a different location, the previous optimization is rendered obsolete, and a new network must be trained from scratch. As the training takes the order of hours to days~\cite{NN_MFP, MFG_NF}, the existing methods are severely limited in their applicability. 


To bridge this gap, we propose a novel framework that uses a single network to solve different MFG problems without the need for retraining. Our approach trains on a collection of MFG configurations and learns the \textit{solution operator} that maps between the problem setup and solution. The main challenge here lies within parametrizing a mapping between infinite dimensional spaces. Recently, neural operators~\cite{FNO, neural_op} has emerged as a prominent framework for learning solvers for partial differential equations (PDEs). Naively, one can choose a discretization grid and represent the input and output with their density values on a finite number of points. This approach, however, scales poorly with dimensionality and sacrifices the very advantage that deep learning approaches are crowned upon. In addition, the training of neural operators relies on ground truth solutions, which are particularly expensive to obtain for MFG problems.


To overcome these challenges, we employ finite samples to represent agent distributions. Our architecture integrates attention mechanism. It processes input samples of any size consistently, offering a discretization-free and scalable approach for high-dimensional environments. We also introduce a concept,  \textit{sampling invariance}, which characterizes advantages of our network parametrization. Additionally, we introduce a novel objective that minimizes the collective MFG energy across problem instances. Compared to alternative supervised learning approaches, our method allows for the direct parameterization of the solution operator and can be optimized without supervised labels, broadening its applicability across numerous scenarios. Our framework's effectiveness is demonstrated through comprehensive numerical experiments on both synthetic and real-world datasets. Our model successfully learns precise and intuitive MFG solutions across various levels of complexity and dimensionality, marking the first computational method for learning high-dimensional MFG solution operators in the unsupervised manner. By training on the distribution of MFG instances, our method can solve new problems at inference time without further weight updates, which transforms the time to solve a MFG from hours to essentially real-time.  

\paragraph{Contributions}

\begin{itemize}
    \item We develop a framework for learning high dimensional MFG solution operators. 
    \item Our approach features a novel objective that can be optimized without ground truth labels. We prove its minimizers are MFG solution operators and remark that it may be adapted to enable unsupervised operator learning for other computational problems.
    \item We introduce and prove the notion of sampling invariance for our architecture. The definition is general and applicable to settings where sampling based representations are natural.
    \item We derive analytical solutions for a special MFG problem to illustrate our method's scalability with regard to dimension. It also serves as a useful test case for neural MFG solvers.
    \item We compare our approach to single-instance MFG solvers to underscore its comparable accuracy and immense reduction in inference time - about \textbf{five orders of magnitude} faster for solving a new MFG problem.
\end{itemize}

\subsection{Related Works}
\label{sec:relaedworks}

\paragraph{MFG and OT}Variational mean field game (MFG) generalizes the dynamic formulation of optimal transport (OT)~\cite{OT_book}, where the final density matching is not strictly enforced and additional transport costs beyond kinetic energy can be considered. While traditional methods for solving OT and variational MFG problems are well-established in lower dimensions ~\cite{achdou2010mean, benamou2014augmented,benamou2017variational,benamou2000computational,jacobs2019solving,papadakis2014optimal,yu2021fast,cuturi2013sinkhorn, yu2023computational}, such methods typically require spatial discretization, so their memory consumption scales exponentially with respect to dimensionality.

Recent advances in machine learning have enabled the solving of high dimensional MFG and OT. For MFG, \cite{NN_MFP} approximates the value function of deterministic MFGs using deep neural networks, while \cite{MFG_NF} discretizes and parametrizes agent trajectories with normalizing flows. In addition, \citep{Tong_APACNet} solves stochastic MFGs with adversarial training, and \cite{schodinger_bridge_MFG} leverages theory in Schrodinger Bridge and techniques in reinforcement learning to solve MFGs with potentially discontinuous interaction costs. For OT, \cite{ICNN_OT} leverages input convex neural networks (ICNN) to parametrize dual potentials and solves the OT problem via minimax optimization. \cite{W2GAN} builds on the input convex framework and uses a cycle consistency loss to sidestep adversarial training. 

While the aforementioned neural MFG solvers can produce accurate solutions for individual MFG problems, their optimization process must be reinitiated for every MFG instance, rendering them impractical for solving large collections of MFGs.

\paragraph{Operator Learning}
A prominent framework for parametrizing and learning maps between function spaces is neural operators \cite{neural_op}, whose main application concerns the solution of partial differential equations (PDE). Prominent architectures in the neural operator family include the Fourier neural operator and the Multipole graph neural operator \cite{FNO, MGNO}, which are recognized for their efficiency and invariance to domain discretization.

To our knowledge, the sole existing method for learning MFG solution operators is detailed in~\cite{MFG_master_policy}. It introduces the master policy - a strategy that allows a representative agent to play optimally against any population distribution, then leverages reinforcement learning techniques to approximate such policy. However, this approach is tailored to MFG with discrete action and state spaces, so its adaptation to continuous settings necessitates expensive state space discretization. In contrast, our work directly addresses continuous dynamics and is more scalable to higher spatial dimensions.

Similarly, we know of only one approach for computing the solution map for OT. The work~\cite{meta_OT} leverages a hypernetwork~\cite{hypernetwork2} to predict weights that parametrize optimal couplings for a set of OT problems. Crucially, this approach hinges on utilizing input-convex neural networks \cite{ICNN_OT} to parametrize the dual potential of OT, which neglects the problem's dynamical aspect and thus cannot generalize to the MFG setting.

Recently, ~\cite{ICON} proposes in-context operator network(ICON), a single transformer used to parametrize an "operator of operators" as a general, problem-agnostic framework to solve different PDE problems. It leverages in-context learning for the trained model to identify the operator in question from few-shot demonstrations consisting of observational data, then make predictions for the queried quantities of interest. ICON has been tested on a few MFG problems in one and two dimensions, showing promising performance. We highlight two crucial differences between our framework and ICON. Firstly, training ICON requires ground truth solutions for the desired problem, which can be costly to obtain. In contrast, our framework minimizes the amortized MFG cost and can be trained in an unsupervised fashion. Second, ICON has only been tested on low-dimensional MFGs, so its performance in higher dimensions is unknown. 

The rest of this paper is organized as follows. In Section \ref{sec:background}, we delve into the background of MFGs and highlight the limitations associated with directly applying existing operator learning methods to address MFGs.
Section \ref{sec:methodology} introduces a novel framework for unsupervised learning of solvers for MFGs. This includes a detailed exploration of the strategy for sampling representation and network architecture.
Moving on to Section \ref{sect:theo_results}, we state theoretical analyses of our proposed method. We demonstrate the sample invariance of our network and establish the theoretical soundness of the unsupervised framework.
In Section \ref{sec:results}, we present comprehensive experimental results to assess the effectiveness of our proposed method in solving MFGs. In Section \ref{sec:proofs}, we provide proof details for all theoretical statements. 
Finally, Section \ref{sec:conclusion} concludes the paper, summarizing key findings and potential avenues for future research.

\section{Background}
\label{sec:background}
\subsection{Mean-Field Games}

The study of MFGs considers symmetric $N$-player games at the limit of $N\to \infty$~\cite{MFG_varMFG,huang2006large,huang2007large}. Assuming homogeneity of the objectives and anonymity of the players, we set up an MFG with a continuum of non-cooperative rational agents distributed spatially in $\mathbb{R}^d$ and temporally in $[0, T]$. For any fixed $t\in [0,T]$, we denote the population density of agents by $p(\cdot, t)$. For an agent starting at $\vx_0\in \R^d$, their position over time follows a trajectory $\vx: [0,T] \to \R^d$ governed by
\begin{equation} \label{agent_traj}
\left\{\begin{aligned}
    \der \vx(t) &= \vv(\vx(t),t)\der t, \quad \forall t\in [0,T]\\
    \vx(0) &= \vx_0,
\end{aligned}\right.
\end{equation}
where $\vv: \R^d \times [0,T] \to \R^d$ specifies an agent's action at a given time. For simplicity, we assume no stochastic terms in~\eqref{agent_traj}. As a result, each agent's trajectory is completely determined by $\vv(\vx,t)$. To play the game over an interval $[t, T]$, each agent seeks to minimize their objective:
\begin{equation} \label{MFG}
\begin{aligned}
    J_{\vx_0,t }(\vv,p) &\coloneqq \int_t^\top  [L(\vx(s),\vv(\vx(s),s)) + I(\vx(s), p(\vx(s), s)) ] \der s + M(\vx(T), p(\vx(T), T))\\
    &\text{s.t. } \eqref{agent_traj} \text{ holds}.
\end{aligned}
\end{equation}
The transport cost $L: \R^d \times \R^d \to \mathbb{R}$ is incurred by each agent's own action. A common example is the kinetic energy $L(\vx,\vv) = \|\vv\|^2$, which accounts for the total amount of movement along the trajectory. The interaction cost $I: \R^d \times \mathcal{P}(\R^d) \to \mathbb{R}$ is accumulated through the agent interacting with another agent or with the environment. For instance, one can consider an entropy term that discourages grouping behavior, or a penalty for colliding with an obstacle~\cite{NN_MFP}. The terminal cost $M: \R^d \times \mathcal{P}(\R^d) \to \mathbb{R}$ is computed from the agents' final state, which typically measures a discrepancy between the final measure $P(\cdot, T)$ and the desired $P_1 \in \mathcal{P}(\R^d)$.


Under suitable assumptions, the seminal work~\cite{MFG_varMFG} established an equivalence between MFG and mean-field control problems. Suppose there exist functionals $\mathcal{I}, \mathcal{M}: \mathcal{P}(\R^d) \to \mathbb{R}$ such that
\begin{align*}
    I(\vx,p) = \frac{\delta \mathcal{I}}{\delta p}(\vx), \quad M(\vx,p) = \frac{\delta \mathcal{M}}{\delta p}(\vx),
\end{align*}
where $\frac{\delta}{\delta p}$ is the variational derivative. Then, the functions $p(\vx,t)$ and $\vv(\vx,t)$ satisfying~\eqref{MFG} coincide with the optimizers of the following variational problem: 
\begin{equation}\label{var_MFG}
\begin{split}
    \inf_{p,\vv} J(p,\vv) \coloneqq \int_0^\top  \int_{\R^d} L(\vx, \vv(\vx,t),p(\vx,t) \der\vx \der t &+ \int_0^\top  \mathcal{I} (p(\cdot, t))\der t + \mathcal{M}(p(\cdot, T))\\
    \text{s.t. } \quad \partial_t p(\vx,t) + \nabla_{\vx} \cdot (p(\vx,t)\vv(\vx,t)) &= 0, \quad \vx\in \R^d, t\in [0,T]\\
    p(\vx,0) = p_0(\vx), \quad \vx\in &\R^d.
\end{split}
\end{equation}
This formulation is termed the \emph{variational MFG}. Unless specified otherwise, we assume $T=1$ and use the $L_2$ transport cost $ L(\vx,\vv(\vx,t),p(\vx,t)) = \lambda_L p(\vx,t) \|\vv(\vx,t)\|_2^2$, where $\lambda_L \geq 0$ is a hyperparameter.

\subsection{Trajectory-Based MFG}
A key challenge in solving the variational problem~\eqref{var_MFG} lies in enforcing the continuity equation. To sidestep this, a trajectory-based reformulation of the variational MFG~\eqref{var_MFG} is derived in \cite{MFG_NF} to remove the PDE constraint. Let $P(\cdot,t)$ be the measure that admits $p(\cdot,t)$ as its density for all $t\in [0,T]$. Define the agent trajectory as $F: \mathbb{R}^{d} \times \mathbb{R} \to \mathbb{R}^d$, where $F(\vx,t)$ is the position of the agent starting at $\vx$ and having traveled for time $t$. It satisfies the following differential equation:
\begin{equation} \label{NF_traj}
\left\{\begin{aligned}
    \partial_t F(\vx,t) &=  \vv(F(\vx,t), t), \quad \vx\in \R^d, t\in[0,T] \\
    F(\vx,0) &= \vx, \quad \vx\in \R^d.
\end{aligned}\right.
\end{equation}
The evolution of the population density is determined by the movement of agents. Thus, $P(\cdot,t)$ is simply the push-forward of $P_0$ under $F(\cdot, t)$; namely, $P(\vx, t) = F(\cdot, t)_* P_0(\vx)$, whose associated density satisfies
\begin{equation} \label{density_push-forward}
\begin{aligned}
    p(\vx,t) &= \mathrm{d}(F(\cdot, t)_* P_0)(\vx),
\end{aligned}
\end{equation}
where $\mathrm{d}(F(\cdot, t)_* P_0)(\vx) \coloneqq F(\cdot, t)_* p_0(\vx)$ is a Radon-Nikodym derivative. As a result, we can apply the change of measure to transform \eqref{var_MFG} into
\begin{equation} \label{var_MFG_traj}
\begin{aligned}
    \inf_{F} \quad \lambda_L \int_0^1 \int_{\R^d}  \|\partial_t F(\vx,t)\|_2^2 p_0(\vx)\der\vx \der t &+ \lambda_{\mathcal{I}} \int_0^1 \mathcal{I}(F(\cdot, t)_*P_0) \der t + \lambda_{\mathcal{M}} \mathcal{M}(F(\cdot, T)_*P_0) \coloneqq \mathcal{L} (P_0,P_1,F)\\
    &\text{s.t. } \quad  F(\vx,0) = \vx.
\end{aligned}
\end{equation}

As alluded to previously, trajectory MFG~\eqref{var_MFG_traj} absolves the continuity-equation constraint into the pushforward definition of $p(\vx,t)$. The transformed problem has a much simpler constraint that can be automatically satisfied with suitable parametrizations of $F$. For example , normalizing flows are chosen to represent the trajectory map in~\cite{MFG_NF}.

\subsection{Interaction-free MFG and Optimal Transport}

It is known that in the special case where $\Is \equiv 0$, the MFG~\eqref{var_MFG_traj} is equivalent to a related optimal transport (OT) problem~\cite{MFG_NF}:
\begin{equation} \label{monge_MFG}
\begin{aligned}
    \inf_{T} \quad \lambda_L \int_{\R^d}  \| T(\vx) - \vx \|_2^2 p_0(\vx)\der\vx  &+ \lambda_{\mathcal{M}} \mathcal{M}(T_*P_0)
\end{aligned}
\end{equation}

One may derive the above problem by observing the minimizer of interaction-free MFG has the representation $F^*(\vx,t) = (1-t)\vx + tT^*(\vx)$, i.e., optimal trajectories are straight, then substitute and recast the optimization in terms of $T$. Conceptually, $T$ maps each agent directly to their respective destination and is referred to as the Monge map in the OT literature~\cite{OT_book}. Crucially, this formulation removes time dependent dynamics as well as the initial value constraint from the target mapping, making the problem much simpler to solve. In addition, ~\ref{monge_MFG} is equivalent to the OT problem if we take $\lambda_\Ms \to \infty$.


\subsection{MFG Solution Operator}

We first define the MFG solution operator, the central object of interest of this work.
\begin{defn}
    The MFG solution operator, $\Gs^*$, is defined as $\Gs^*:(P_0, P_1) \mapsto F^* \in \argmin_F \mathcal{L}(P_0,P_1,F)$.
\end{defn}
Intuitively, $\Gs^*$ takes a pair of initial and terminal agent distributions and outputs optimal MFG trajectories that minimizes the costs accumulated during their course of travel. There are no existing methods for learning $\Gs^*$, through operator learning has been considered in other settings. We present two relevant approaches here inspired by some existing works and remark on their respective drawbacks.


A generic approach for obtaining operators such as $\Gs^*$ is found in the neural operator literature \cite{neural_op}. We first obtain a collection of data $\{(P_0^n, P_1^n)\}_{n=1}^N$ and corresponding labels $\{F^n\}_{n=1}^N$, where $((P_0^n, P_1^n), F^n) \overset{\mathrm{iid}}{\sim} \eta$, then seek a solution that minimizes its discrepancy on the training data over space and time: 

\begin{equation} \label{operator_training_w_data}
\begin{aligned}
    \min_{\Gs} \quad \E_{((P_0, P_1), F) \sim \eta} \int_0^1 \|\Gs(P_0, P_1)(\cdot, t) - F(\cdot, t)\|^2_{L_2(\R^d)} \der t
\end{aligned}
\end{equation}


We term~\eqref{operator_training_w_data} the \textit{interpolation-based} objective since it curve-fits the desired operator based on training data. A critical drawback with this approach, however, is its reliance on supervised labels. For reference, typical MFG solvers take minutes to hours to obtain accurate solutions, so obtaining training labels for a large dataset is prohibitively expensive~\cite{yu2021fast}. The situation becomes more dire in high dimensional ($d>3$) settings as classic numerical solvers are simply unusable.

An alternative approach for solution map learning is inspired by a method discussed in~\cite{meta_OT}. Let $F_\theta(\vx, t)$ denote the parametrized agent flow, and let $G_\phi: (P_0, P_1) \mapsto \theta$ be a \textit{hypernetwork} \cite{hypernetwork, hypernetwork2} whose output are weights for the trajectory parameterization network $F_\theta$. Suppose $\{(P_0^n, P_1^n)\}_{n=1}^N$ is a collection of initial and terminal measures sampled from $\mu$, meta-OT considers the following problem (adapted for MFG):

\begin{equation} \label{meta_OT_obj_MFG}
\begin{split}
    \min_{\phi} \quad \E_{(P_0, P_1) \sim \mu} \lambda_L \int_0^1 \int_{\R^d}  \|\partial_t F_\theta(\vx,t)\|_2^2 &p_0(\vx)\der\vx \der t + 
    \lambda_{\mathcal{I}} \int_0^1 \mathcal{I}(F_\theta(\cdot,t)_*P_0) \der t + \lambda_{\mathcal{M}} \mathcal{M}(F_\theta(\cdot,T)_*P_0)\\
    \text{s.t. } \quad  &F_\theta(\vx,0) = \vx.\\
    &G_\phi(P_0, P_1) = \theta
\end{split}
\end{equation}

We call~\eqref{meta_OT_obj_MFG} the \textit{prediction-based}~\cite{pc_OT_reg} objective since it leverages an intermediate model to predict the parameters used for solving each problem instance. While ~\eqref{meta_OT_obj_MFG} can be optimized without training labels, it is difficult to scale because $F_\theta$ may require millions of parameters to solve high dimensional MFG problems \cite{MFG_NF}, rendering $G_\phi$ infeasibly large. In addition, it is known that hypernetworks struggle with precise predictions and have diminishing returns in model and data sizes~\cite{pc_OT_reg}. 

Besides the training objective, we also need an appropriate parametrization for $\Gs^*(P_0,P_1)$. Contrary to typical deep learning settings, $P_0,P_1$ are infinite dimensional objects, so the architecture for $\Gs^*$ warrants special consideration. In the conventional neural operator approach, one may consider the associated density functions to parametrize $\Gs^*$ as a composition of kernel integral operators~\cite{FNO,neural_op}. However, this approach is cursed by dimensionality since the kernel integrals need to be discretized, and the number of points needed to properly cover the spatial domain scales exponentially.


In summary, neural operator learning for MFG solution maps is an unexplored area of research. While adjacent approaches exist, their direct adaptation either incurs expensive computational overhead or suffers from accuracy and scalability limitations. Acknowledging these obstacles, we shall suggest a different approach to sidestep the challenges.

\section{Methodology}
\label{sec:methodology}


The ultimate goal of this work is to parametrize $\Gs^*(P_0, P_1)$ with a neural network and obtain the MFG solution associated with $(P_0,P_1)$ via a single forward pass through the model. To achieve this, we first develop a novel optimization framework that is scalable and allows for the unsupervised learning of $\Gs^*$. Next, we outline a sampling-based finite-dimensional representation of $P_0, P_1$ and propose suitable network architectures based on this representation.  


\subsection{An Unsupervised Operator Learning Framework}

We unveil a novel learning framework that combines merits of interpolation-based~\eqref{operator_training_w_data} and prediction-based~\eqref{meta_OT_obj_MFG} training to enable unsupervised and accurate operator learning. The key observation is that the prediction-based objective~\eqref{meta_OT_obj_MFG} introduces parametrized mappings, i.e. $F_\theta:\mathbb{R}^d\rightarrow\mathbb{R}^d$, into the formulation too prematurely. It pays to adhere to the operator formalism and delay consideration for parametrization until later. In this spirit, we directly consider paramterize solution mapping $\mathcal{G}$ by consider the following optimization problem:

\begin{equation} \label{MFG_oper_obj}
\begin{split}
    \min_{\Gs} \quad \E_{(P_0, P_1) \sim \mu} \lambda_L \int_0^1 \int_{\R^d}  \|\partial_t \Gs(P_0, P_1)(\vx,t)\|_2^2 &p_0(\vx)\der\vx \der t + 
    \lambda_{\mathcal{I}} \int_0^1 \mathcal{I}(\Gs(P_0, P_1)(\cdot,t)_*P_0) \der t +  \\
    &\hspace{10mm} \lambda_{\mathcal{M}} \mathcal{M}(\Gs(P_0, P_1)(\cdot,T)_*P_0) \coloneqq \Ls_s(P_0,P_1,\Gs(P_0,P_1))\\
    \text{s.t. } \quad  &\Gs(P_0, P_1)(\vx,0) = \vx.
\end{split}
\end{equation}


This is the central optimization problem this work seeks to solve. Similar to~\cite{meta_OT}, we call it the \textit{amortized} objective since it minimizes the MFG cost over a distribution of configurations. Compared to alternatives, \eqref{MFG_oper_obj} can be optimized without knowledge of the true MFG solutions and allows direct parametrization of operator $\Gs$ to sidestep scalability and accuracy bottlenecks in hypernetworks.

In the absence of interaction cost, i.e., $\Is \equiv 0$, it is known that the MFG \eqref{var_MFG_traj} is equivalent to a related OT problem \cite{MFG_NF}. Thus, the optimal agent trajectories are straight and can be rewritten in terms of the Monge map $T^*(\vx)$, i.e.,  $F^*(\vx,t) = (1-t)\vx + t T^*(\vx)$. In this case, we instead study the time-independent solution operator $\Gs(P_0,P_1) = T^*$, simplifying the operator learning problem~\eqref{MFG_oper_obj} to:

\begin{equation} \label{OT_oper_obj}
\begin{split}
    \min_{\Gs} \quad \E_{(P_0, P_1) \sim \mu} \lambda_L \int_{\R^d}  \|\Gs(P_0, P_1)(\vx) - \vx\|_2^2 &p_0(\vx)\der\vx + \lambda_{\mathcal{M}} \mathcal{M}(\Gs(P_0, P_1)_*P_0)
\end{split}
\end{equation}

Notably, the Monge map representation automatically satisfies $\Gs(P_0, P_1)(\vx,0) = \vx$, so the optimization problem is unconstrained.

\subsection{Input Representation}



In general, $P_0, P_1$ reside in an infinite dimensional probability measure space $\mathcal{P}(\mathbb{R}^d)$. As such, we have to use their finite dimensional representations as inputs to a parametrized model. In this work, we take advantage of $P_0, P_1$ being probability measures to represent them with their iid \textit{samples}. Indeed, assuming access to agent samples is arguably more natural than agent densities in realistic settings. For example, in drone motion planning~\cite{Tong_APACNet}, it is more straightforward to obtain individual drone positions rather than the concentration of drones at different locations.

\begin{figure}[h]
\centering
\begin{minipage}{0.25\linewidth}
\centering
\includegraphics[width=1\linewidth]{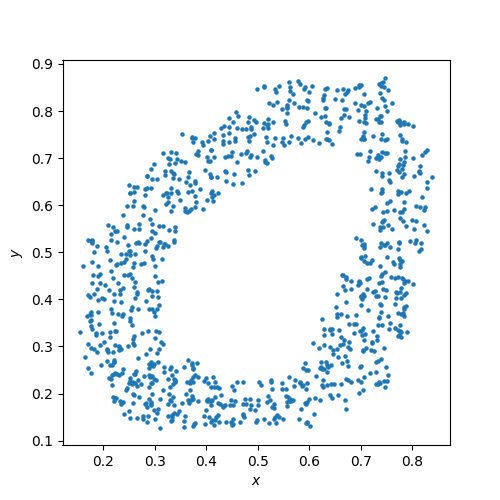}
\end{minipage}\hfill
\begin{minipage}{0.25\linewidth}
\centering
\includegraphics[width=1\linewidth]{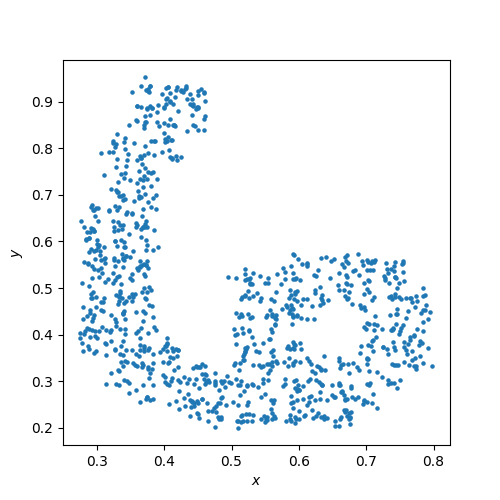}
\end{minipage}\hfill
\begin{minipage}{0.25\linewidth}
\centering
\includegraphics[width=1\linewidth]{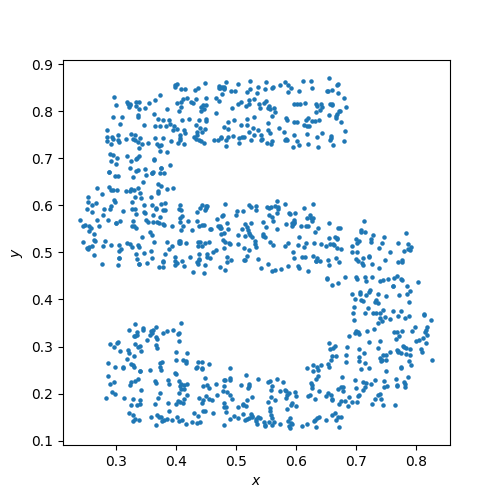}
\end{minipage}\hfill
\begin{minipage}{0.25\linewidth}
\centering
\includegraphics[width=1\linewidth]{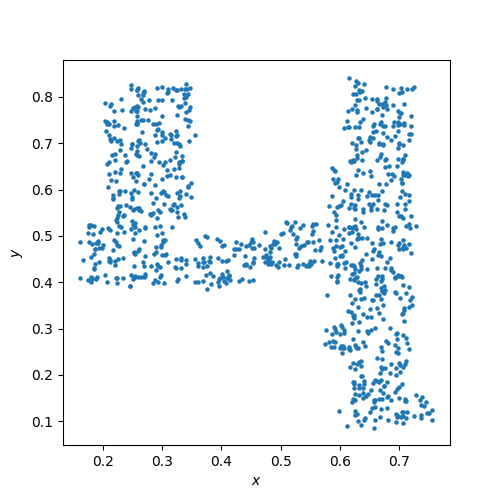}
\end{minipage}\hfill
\vskip -10pt
\caption{MNIST digits 0,6,5,4 represented by 1053 samples from their pixel value densities.}
\label{fig:MNIST_digits}
\end{figure}

Concretely, we assume access to $\{\vx_0^i\}_{i=1}^n \iid P_0, \{\vx_1^i\}_{i=1}^n \iid P_1$. Denote stacked samples to be $[X_0]_{i,:} = \vx_0^i, [X_1]_{i,:} = \vx_1^i$. One may think of $X_0, X_1$ as two point clouds that reflect the general shape of $P_0, P_1$ - several examples are provided in Figure~\ref{fig:MNIST_digits} in which the distributions are MNIST digits. Our sampling-based representation is well-suited for high dimensional settings. Intuitively, $X_0, X_1$ are concentrated on the high density regions of $P_0, P_1$, which are precisely the main contributors for the cost terms. This is akin to using an adaptive grid that becomes more refined in areas with higher agent density. 

Furthermore, we make a few remarks on the terminal cost $\Ms$. Typically, $\Ms$ serves as an incentive for agents to stay close to the terminal distribution $P_1$. From this perspective, any $\Ms(P)$ is deemed admissible so long as it is non-negative and zero if and only if $P=P_1$. However, one must also consider the computational tractability of $\Ms$. In our setting, we assume only access to samples from $P_0, P_1$ and no information on their density functions. Therefore, a suitable $\Ms$ to use is the maximum mean discrepancy (MMD), which admits the following unbiased estimator~\cite{MMD_net}

\begin{equation}\label{MMD_estimator}
\begin{split}
    \mathrm{MMD}^2(P, Q) = \frac{1}{N(N-1)} \sum_{n\ne n'} k(\vx^n, \vx^{n'}) - \frac{2}{MN} \sum_{m=1}^M \sum_{n=1}^N k(\vx^n, \vy^m) + \frac{1}{M(M-1)} \sum_{m\ne m'} k(\vy^m, \vy^{m'})
\end{split}
\end{equation}
where $\{\vx^n\}_{n=1}^N \iid P, \{\vy^m\}_{m=1}^M \iid Q$, and $k(\vx,\vy)$ is a symmetric kernel, i.e., $k(\vx,\vy) = k(\vy,\vx), \forall \vx,\vy\in \R^d$, and $\sum_{i,j=1}^d \alpha_i \alpha_j k(\vx_i,\vx_j) \geq 0, \forall (\alpha_1, ..., \alpha_d)\in \R^d, \vx_i\in \R^d$. By the Moore-Aronszajn theorem \cite{moore_aronszajn}, every kernel defines a reproducing kernel Hilbert space (RKHS), and if its mean embedding is injective, the associated MMD defines a metric on the space of probability measures, making it a suitable candidate for $\Ms$~\cite{characteristic_kernel}. Such kernels are called \textit{characteristic}, and common examples include the Gaussian kernel: $k(\vx,\vy) = \exp{\frac{-\|\vx-\vy\|^2_2}{2\sigma^2}}$ and the Laplacian kernel: $k(\vx,\vy) = \exp{-\lambda\|\vx-\vy\|_1}$, where $\sigma, \lambda \in \R$ are fixed bandwidth hyperparameters. 

There are two main advantages for using the MMD as the terminal cost. First, its estimator can be computed in closed form without any architectural constraints. In contrast, estimating the KL divergence, another popular choice for $\Ms$, requires invertible parametrizations for applying the change of variable~\cite{NF_survey}. Second, ~\cite{MMD_kernel_test} has shown that the estimator~\eqref{MMD_estimator} converges to the true MMD in probability at $\mathcal{O}((M+N)^{-\frac{1}{2}})$, a rate independent of dimensionality. For a more in-depth exposition on MMD as an integral probability metric as well as its applications in statistics and machine learning, see \cite{MMD_kernel_test, MMD_survey}.

\subsection{Operator Parametrization}\label{sec:operator_parametrization}

In this section, we examine appropriate parametrizations for solving the operator learning problem. With finite-dimensionalized inputs $(P_0, P_1) \to (X_0, X_1)$, the proposed model has the general form $G_\theta(X_0, X_1)(\vx,t)$, where $\theta$ denotes trainable weights. Our architectural design is motivated by two guiding principles: permutation and sampling invariance. 

First, since $X_0, X_1$ are stacked iid samples from $P_0, P_1$, their row orderings are arbitrary. Thus, outputs of the solution operator must be (row) \textit{permutation invariant}, and we ought to inject this prior knowledge into our parametrization so that $G_\theta$ exhibits this property for all choices of $\theta$.
Formally, let's make the sample size $n$ explicit in our notation of stacked samples $X^n_0, X^n_1 \in \R^{n\times d}$, we define permutation invariance as follows:

\begin{defn}[Permutation Invariance]
    Let $X^n_0, X^n_1\in \R^{n\times d}$. The parametrization $G_\theta(X^n_0, X^n_1)(\vx,t)$ is permutation invariant if $G_\theta(\tilde{X^n_0}, \tilde{X^n_1})(\vx,t) = G_\theta(X^n_0, X^n_1)(\vx,t)$ for any $\tilde{X^n_i} = \sigma(X^n_i)$, i=0,1, where $\sigma$ is a permutation on the rows.
\end{defn}

Second, any sensible parametrization for an operator should observe a few basic properties. For example, the model should process different finite-dimensionalizations of the same input consistently. Specifically, we ask that the model can accept finite representations of any resolution as input and, as the resolution increases, converges to a continuous operator. Additionally, the model's output is a continuous mapping, so we should be able to query the output mapping on any point on its domain. We summarize these properties under the umbrella term \textit{sampling invariance} defined as follows:

\begin{defn}[Sampling invariance]\label{defn:sampling_inv}
Consider a MFG with initial and terminal distributions $P_0, P_1$. Let $X^n_0, X^n_1\in \R^{n\times d}$ be row-stacked iid samples of $P_0,P_1$. The parametrization $G_\theta(X^n_0, X^n_1)(\vx, t)$ is sampling invariant if, for a fixed $\theta$, 
    \begin{itemize}
        \item $G_\theta(X^n_0, X^n_1)(\vx, t)$ is defined $\forall n\in \mathbb{N}, (\vx, t) \in \R^d \times [0,T]$.
        \item $\lim_{n\to\infty} G_\theta(X^n_0, X^n_1)(\vx, t) = \Gs(P_0,P_1)(\vx,t)$ for some operator $\Gs$.
    \end{itemize}
\end{defn}

From here on, we remove the superscript $n$ on $X^n$ for conciseness. A simple architecture that satisfies both sampling and permutation invariance is the point-wise multi-layer perceptron (MLP)~\cite{PINN, PointNet}. Concretely, let $f_\phi: \R^d \to \R^d$ be a MLP:
\begin{align*}
    f_\phi(\vx) = W_k \sigma_{k-1}(\ldots \sigma_2(W_2\sigma_1(W_1 \vx + b_1) + b_2) \ldots) + b_k,
\end{align*}
where $\phi = (W_1, ..., W_k, b_1, ..., b_k)$ are trainable parameters, and $\sigma_i$ are activation functions. A point-wise MLP $F_\phi:\R^{n\times d} \to \R^{n\times d} $ is obtained by applying $f_\phi$ independently to each row of the input $X\in \R^{n\times d}$:
\begin{align*}
    F_\phi(X)_{i,:} = (f_\phi(X^\top_{i,:}))^\top, \forall i=1,...,n,
\end{align*}
where $X_{i,:}$ denotes the $i$-th row of $X$.  Despite their popularity, we cannot  rely solely on point-wise MLPs because $\Gs^*(P_0, P_1)$ is necessarily non-local. Indeed, the optimal trajectory for an agent must be obtained in relation to the behavior of other agents since MFG solutions are strategic equilibria. Therefore, our architecture must effectively propagate local information to all samples in $X_0, X_1$. To achieve this, we experimented with two popular architectures, spatial convolutions and attention-based blocks \cite{PointNet++, Attention}. Empirically, attention provides superior performance and is thus our model of choice. Formally, a multi-headed attention layer is defined as $\mathrm{MHA}: \R^{n\times d} \to \R^{n\times d}$, where
\begin{align*}
    \text{MHA}(Q, K, V) = \text{Concat}(\text{head}_1, \ldots, \text{head}_h)W^O, \\
    \text{head}_i = \text{Attention}(QW_i^Q, KW_i^K, VW_i^V),\\
    \text{Attention}(Q, K, V) = \text{softmax}\left(\frac{QK^T}{\sqrt{d_k}}\right)V
\end{align*}
Here, $ W_i^Q, W_i^K, W_i^V, W^O$ are trainable matrices and $d_k$ is the number of columns in $K$~\cite{Attention}. Finally, we define a multi-headed attention block $\mathrm{MHT}: \R^{n\times d} \to \R^{n\times d}$ as the composition of a multi-headed attention layer and a point-wise MLP with layer normalization~\cite{layernorm} after both modules.

For ease of presentation, we first give our parametrization for the dynamic OT problem~\eqref{OT_oper_obj}.  In this case, the solution operator $\Gs^*(P_0, P_1)(\vx)$ does not have time in its input. To evaluate $G_\theta(X_0,X_1)(\vx)$, we first concatenate $\vx^\top $ as an extra row of $X_0$, then featurize $\hat{X_0} = \begin{pmatrix}\vx^\top  \\X_0 \\ \end{pmatrix}$ and $X_1$ with separate point-wise MLPs. The featurized points are then concatenated row-wise before entering repeated multi-headed self-attention blocks. Lastly, we take the first row of the output matrix (as it corresponds to $\vx$) to obtain the final output. A schematic for the proposed parametrization is provided in Figure~\ref{fig:architecture}. We note that both MLP and multi-headed self-attention blocks operate on input points individually. Consequently, the network structure is clearly defined for inputs with varying sample points, namely,  matrices with different numbers of rows. We provide details of our network structures in the appendix. 
\begin{figure}[h]
\centering
\begin{minipage}{1\linewidth}
\centering
\includegraphics[width=.9\linewidth]{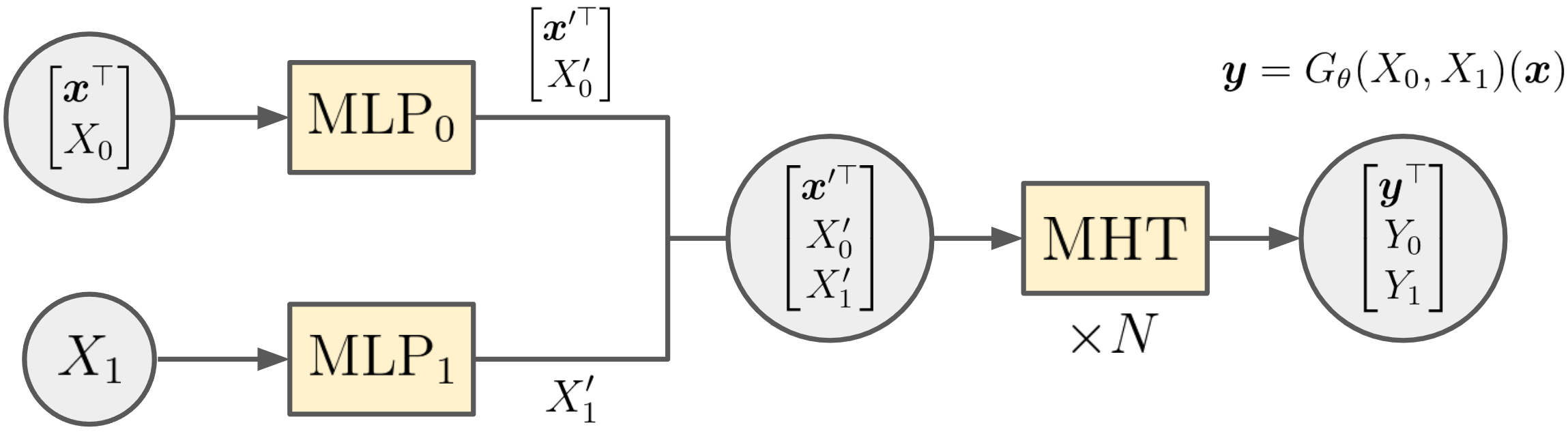}
\end{minipage}\hfill
\caption{Illustration of the proposed parametrization $G_\theta(X_0, X_1)(\vx)$ for MFG solution operators. MLP: Point-wise multi-layer perception. MHT: Multi-headed attention block.}
\label{fig:architecture}
\end{figure}

Note that $\vx$ and $X_0$ are featurized with the same point-wise MLP. This is justified assuming $\vx\sim P_0$, which is reasonable since MFG solution mappings only concern transportation on samples from the initial measure $P_0$. In practice, we generate a set of samples $X_0$ to serve as both the finite-dimensionalized $P_0$ and points on which the training objective is computed. 

For MFG with interaction terms, the operator has a dynamical aspect and takes time $t$ as an additional input. Hence, we make two adaptations to obtain the appropriate model $G_\theta(X_0,X_1)(\vx,t)$. First, we augment the architecture in Figure~\ref{fig:architecture} by concatenating $t$ to $\vx$ and each row of $X_0, X_1$. Second, we need to ensure that the initial value constraint $\Gs_\theta(P_0,P_1)(\vx,0) = \vx$ in~\eqref{MFG_oper_obj} holds. To this end, we put 
\begin{align}
    G_\theta(X_0,X_1)(\vx,t) = \hat{G}_\theta(X_0,X_1)(\vx,t) - \hat{G}_\theta(X_0,X_1)(\vx,0) + \vx
\end{align}
where $\hat{G}_\theta(X_0,X_1)(\vx,t)$ is the parametrization with time augmentation. It is evident that $G_\theta(X_0,X_1)(\vx,0) = \vx$.

\section{Theoretical Results}\label{sect:theo_results}

We supply a several formal statements to substantiate various aspects of our framework. All proofs are provided in Section~\ref{sec:proofs}. 

First, we show that our model is  permutation and sampling invariant.
\begin{thm}\label{thm:sampling_inv}
Let $X^n_0, X^n_1 \in \R^{n \times d}$ with rows sampled iid from $P_0, P_1$, respectively. The proposed parametrization $G_\theta (X^n_0, X^n_1)(\vx,t)$ is permutation and sampling invariant.
\end{thm}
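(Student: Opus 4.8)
The plan is to treat the two invariance properties separately, handling permutation invariance by a direct structural induction over the layers and sampling invariance by a law-of-large-numbers argument applied to the attention mechanism. For permutation invariance, I would first observe that the point-wise MLP blocks act on each row independently, so conjugating the input by a row permutation $\sigma$ simply conjugates the output by the same $\sigma$. Next, I would check that a multi-headed self-attention block is \emph{equivariant} under a simultaneous row permutation of $Q,K,V$: since $\mathrm{softmax}(QK^\top/\sqrt{d_k})V$ has its $(i,j)$ entry depending on rows $i$ and $j$ only, permuting rows of the inputs permutes rows of the output the same way, and the subsequent point-wise MLP with layer normalization preserves this. Composing these, $\hat G_\theta$ is row-equivariant in its stacked-sample arguments. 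The key point is then that in the architecture of Figure~\ref{fig:architecture}, the query point $\vx$ is placed in a \emph{distinguished} first row, and only that first row of the final matrix is read off; a permutation of the remaining rows of $X_0$ (and independently of $X_1$) leaves the first output row unchanged because equivariance means the first row's value is a symmetric function of the other rows. The time-augmented version and the correction term $G_\theta = \hat G_\theta(\cdot,t) - \hat G_\theta(\cdot,0) + \vx$ inherit the property trivially. This part is routine bookkeeping.

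For sampling invariance, the first bullet (well-definedness for every $n$ and every $(\vx,t)$) is immediate from the observation already made in the text: every module operates row-wise, so the network is defined on matrices with any number of rows, and $\vx,t$ enter as an appended row, so any $(\vx,t)\in\R^d\times[0,T]$ is admissible. The substantive claim is the second bullet: that $G_\theta(X_0^n,X_1^n)(\vx,t)$ converges as $n\to\infty$ to some $\Gs(P_0,P_1)(\vx,t)$ depending on $X_0^n,X_1^n$ only through $P_0,P_1$. Here the idea is that each self-attention layer computes, for the distinguished row, a quantity of the form
\begin{equation*}
\sum_{j} \frac{\exp\!\big(\langle q, k_j\rangle/\sqrt{d_k}\big)}{\sum_{j'} \exp\!\big(\langle q, k_{j'}\rangle/\sqrt{d_k}\big)}\, v_j,
\end{equation*}
where the $(k_j,v_j)$ are features of the iid sample points (after a fixed, deterministic, continuous transformation of each point). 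Normalizing numerator and denominator by $n$, both become empirical averages of bounded continuous functions of iid draws, so by the strong law of large numbers they converge almost surely to the corresponding expectations under $P_0$ (resp.\ $P_1$, for the rows coming from $X_1$), i.e.\ to integrals against $P_0,P_1$. Thus a single attention-plus-MLP block applied to the distinguished row converges a.s.\ to a deterministic functional of $(P_0,P_1,\vx,t)$; defining $\Gs(P_0,P_1)(\vx,t)$ as the limit obtained by propagating these continuous-functional limits through the finitely many blocks, the continuous mapping theorem and the boundedness/continuity of each layer's operations give $G_\theta(X_0^n,X_1^n)(\vx,t)\to \Gs(P_0,P_1)(\vx,t)$.

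The main obstacle is making the inductive convergence argument for \emph{deep} stacks of attention blocks rigorous: after the first block, the feature of the distinguished row is the (random) empirical average above, while the features of the other rows are themselves empirical objects coupled to the same sample, so in the second block the ``keys'' and ``values'' are no longer iid functions of a single point but depend on the whole point cloud through the softmax normalization. I would resolve this by a careful induction in which the invariant maintained at each layer is: the distinguished row's feature converges a.s.\ to a deterministic function of $(P_0,P_1,\vx,t)$, and the \emph{joint} collection of other rows' features is, up to a uniformly small error, an iid sample from a (limit) distribution that is a deterministic push-forward depending only on $(P_0,P_1)$. Propagating this requires a uniform law of large numbers (Glivenko--Cantelli / bounded-Lipschitz control) rather than a pointwise one, using that all network maps are Lipschitz on the relevant bounded sets; one must also assume, or arrange via the architecture, that the sample points lie in a fixed compact set (or that $P_0,P_1$ have enough moments) so the exponential weights are bounded and the averages actually concentrate. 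A cleaner alternative, which I would adopt if the fully general statement proves unwieldy, is to prove the limit for a single self-attention block and then \emph{define} the network to use normalized (mean-pooled) attention so that each layer is by construction an empirical average converging to an integral against a measure that is itself a continuous image of $(P_0,P_1)$; the composition of such layers is then handled by a straightforward induction with the continuous mapping theorem.
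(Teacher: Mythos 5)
Your proposal follows essentially the same route as the paper's proof: permutation invariance via row-equivariance of point-wise MLPs and attention composed with a final row selection, well-definedness from the row-wise structure, and sampling invariance by normalizing the softmax numerator and denominator by $n$ and invoking the law of large numbers to obtain the limiting operator $\mathcal{H}(P)(\vx) = \E_{\vy\sim P}\,\frac{q_m(\vx^\top A,\vy^\top B)}{\E_{\vz\sim P}\, q_m(\vx^\top A,\vz^\top B)}\,\vy^\top C$. Where you genuinely go beyond the paper is in the last part of your write-up: you correctly observe that after the first attention block the features of the non-distinguished rows are no longer iid functions of single sample points (each depends on the whole point cloud through the softmax normalization), so the plain LLN does not directly apply to the second and subsequent blocks. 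The paper's proof treats only a single self-attention block rigorously and then asserts that the result extends to the full network because multi-headed attention is a composition of such blocks with point-wise MLPs; it does not address the dependence issue you raise. Your proposed remedy --- an induction maintaining that the distinguished row converges a.s.\ and that the remaining rows are, up to a uniformly small error, an iid sample from a deterministic push-forward of $(P_0,P_1)$, controlled via a uniform law of large numbers and Lipschitz bounds on compact sets --- is the right shape of argument to close this gap, though it does require the additional compactness or moment assumptions you flag, which the theorem statement as given does not impose. In short: same approach, but your version identifies (and sketches a fix for) a step the paper leaves implicit.
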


\begin{remark}
There are many reasons to prefer a sampling invariant model. For instance, such models can be trained on inputs of one sample size and used for inference on inputs of any sample size. It is also possible to train sampling invariant models with data of varying sizes, often through efficient implementation with suitable masks.
\end{remark}

Our model's sampling invariance is later verified by studying its empirical behavior on samples of increasing sizes. Next, we formally show the minimizers of our training objective~\eqref{MFG_oper_obj} are MFG solution operators.

\begin{thm}\label{thm:opt_obj}
Denote $\Ps(\Omega)$ to be the space of probability measures over the set $\Omega$. Let $\mu \in \Ps(\Ps(\R^d) \times \Ps(\R^d))$, the optimizer $\Gs^*$ of~\eqref{MFG_oper_obj}, i.e., 
\begin{align*}
    \Gs^* \in \argmin_{\Gs} \E_{(P_0, P_1) \sim \mu} \Ls_s (P_0, P_1, \Gs(P_0,P_1))
\end{align*}
is the MFG solution operator up to measure zero sets of $\mu$, i.e.,
\begin{align*}
    \Gs^*(P_0,P_1) = F^* \in \argmin_{F} \Ls (P_0, P_1, F), \quad \mu\text{-a.e.},
\end{align*}
where $\mu$-a.e. means almost everywhere in the sense of $\mu$.
\end{thm}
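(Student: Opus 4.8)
The plan is to show that the amortized objective decouples across instances, so that minimizing the expectation over $\mu$ is equivalent to minimizing $\mathcal{L}_s(P_0, P_1, \mathcal{G}(P_0, P_1))$ pointwise for $\mu$-almost every $(P_0, P_1)$. The crucial observation is that the functional $\mathcal{G} \mapsto \mathbb{E}_{(P_0,P_1)\sim\mu} \mathcal{L}_s(P_0,P_1,\mathcal{G}(P_0,P_1))$ is separable: the value of $\mathcal{G}$ at one pair $(P_0,P_1)$ contributes to the objective only through the integrand evaluated at that pair, with no coupling between distinct pairs. Hence the infimum over operators $\mathcal{G}$ can be pushed inside the expectation.

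First I would make the (implicit) regularity assumptions explicit: namely that for each $(P_0, P_1)$ in the support of $\mu$, the single-instance MFG $\min_F \mathcal{L}(P_0, P_1, F)$ admits a minimizer $F^*_{P_0,P_1}$, and that the selection $(P_0,P_1) \mapsto F^*_{P_0,P_1}$ is $\mu$-measurable (this is a measurable-selection hypothesis, which can be justified via a Kuratowski–Ryll-Nardzewski–type argument given appropriate continuity/compactness of the cost, or simply assumed). Recall also that $\mathcal{L}_s(P_0, P_1, \mathcal{G}(P_0,P_1))$ is literally $\mathcal{L}(P_0, P_1, F)$ with $F = \mathcal{G}(P_0,P_1)(\cdot,\cdot)$ — the two functionals coincide once the initial-value constraint $\mathcal{G}(P_0,P_1)(\vx,0)=\vx$ is imposed, which is built into the constrained optimization~\eqref{MFG_oper_obj}. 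So $\mathcal{L}_s(P_0,P_1,\mathcal{G}(P_0,P_1)) \geq \mathcal{L}(P_0,P_1,F^*_{P_0,P_1})$ for every admissible $\mathcal{G}$ and every $(P_0,P_1)$.

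Next, integrating this pointwise inequality against $\mu$ gives a lower bound on the objective that is attained by the operator $\mathcal{G}^\sharp(P_0,P_1) := F^*_{P_0,P_1}$, which is admissible (it satisfies the initial-value constraint since each $F^*_{P_0,P_1}$ does) and measurable by the selection hypothesis. Therefore any minimizer $\mathcal{G}^*$ must achieve this same lower bound, i.e.
\begin{align*}
\E_{(P_0,P_1)\sim\mu}\left[\mathcal{L}_s(P_0,P_1,\mathcal{G}^*(P_0,P_1)) - \mathcal{L}(P_0,P_1,F^*_{P_0,P_1})\right] = 0.
\end{align*}
Since the integrand is non-negative, it must vanish $\mu$-almost everywhere, which says exactly that $\mathcal{G}^*(P_0,P_1) \in \argmin_F \mathcal{L}(P_0,P_1,F)$ for $\mu$-a.e. $(P_0,P_1)$, the claimed conclusion.

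The main obstacle is not the decoupling argument itself, which is essentially the "integral of a non-negative function is zero iff the function is zero a.e." principle, but rather the measurable-selection issue: one must ensure that a measurable minimizing operator $\mathcal{G}^\sharp$ exists at all, otherwise the infimum over admissible $\mathcal{G}$ might strictly exceed $\int \min_F \mathcal{L}\, d\mu$. I would handle this either by adding it as an explicit hypothesis (existence and measurability of the single-instance solution map), or by invoking a measurable-selection theorem under mild assumptions — lower semicontinuity of $F \mapsto \mathcal{L}(P_0,P_1,F)$ in a suitable topology, tightness/compactness of sublevel sets, and joint measurability of $(P_0,P_1,F)\mapsto\mathcal{L}(P_0,P_1,F)$. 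A secondary subtlety worth a remark is non-uniqueness: if the single-instance MFG has multiple minimizers, the statement should be read as "$\mathcal{G}^*(P_0,P_1)$ lies in the solution set," and the measurable selection picks one representative; I would phrase the conclusion accordingly, consistent with the $\argmin$ notation already used in the theorem statement.
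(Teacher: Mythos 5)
Your proposal is correct and is essentially the paper's own argument: both rest on the observation that the amortized objective decouples pointwise over $(P_0,P_1)$, so the set where $\Gs^*$ fails to be instancewise optimal must be $\mu$-null --- the paper phrases this as a contradiction (if that set had positive measure, the operator $\hat{\Gs}$ that is modified to be optimal there would be strictly better, via the ``nonnegative integrand with zero integral vanishes a.e.'' lemma), while you phrase it directly as a lower bound $\int \min_F \Ls \, d\mu$ attained by your $\mathcal{G}^\sharp$. Your explicit identification of the measurable-selection issue is the one substantive point of divergence and is a genuine improvement in rigor: the paper's construction of $\hat{\Gs}$ silently assumes both that each single-instance $\argmin$ is nonempty and that a $\mu$-measurable selection of minimizers exists, without which the integral defining $\hat{\Gs}$'s objective is not even well-posed.
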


\begin{remark}
By the manifold hypothesis~\cite{manifold_hypo, CAE}, realistic data typically lie on or near a low dimensional manifold embedded in a high dimensional ambient space. For example, $(P_0,P_1)$ may belong to certain classes of parametrized distributions. In this case, there exists local diffeomorphism $\phi: U\subseteq \R^{m_0} \times \R^{m_1} \to  W\subseteq \Ps(\R^d) \times \Ps(\R^d)$ that induces $\mu$ by $\mu \coloneqq \phi_*\hat{\mu}$, where $\hat{\mu} \in \Ps(\R^{m_0} \times \R^{m_1})$. Hence, if $\hat{\mu}$ is equivalent to the Lebesgue measure on $\R^{m_0} \times \R^{m_1}$, the optimizer $\Gs^*$ is the MFG solution map a.e. on the data manifold. 
\end{remark}

\begin{remark}
The proposed objective can be extended to enable solution operator learning of certain classes of PDEs without reliance on training demonstrations. The Poisson equation, for example, can be cast as a variational problem minimizing the Dirichlet energy~\cite{deep_ritz}, and solving its amortized counterpart yields the Poisson solution operator.
\end{remark}

Lastly, we study a special MFG to demonstrate that Monte Carlo estimations of certain optimal MFG solutions converges at a rate that only depends on the sample size. As such, our sampling based approach is scalable since we can use the same number of samples to represent $P_0, P_1$ regardless of their dimensionality.

\begin{prop}\label{prop:gaussian_opt_soln}
    Let $P_0 = \mathrm{N}(0,\sigma^2I), P_1 = \mathrm{N}(\vm, \sigma^2I)$, and $\Ms(P) = \mathrm{MMD}(P,P_1)$ with the linear kernel $k(x,y)=x\cdot y$. The interaction-free MFG
\begin{equation}\label{Gaussian_gaussian_MFG}
\begin{aligned}
    \inf_{T} \quad \int_{\R^d}  \| T(\vx) - \vx\|_2^2 &p_0(\vx)\der\vx + \lambda \mathcal{M}(T_*P_0) \\
\end{aligned}
\end{equation}
has the optimal solution $T^*(\vx) = \vx + \frac{\lambda}{1+\lambda} \vm $ and the optimal value $\frac{\lambda}{1+\lambda}\|\vm\|^2_2$.
    
\end{prop}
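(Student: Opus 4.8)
The plan is to exploit the linear kernel to collapse the terminal cost to a function of a single vector, and then decouple the problem via a bias--variance (Jensen) inequality into a scalar strictly convex minimization. Interpreting $\Ms(P)$ as $\mathrm{MMD}^2(P,P_1)$ (consistent with the estimator \eqref{MMD_estimator}), I first note that for the linear kernel $k(\vx,\vy)=\vx\cdot\vy$ the kernel mean embedding of a measure $P$ of finite first moment is its mean $\vmu_P=\E_{X\sim P}[X]$, so $\mathrm{MMD}^2(P,P_1)=\|\vmu_P-\vmu_{P_1}\|_2^2=\|\vmu_P-\vm\|_2^2$. Any map $T$ of infinite transport cost has objective value $+\infty$, so I may restrict to $T$ with $T-\mathrm{id}\in L^2(P_0;\R^d)$; since $\mathrm{id}\in L^2(P_0;\R^d)$ as $P_0$ is Gaussian, this forces $T\in L^2(P_0;\R^d)$, hence $\vmu_{T_*P_0}=\E_{X\sim P_0}[T(X)]$ is well defined. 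Writing $\vg(\vx):=T(\vx)-\vx$ and $\bar{\vg}:=\E_{P_0}[\vg]$, and using $\vmu_{P_0}=\vzero$ so that $\vmu_{T_*P_0}=\bar{\vg}$, the objective \eqref{Gaussian_gaussian_MFG} is exactly
\[
J(T)=\E_{P_0}\big[\|\vg\|_2^2\big]+\lambda\,\|\bar{\vg}-\vm\|_2^2 .
\]

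Next I would apply the bias--variance identity $\E_{P_0}[\|\vg\|_2^2]=\|\bar{\vg}\|_2^2+\E_{P_0}[\|\vg-\bar{\vg}\|_2^2]\ge\|\bar{\vg}\|_2^2$, with equality iff $\vg$ is $P_0$-a.s.\ constant, to get $J(T)\ge h(\bar{\vg})$ where $h(\vb):=\|\vb\|_2^2+\lambda\|\vb-\vm\|_2^2$. The function $h:\R^d\to\R$ is strictly convex, and solving $\nabla h(\vb)=2\vb+2\lambda(\vb-\vm)=\vzero$ gives the unique minimizer $\vb^\star=\tfrac{\lambda}{1+\lambda}\vm$ with $h(\vb^\star)=\tfrac{\lambda}{1+\lambda}\|\vm\|_2^2$. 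Hence $J(T)\ge\tfrac{\lambda}{1+\lambda}\|\vm\|_2^2$ for every admissible $T$. Taking $T^\star(\vx)=\vx+\tfrac{\lambda}{1+\lambda}\vm$ makes $\vg\equiv\vb^\star$ constant, so the Jensen step is tight and $\bar{\vg}=\vb^\star$ minimizes $h$; substituting confirms $J(T^\star)=\tfrac{\lambda}{1+\lambda}\|\vm\|_2^2$, matching the claim. Equality throughout the lower bound forces $\vg$ to be a.s.\ constant with that constant equal to $\vb^\star$, so $T^\star$ is the unique minimizer in $L^2(P_0;\R^d)$, and since $P_0$ has full support this determines $T^\star$ pointwise. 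As a consistency check, $\lambda\to\infty$ sends $T^\star\to\vx+\vm$ and the value to $\|\vm\|_2^2$, recovering the optimal transport map pushing $P_0$ to $P_1$.

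I do not anticipate a substantive obstacle. The only points needing care are delimiting the admissible class of maps so that all integrals converge (handled by the $L^2(P_0)$ reduction above) and stating the linear-kernel MMD identity cleanly. The conceptual crux is simply recognizing that the linear kernel makes the terminal cost depend on $T_*P_0$ only through its mean $\bar{\vg}$, after which the bias--variance inequality reduces the whole infinite-dimensional problem to a one-shot scalar convex minimization.
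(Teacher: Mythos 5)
Your proof is correct, and it takes a genuinely different and more elementary route than the paper's. The paper first establishes a lemma (via Brenier's theorem) identifying the transport term with $\mathbb{W}_2^2(P_0, T_*P_0)$, then recasts the problem as an optimization over target measures $P$ and invokes the decomposition of the Wasserstein-2 distance into a mean-shift term plus the distance between centered distributions, reducing to $\inf_{\vm_P}\|\vm_P\|_2^2+\lambda\|\vm_P-\vm\|_2^2$. You bypass all optimal-transport machinery: writing $\vg=T-\mathrm{id}$ and observing that the linear-kernel MMD depends on $T_*P_0$ only through $\bar{\vg}=\E_{P_0}[\vg]$, the bias--variance identity $\E\|\vg\|_2^2=\|\bar{\vg}\|_2^2+\E\|\vg-\bar{\vg}\|_2^2$ immediately yields the lower bound $h(\bar{\vg})=\|\bar{\vg}\|_2^2+\lambda\|\bar{\vg}-\vm\|_2^2$, attained exactly by the constant shift. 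Your route buys several things: it avoids the paper's somewhat delicate justifications (that arbitrary target measures are attainable as pushforwards, and that the Brenier constraint can be dropped without loss), it gives uniqueness of the minimizer in $L^2(P_0)$ for free via the equality case of the Jensen step, and it handles admissibility cleanly by restricting to finite-cost maps. What the paper's approach buys in exchange is a reusable structural fact (Lemma 6.2, the equivalence of the interaction-free MFG with a Wasserstein-regularized problem) that is stated for general terminal costs $\Ms$, not just the linear-kernel MMD, whereas your argument is tailored to the case where $\Ms$ depends only on the mean of the pushforward. Both arrive at the same answer; your note that the paper's $\Ms=\mathrm{MMD}$ must be read as $\mathrm{MMD}^2$ is consistent with what the paper itself does in its proof.
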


However, since $P_1$ is represented by its finite samples, we do not know $m$ directly and must estimate it. Thus, we expect a estimation error that diminishes as the number of available samples increase. Below, we provide a precise quantification of this error, characterizing the \textit{statistically} optimal MFG value for our setting. With samples ${\vx_i} \overset{\mathrm{iid}}{\sim} P_1$, a straightforward estimation is: $\Bar{T}(\vx) = \vx + \frac{\lambda}{1+\lambda}\Bar{\vx}$, where $\Bar{\vx} = \frac{1}{n}\sum_{i=1}^n \vx_i$ is the sample average. The following result characterizes the discrepancy between estimation $\Bar{T}$ and the optimal $T^*$.

\begin{prop}\label{prop:gaussian_Monge_est_err}
    Under the same assumption in Proposition \ref{prop:gaussian_opt_soln}, the relative $L_2$ error between the optimal Monge map $T^*(\vx) = \vx + \frac{\lambda}{1+\lambda} \vm $ and its finite sample estimation $\Bar{T}(\vx) = \vx + \frac{\lambda}{1+\lambda}\Bar{\vx}$, where $\Bar{\vx} = \frac{1}{n}\sum_{i=1}^n \vx_i, {\vx_i} \overset{\mathrm{iid}}{\sim} P_1, \forall i$ is
\begin{equation}\label{eqn:gaussian_Monge_est_err}
\begin{split}
    R_T &\coloneqq \frac{\E_{\vx \sim P_0} \|\Bar{T}(\vx) - T^*(\vx)\|^2_2}{\E_{\vx\sim P_0} \|T^*(\vx)\|^2_2 } = \frac{(\frac{\lambda}{1+\lambda})^2 d\sigma^2}{n[d\sigma^2 + (1+ \frac{\lambda}{1+\lambda})^2\|\vm\|^2_2]}
\end{split}
\end{equation}
    
\end{prop}


Assuming $\|\vm\|^2_2 = \mathcal{O}(d)$, for example, $\vm_i = \Os(1), \forall i$. The estimation errors vanishes linearly as $n \to \infty$ and has no dependence on $d$. Therefore, there is at least one estimate of the MFG solution that converges to the true solution at a rate independent of dimensionality.

In general, we cannot hope to analytically solve MFGs, so it is difficult to generalize the above analysis to a broader setting. However, we will show through numerical results that our approach's scalability with regard to $d$ may hold for a much larger class of problems.

\section{Numerical Experiments}
\label{sec:results}
We conduct comprehensive experiments on both synthetic and realistic datasets. Beginning with synthetic examples designed to be generalizable to different dimensions, we showcase the effectiveness of the proposed approach in solving MFG across varied complexities and dimensionalities. We then apply our method to solving MFG between MNIST digits to show its promise in realistic applications. Lastly, we compare with current single-instance neural solvers to show our method's comparable performance and much faster inference time. 

\subsection{Gaussian}\label{sect:gaussian}

\begin{figure}[h]
\centering

\begin{minipage}{1\linewidth}
\centering
\includegraphics[width=1\linewidth]{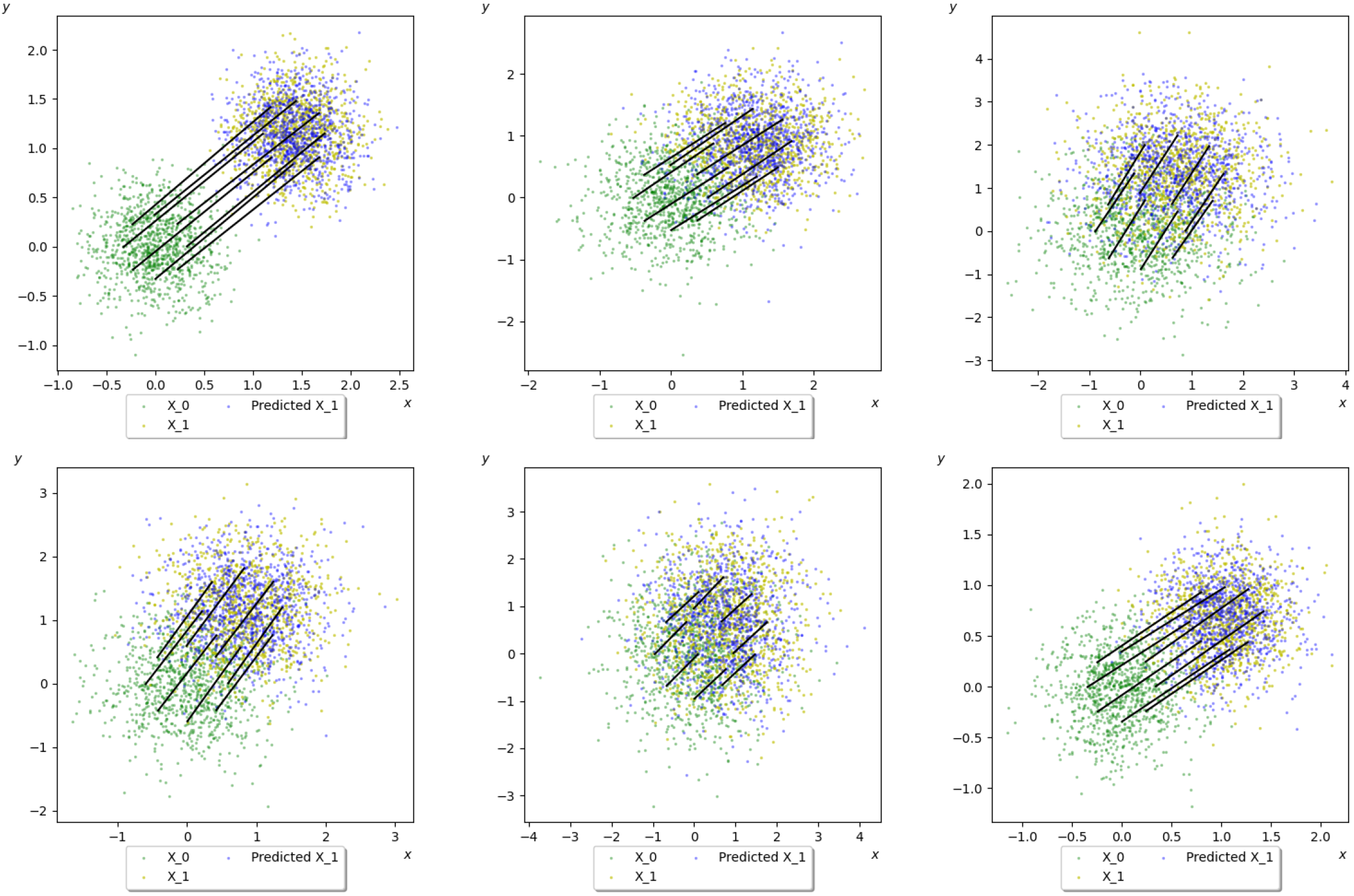}
\end{minipage}\hfill



\caption{Learned MFG trajectories for Gaussians in 10 (top) and 20 (bottom) dimensions projected to the first two components. Each row shows 3 MFG instances. The green, yellow, and blue dots represent samples from $P_0, P_1,$ and $G_\theta(P_0,P_1)_*P_0$, respectively. The black lines are learned trajectories for 8 selected landmarks.}
\label{fig:gaussian_gaussian}
\end{figure}

We start with an interaction-free MFG where the set of initial and terminal distributions are both Gaussian. Let the distribution of MFG configurations be the product measure $\mu = \mu_0 \otimes \mu_1$. Denote $U[a,b]^d$ as the uniform distribution on $[a,b]^d$. We obtain initial Gaussians $P_0 \sim \mu_0$ as $P_0 = \mathrm{N}(0, \sigma^2I)$ and terminal Gaussians $P_1 \sim \mu_1$ as $P_1 = \mathrm{N}(\vm, \sigma^2I)$, where $\vm\sim U[0.5,1.5]^d, \sigma^2 = 0.1 + 0.9a^2, a \sim U[0,1]$. 

Since each pair of initial and terminal distributions are Gaussians differing only in the mean, it suffices to use a linear kernel $k(x,y) = x\cdot y$ in the MMD, and the terminal cost is reduced to directly comparing the $l_2$ distance between means. The simplicity of this setup makes it possible to compare our numerical results to its closed form solution discussed in Section~\ref{sect:theo_results} to concretely demonstrate our method's effectiveness. In addition, we conduct a comparative study for various architectures that are also permutation and sampling invariant in Table~\ref{tab:gaussian_results}. It is evident that transformers outperform other baselines in all cases, so we use it as our backbone architecture for all experiments.

\begin{table}[h]
\begin{center}
\begin{small}
\begin{sc}
\begin{tabular}{lccccccccccc}
\toprule
Method & $d=2$ & $d=5$ & $d=10$ & $d=20$ & $d=100$ & Parameters\\
\midrule
Pointwise MLP & 0.3807 & 0.2058 & 0.1432 & 0.1001 & 0.0673 & 27195020 \\
Spatial Convolution~\cite{PointNet++} & 0.4867 & 0.3646 & 0.3387 & 0.3106 & 0.3151 & 47925268 \\
Folding Layer~\cite{foldingnet} & 0.0318 & 0.0184 & 0.0178 & 0.0316 & 0.0228 & 22714772 \\
Ours & \textbf{0.0309} & \textbf{0.0178} & \textbf{0.0136} & \textbf{0.0097} & \textbf{0.0056} & 29390850 \\
\midrule
Theoretical Optimum & 0.0309 & 0.0178 & 0.0136 & 0.0096 & 0.0055 & - \\
\bottomrule
\end{tabular}
\end{sc}
\end{small}
\end{center}
\caption{Relative $L_2$ errors for the Gaussian setting in different dimensions($d$) with $n=1024$. The theoretical optima are computed via averaging Equation~\ref{eqn:gaussian_Monge_est_err} over $\vm\sim U[0.5,1.5]^d, \sigma^2 = 0.1 + 0.9a^2, a \sim U[0,1]$. The last column shows the number of trainable parameters in a model.}
\label{tab:gaussian_results}
\vskip -10pt
\end{table}

In Figure~\ref{fig:gaussian_gaussian}, we provide qualitative examples for the learned OT trajectories in various dimensions. Consistent with our expectations, the learned trajectories are visually parallel, indicating the learned couplings are approximately translations on the entire density. Additionally, we plot the normalized relative mean absolute error (MAE) between the computed MFG cost and the true MFG cost, then compare this value to the best estimated MFG cost from finite samples. As shown in the left picture in Figure~\ref{fig:err_and_loss}, the computed MFG cost's relative error coincides with the statistically optimal relative error as training progresses, suggesting that our model has learned to extract all salient information from the samples.

\subsection{Gaussian Mixture}~\label{sect:gaussian_mix}

\begin{figure}[h]
\centering
\vskip -15pt


\begin{minipage}{1\linewidth}
\centering
\includegraphics[width=1\linewidth]{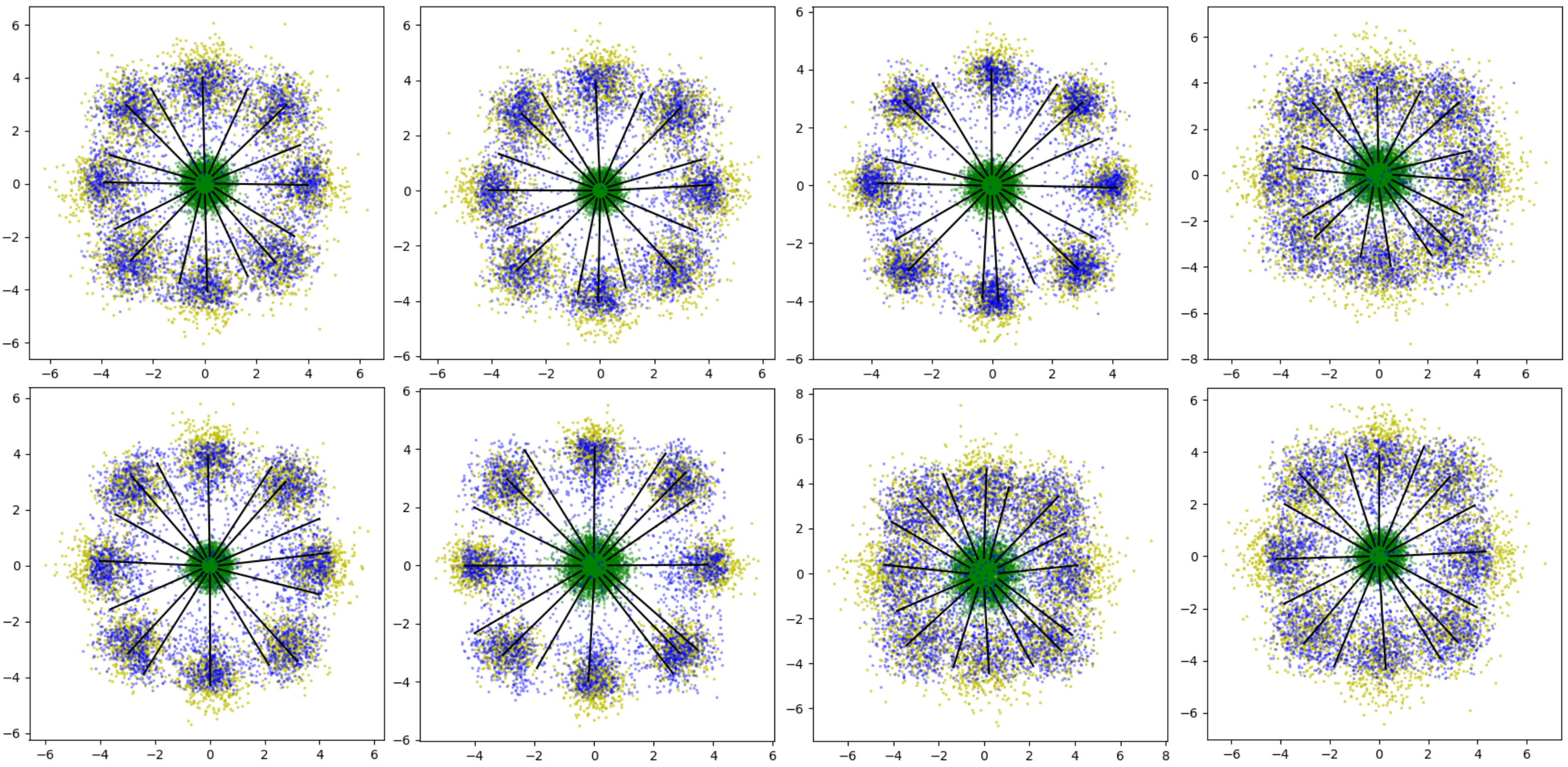}
\end{minipage}\hfill

\caption{Learned MFG trajectories for Gaussian mixture in 10(top) and 20(bottom) dimensions projected to the first two components. Each row shows 4 MFG instances. The green, yellow, and blue dots represent samples from $P_0, P_1,$ and $G_\theta(P_0,P_1)_*P_0$, respectively. The black lines represent learned trajectories for 16 selected landmarks.}
\label{fig:gaussian_mixture}
\end{figure}

Generalizing the setup in~\cite{NN_MFP}, we consider a MFG that seeks to transport a set of Gaussians to Gaussian mixtures with minimal movement and no interaction costs. Denote the distribution of configurations as $\mu = \mu_0 \otimes \mu_1$. A sample $P_0 \sim \mu_0$ is obtained by first sampling $\sigma^2 = 0.1 + 0.7a^2, a \sim U[0,1]$, then constructing $P_0 = \mathrm{N}(0, \sigma^2 I)$. For the terminal distribution, we sample $P_1 \sim \mu_1$ by first getting $s^2 \sim U[0.1,0.8]$, then constructing the mixture $P_1 = \frac{1}{8}\sum_{i=1}^8 \mathrm{N}(\mu_i, s^2 I)$, where $\mu_i = 4\cos(\frac{\pi}{4}i)e_1 + 4\sin(\frac{\pi}{4}i)e_2, \forall i=1, ..., 8$ are fixed, and $e_1, e_2$ are the first two standard basis vectors in $\R^d$. 

Visually, the MFG problem seeks to map between a single Gaussian at the origin and eight identical Gaussians equidistantly placed on the circle of radius 4. Therefore, the initial density needs to change its shape during the course of travel, forcing the learned operator to capture couplings beyond translations. In particular, we can no longer use the linear kernel as each pair of $P_0, P_1$ have identical means. Instead, we compute MMD with the Laplacian kernel, which can discriminate any two distributions~\cite{MMD_kernel_test}.

\begin{figure}[h]
\centering
\begin{minipage}{0.33\linewidth}
\centering
\includegraphics[width=1\linewidth]{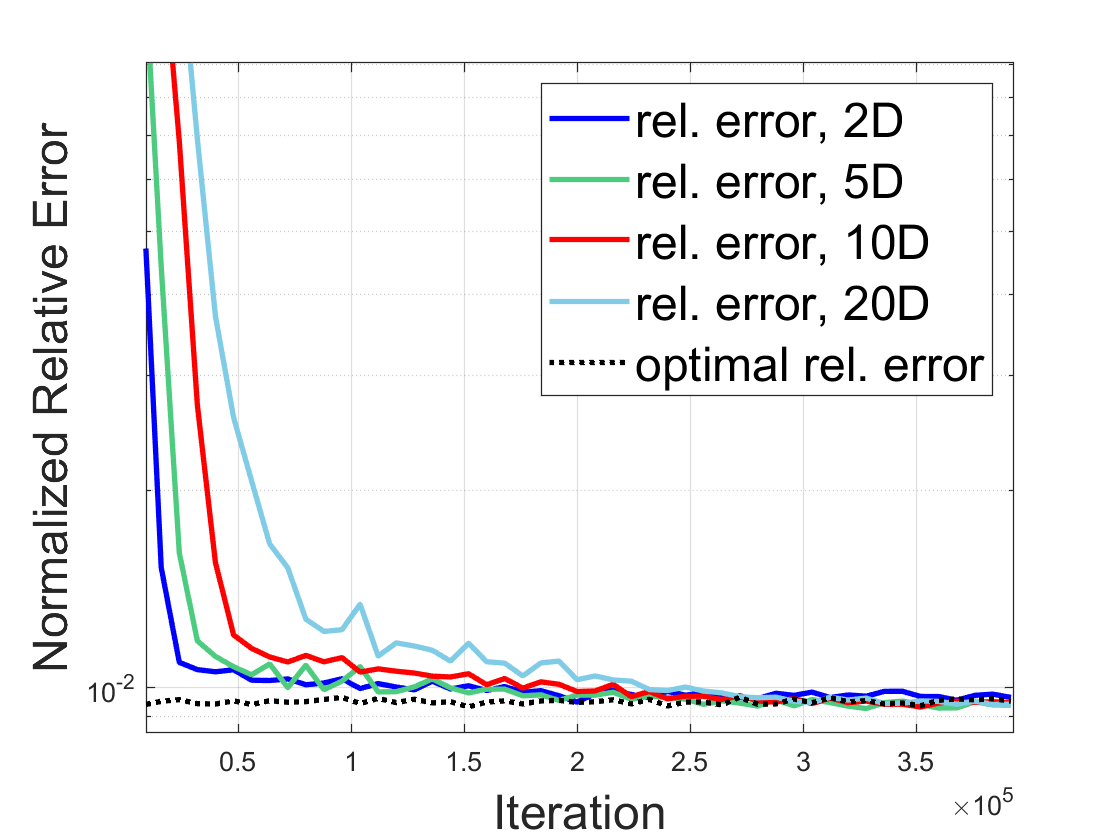}
\end{minipage}\hfill
\begin{minipage}{0.33\linewidth}
\centering
\includegraphics[width=1\linewidth]{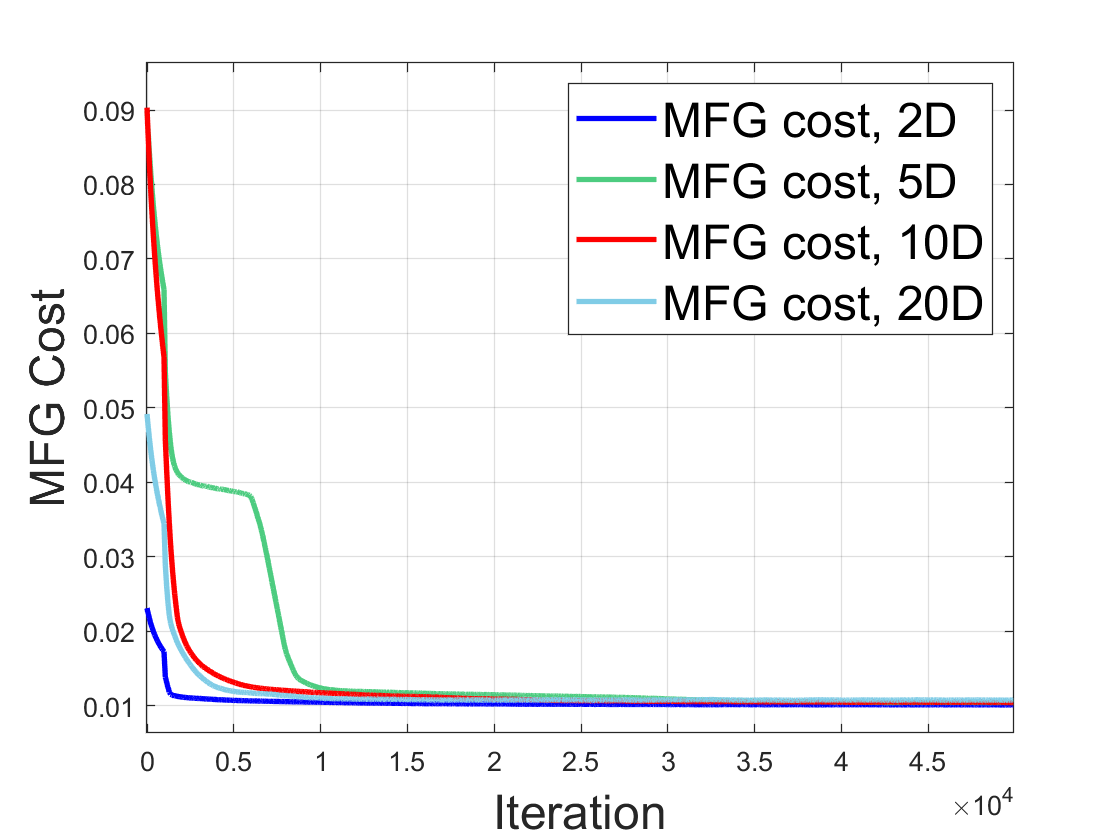}
\end{minipage}\hfill
\begin{minipage}{0.33\linewidth}
\centering
\includegraphics[width=1\linewidth]{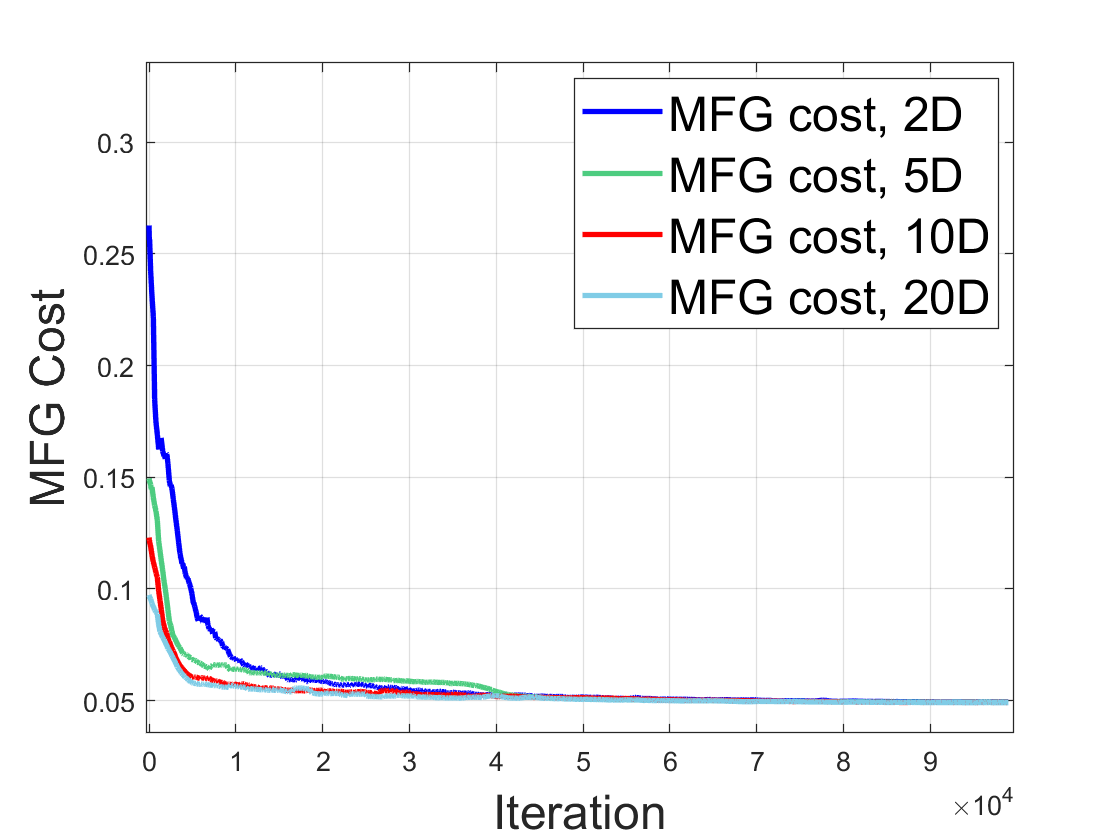}
\end{minipage}\hfill
\caption{Left: Normalized relative $L_1$ errors on the MFG value for Gaussians in $d=2,5,10,20$ with sample size 1024 compared to the statistically optimal MFG value. All errors are normalized by $\sqrt{\frac{d}{d_{max}}} = \sqrt{\frac{d}{20}}$ so that the optimal values are identical across dimensions. Middle: Learned MFG costs for Gaussian mixture in $d=2,5,10,20$. Right: Learned MFG costs for crowd motion in $d=2,5,10,20$. }
\label{fig:err_and_loss}
\end{figure}

Although the closed form solution is not known in this case, the interaction-free MFG is still equivalent to a related OT problem. As such, we expect the optimal agent trajectories to be straight and non-intersecting. In addition, the experimental setup ensures the optimal trajectories are invariant to problem dimension when projected down to the first two basis components, and the problems share the same optimal MFG cost. Intuitively, the most efficient transportation plan partitions the center Gaussian into eight components of equal mass, then transports each component radially outward while morphing it into a cluster in the mixture. From Figure~\ref{fig:gaussian_mixture}, we observe this behavior faithfully in different dimensions across various sampled configurations. 

Furthermore, we plot the evolution of computed MFG costs for different dimensions in Figure~\ref{fig:err_and_loss}, noting that their values converge to the same minima, corroborating our expectation. Remarkably, models in different dimensions are trained with the identical sample size $n=1024$, highlighting our method's scalability with respect to $d$. 

\begin{figure}[h]
\centering

\vskip -10pt
\begin{minipage}{0.16\linewidth}
\centering
\includegraphics[width=1\linewidth]{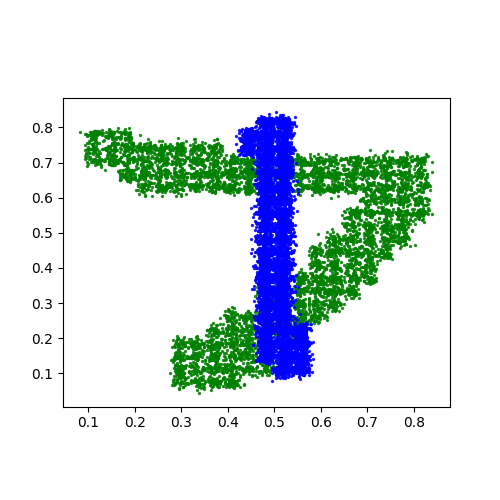}
\end{minipage}\hfill
\begin{minipage}{0.16\linewidth}
\centering
\includegraphics[width=1\linewidth]{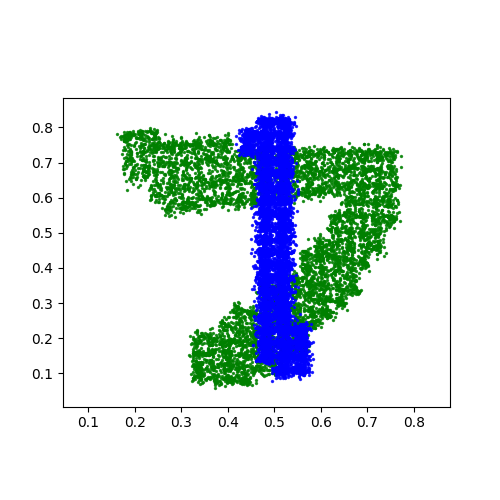}
\end{minipage}\hfill
\begin{minipage}{0.16\linewidth}
\centering
\includegraphics[width=1\linewidth]{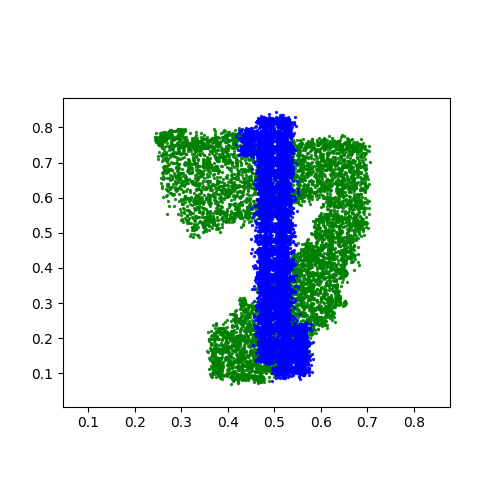}
\end{minipage}\hfill
\begin{minipage}{0.16\linewidth}
\centering
\includegraphics[width=1\linewidth]{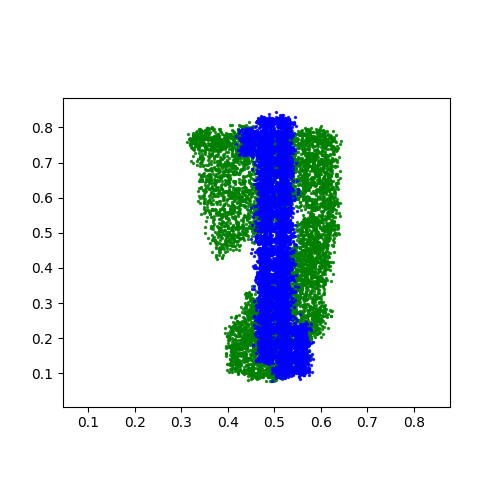}
\end{minipage}\hfill
\begin{minipage}{0.16\linewidth}
\centering
\includegraphics[width=1\linewidth]{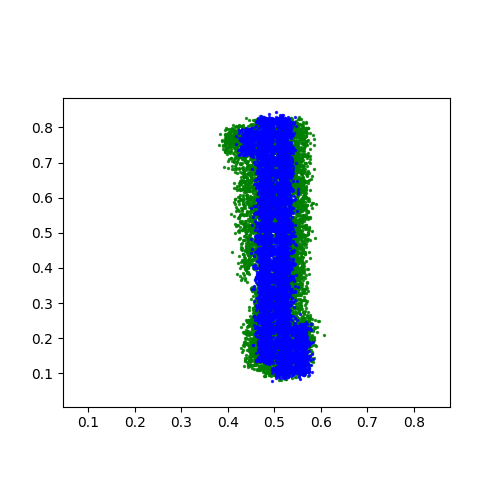}
\end{minipage}\hfill
\begin{minipage}{0.16\linewidth}
\centering
\includegraphics[width=1\linewidth]{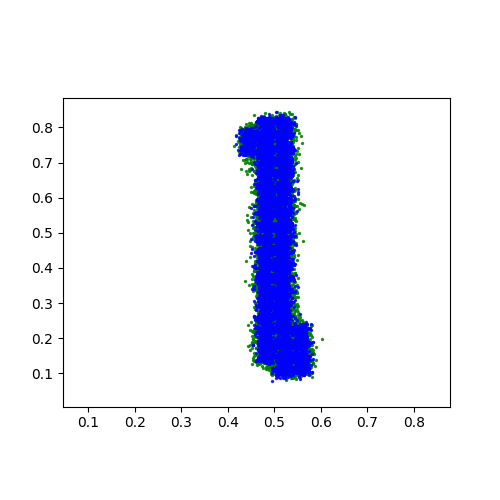}
\end{minipage}\hfill

\vskip -15pt
\begin{minipage}{0.16\linewidth}
\centering
\includegraphics[width=1\linewidth]{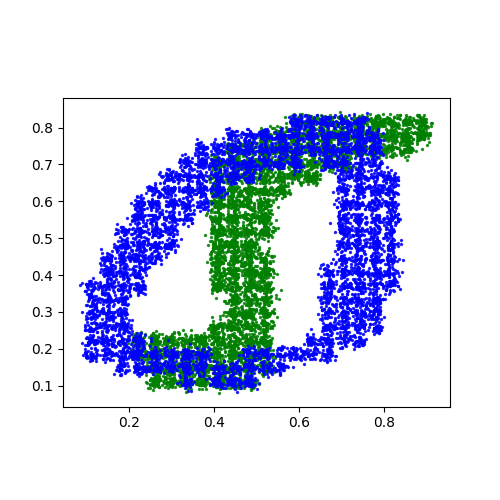}
\end{minipage}\hfill
\begin{minipage}{0.16\linewidth}
\centering
\includegraphics[width=1\linewidth]{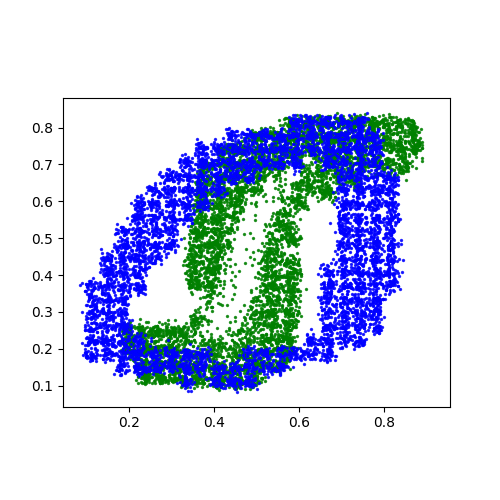}
\end{minipage}\hfill
\begin{minipage}{0.16\linewidth}
\centering
\includegraphics[width=1\linewidth]{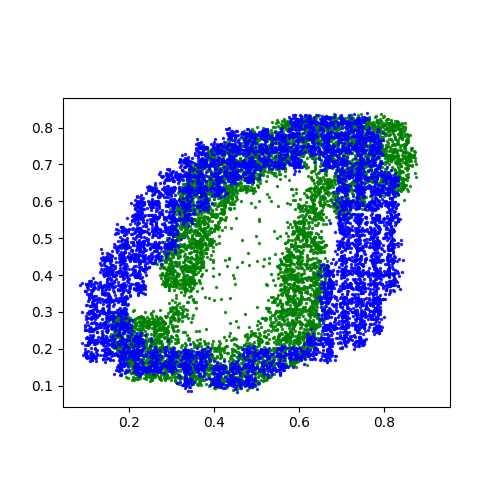}
\end{minipage}\hfill
\begin{minipage}{0.16\linewidth}
\centering
\includegraphics[width=1\linewidth]{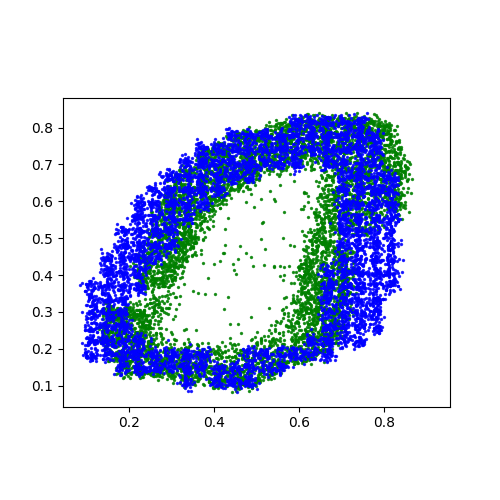}
\end{minipage}\hfill
\begin{minipage}{0.16\linewidth}
\centering
\includegraphics[width=1\linewidth]{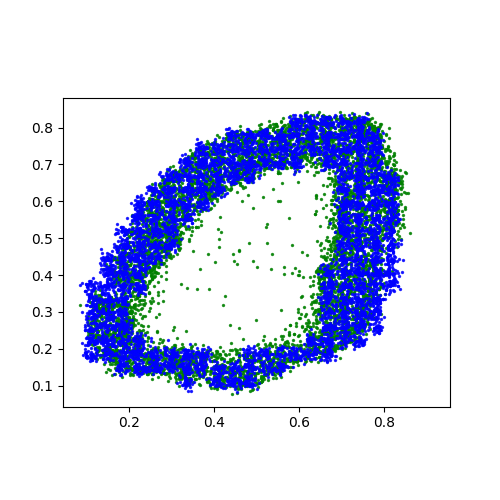}
\end{minipage}\hfill
\begin{minipage}{0.16\linewidth}
\centering
\includegraphics[width=1\linewidth]{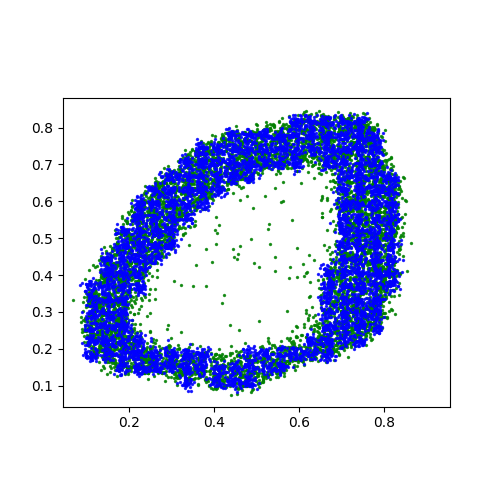}
\end{minipage}\hfill

\vskip -15pt
\begin{minipage}{0.16\linewidth}
\centering
\includegraphics[width=1\linewidth]{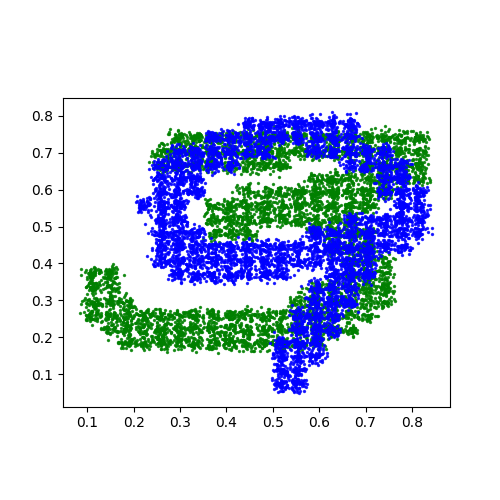}
\end{minipage}\hfill
\begin{minipage}{0.16\linewidth}
\centering
\includegraphics[width=1\linewidth]{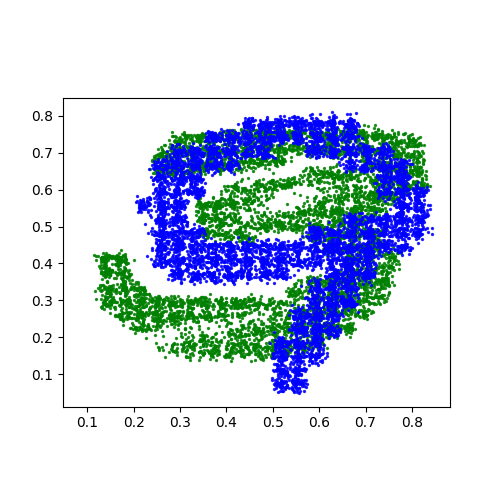}
\end{minipage}\hfill
\begin{minipage}{0.16\linewidth}
\centering
\includegraphics[width=1\linewidth]{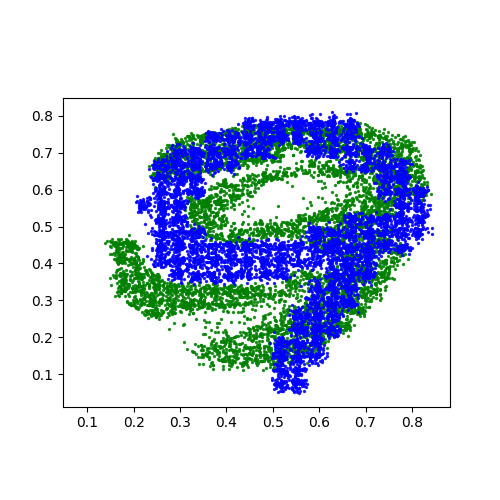}
\end{minipage}\hfill
\begin{minipage}{0.16\linewidth}
\centering
\includegraphics[width=1\linewidth]{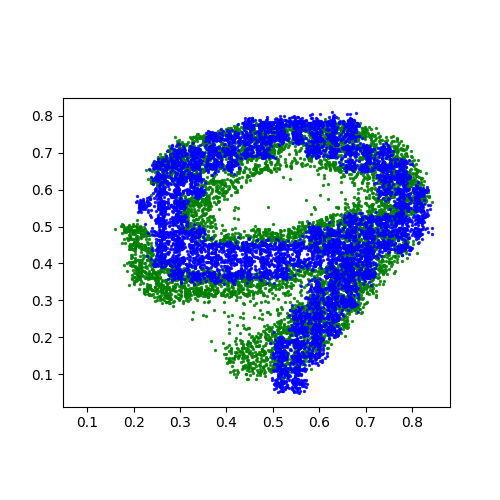}
\end{minipage}\hfill
\begin{minipage}{0.16\linewidth}
\centering
\includegraphics[width=1\linewidth]{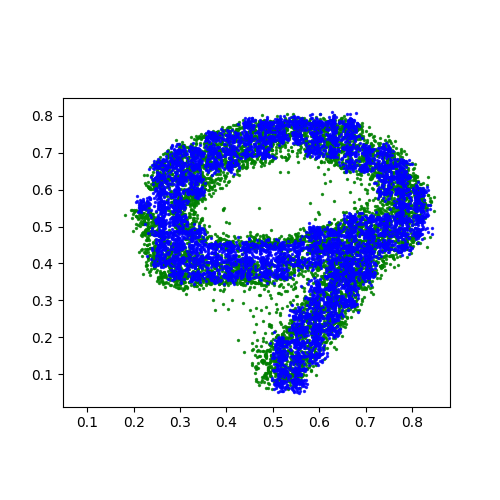}
\end{minipage}\hfill
\begin{minipage}{0.16\linewidth}
\centering
\includegraphics[width=1\linewidth]{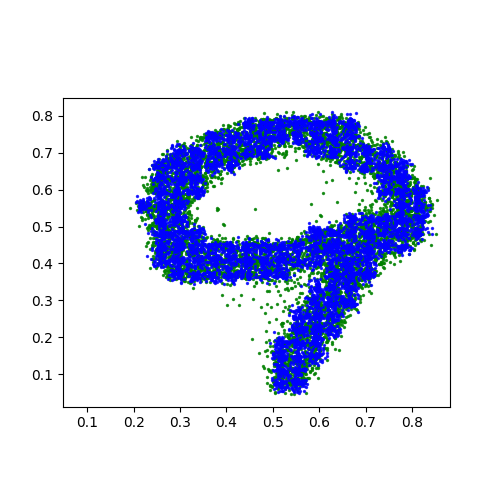}
\end{minipage}\hfill

\vskip -15pt
\begin{minipage}{0.16\linewidth}
\centering
\includegraphics[width=1\linewidth]{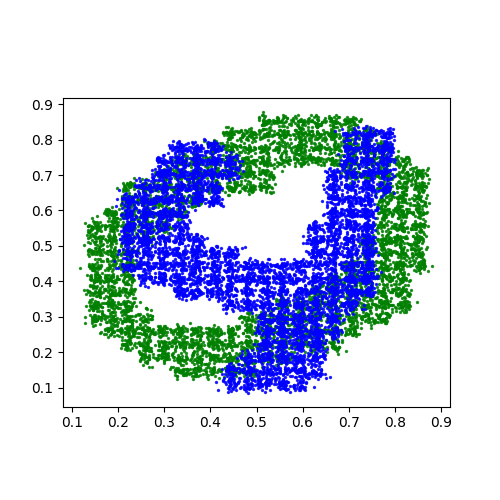}
\end{minipage}\hfill
\begin{minipage}{0.16\linewidth}
\centering
\includegraphics[width=1\linewidth]{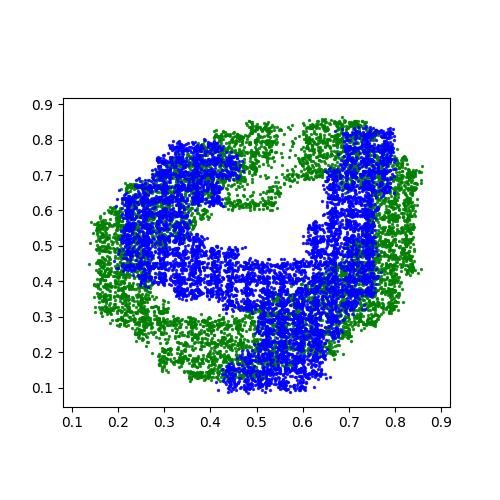}
\end{minipage}\hfill
\begin{minipage}{0.16\linewidth}
\centering
\includegraphics[width=1\linewidth]{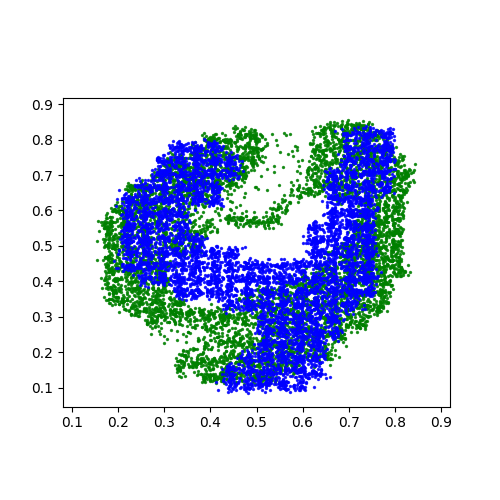}
\end{minipage}\hfill
\begin{minipage}{0.16\linewidth}
\centering
\includegraphics[width=1\linewidth]{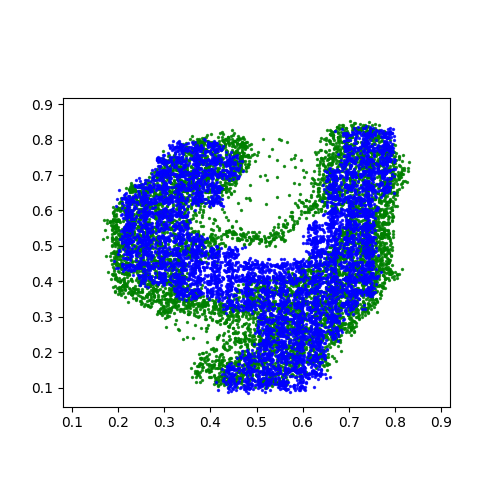}
\end{minipage}\hfill
\begin{minipage}{0.16\linewidth}
\centering
\includegraphics[width=1\linewidth]{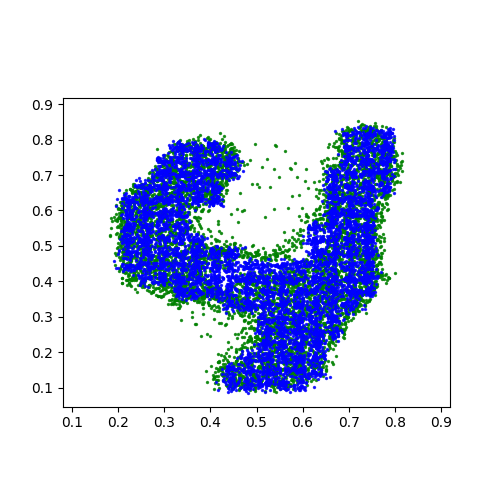}
\end{minipage}\hfill
\begin{minipage}{0.16\linewidth}
\centering
\includegraphics[width=1\linewidth]{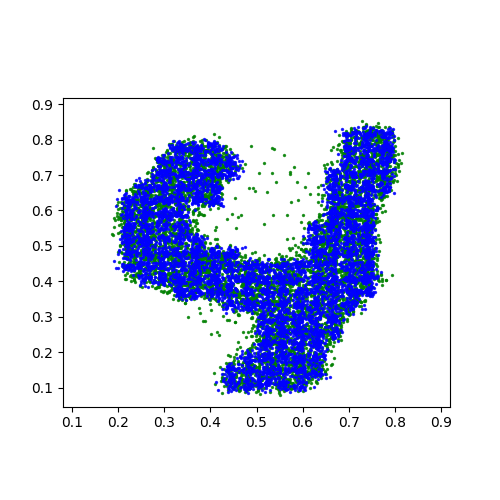}
\end{minipage}\hfill

\caption{Learned couplings between MNIST digits from the testing set, each represented by $n=7020$ samples. Each row shows the transportation between $P_0$ (green dots) and $P_1$ (blue dots) upon solving an interaction-free MFG.}
\label{fig:MNIST}
\vskip -5pt
\end{figure}

\subsection{MNIST}

We apply our operator learning framework to learn MFG couplings between MNIST images. Taking the distribution of configurations $\mu$ as handwritten digits from 0 to 9, we seek to solve interaction-free MFG between $P_0, P_1$ sampled iid from $\mu$ with the $L_2$ transport cost. As a result, our model learns to output OT-like mappings between any two digits.

As our framework adopts a sampling-based representation of $P_0, P_1$, we opt for a point-cloud version of MNIST images~\cite{pcMNIST}, each containing 1053 points dequantized with small Gaussian noises. We use the common split of 60k training and 10k testing digits. Compared to synthetic experiments, variations between MNIST samples are more meaningful and complex while our access to data is limited, making this case significantly more difficult.

In Figure~\ref{fig:MNIST}, we display learned evolutions between various digits from the testing set to illustrate the efficacy of our approach. For all sampled pairs, we observe that the learned operator transports $P_0$ to a shape that closely resembles $P_1$. Furthermore, the coupling visually traverses the Wasserstein geodesic and incurs a small mean distance traveled for all particles, as evidenced by e.g., partitioning a digit five into two halves and transporting each outwards into a digit zero on the second row of Figure~\ref{fig:MNIST}.

In addition, Figure~\ref{fig:MNIST_with_metaOT} compares our learned couplings to those from Meta-OT~\cite{meta_OT}, which computes the OT transportation between MNIST digits. Qualitatively, our results bears strong visual similarity to the Meta-OT mappings, so the two approaches are equally capable on this task. However, it is worth noting that as we model the problem as an interaction-free MFG, its solution agrees to the dynamic optimal transport problem only when $\lambda_L \to 0$ ($\lambda_L = 2\cdot 10^{-2}$ here). Moreover, we treat the transportation as a continuous problem in the ground space $\R^2$, while Meta-OT frames it as a discrete problem on the pixel space $\R^{784}$.

\begin{figure}[h]
\centering
\vskip 10pt
\begin{minipage}{1\linewidth}
\centering
\includegraphics[width=1\linewidth]{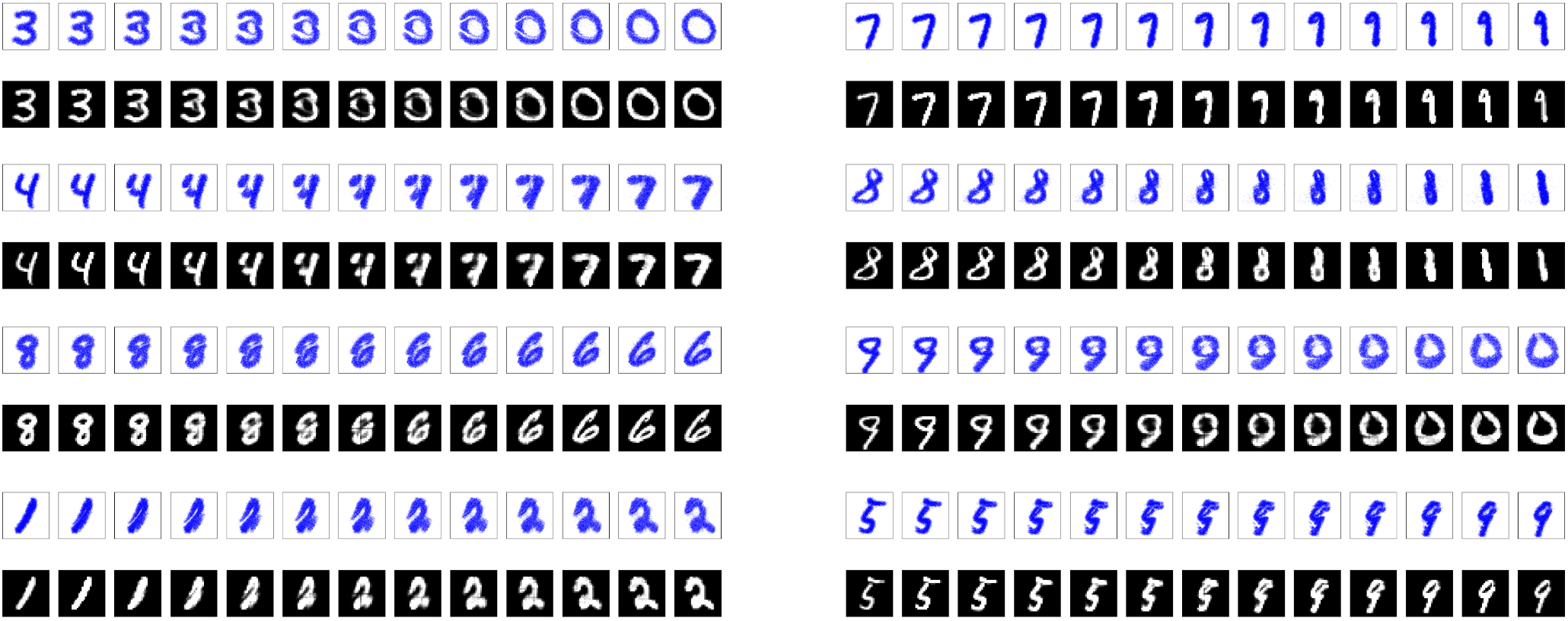}
\end{minipage}\hfill
\caption{Comparison between our (blue) learned transportation between MNIST digits and Meta-OT's (black and white).}
\label{fig:MNIST_with_metaOT}
\end{figure}

\subsection{Sampling Invariance}

\begin{figure}[h]
\centering

\vskip -10pt
\begin{minipage}{0.3\linewidth}
\centering
\includegraphics[width=1\linewidth]{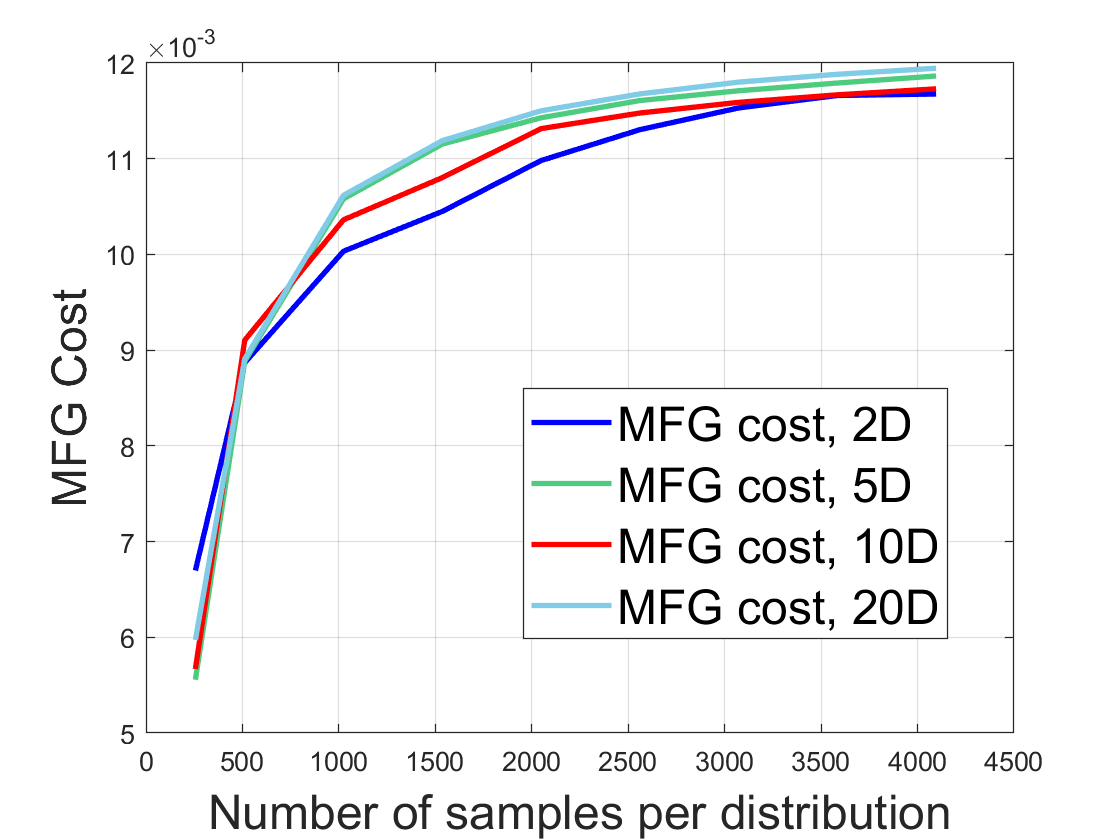}
\end{minipage}\hfill
\begin{minipage}{0.23\linewidth}
\centering
\includegraphics[width=1\linewidth]{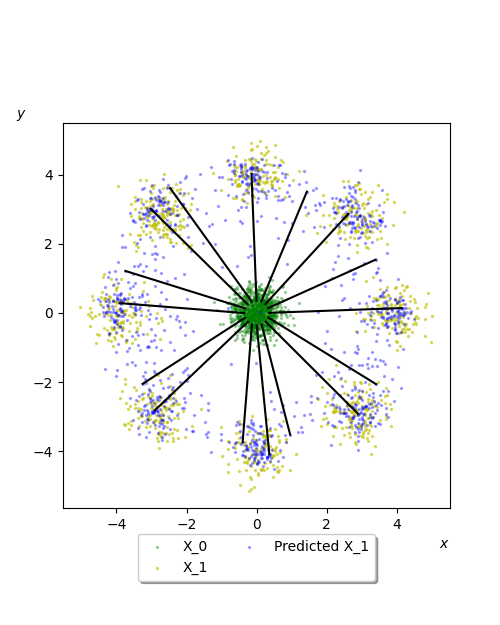}
\end{minipage}\hfill
\begin{minipage}{0.23\linewidth}
\centering
\includegraphics[width=1\linewidth]{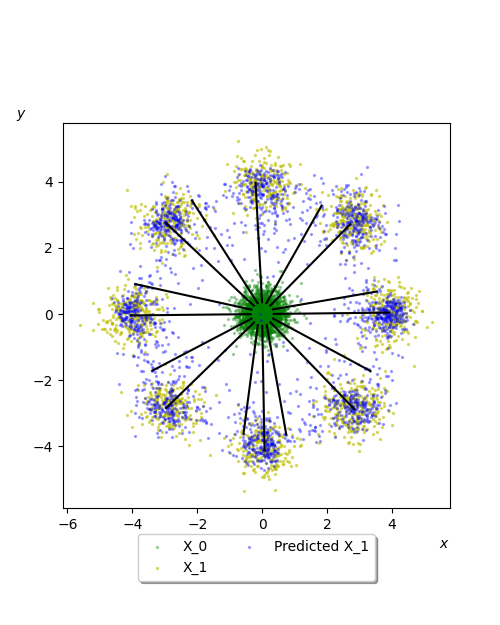}
\end{minipage}\hfill
\begin{minipage}{0.23\linewidth}
\centering
\includegraphics[width=1\linewidth]{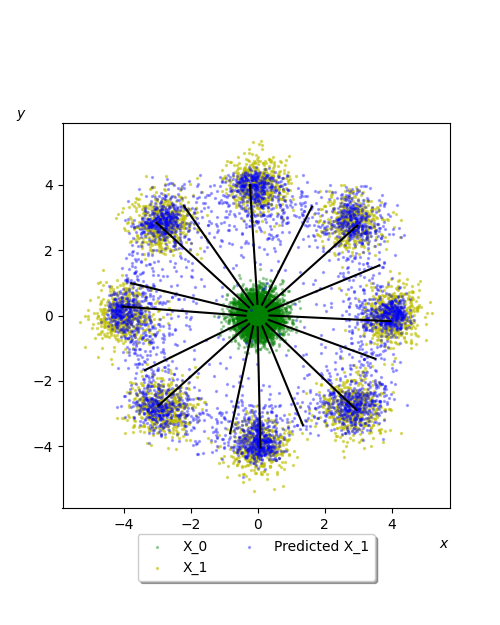}

\end{minipage}\hfill

\vskip -10pt
\begin{minipage}{0.16\linewidth}
\centering
\includegraphics[width=1\linewidth]{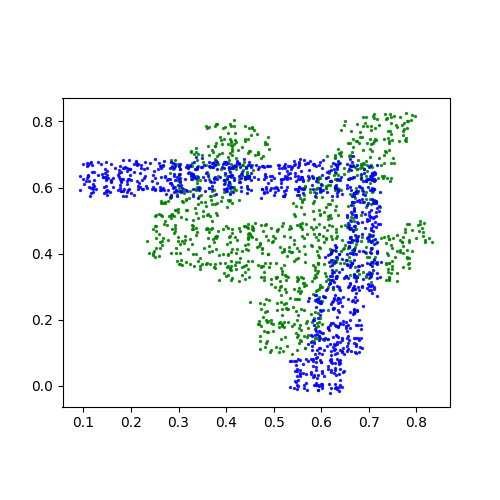}
\end{minipage}\hfill
\begin{minipage}{0.16\linewidth}
\centering
\includegraphics[width=1\linewidth]{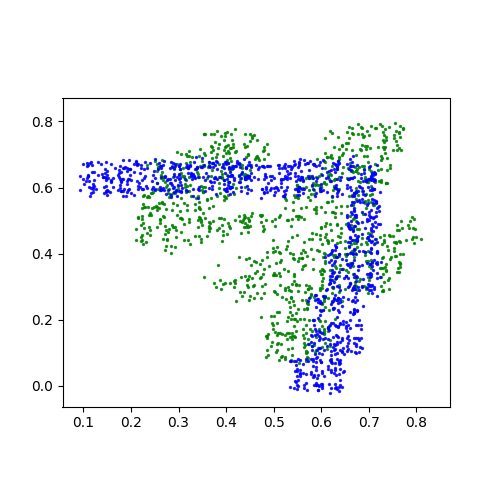}
\end{minipage}\hfill
\begin{minipage}{0.16\linewidth}
\centering
\includegraphics[width=1\linewidth]{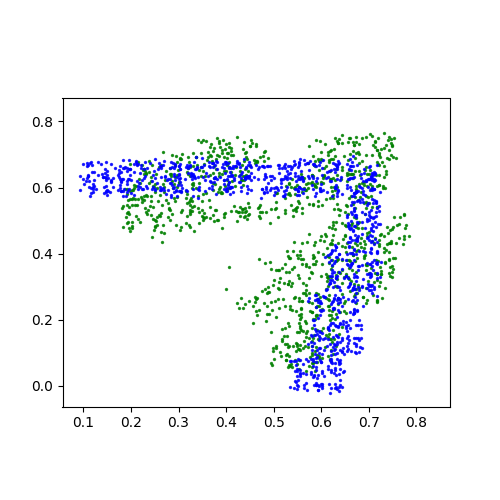}
\end{minipage}\hfill
\begin{minipage}{0.16\linewidth}
\centering
\includegraphics[width=1\linewidth]{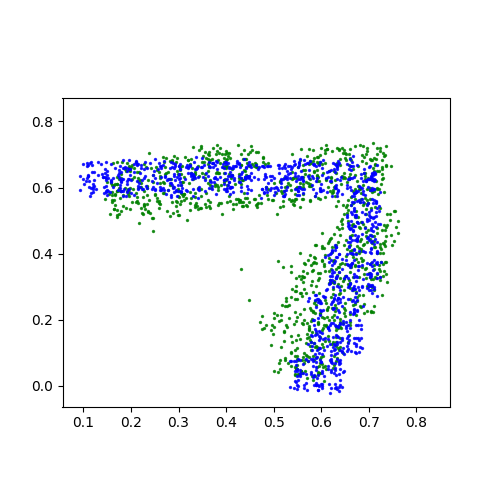}
\end{minipage}\hfill
\begin{minipage}{0.16\linewidth}
\centering
\includegraphics[width=1\linewidth]{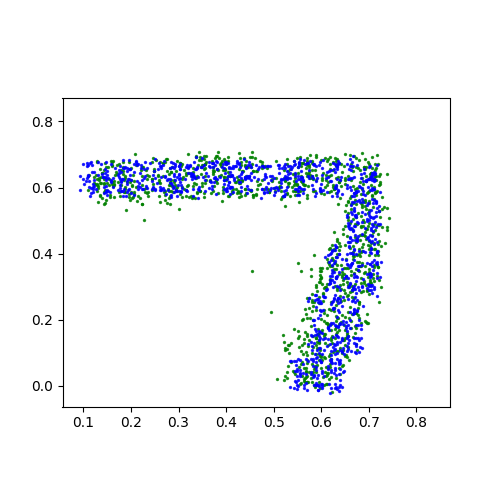}
\end{minipage}\hfill
\begin{minipage}{0.16\linewidth}
\centering
\includegraphics[width=1\linewidth]{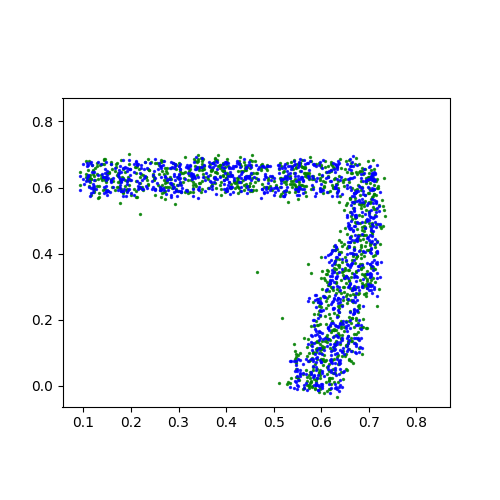}
\end{minipage}\hfill

\vskip -15pt
\begin{minipage}{0.16\linewidth}
\centering
\includegraphics[width=1\linewidth]{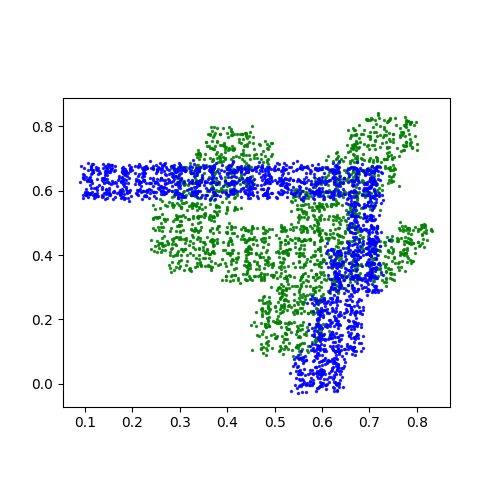}
\end{minipage}\hfill
\begin{minipage}{0.16\linewidth}
\centering
\includegraphics[width=1\linewidth]{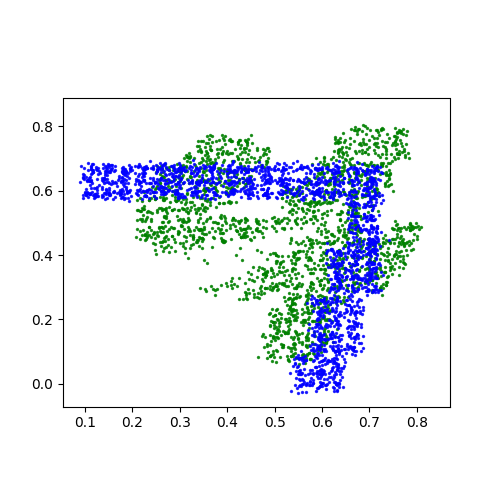}
\end{minipage}\hfill
\begin{minipage}{0.16\linewidth}
\centering
\includegraphics[width=1\linewidth]{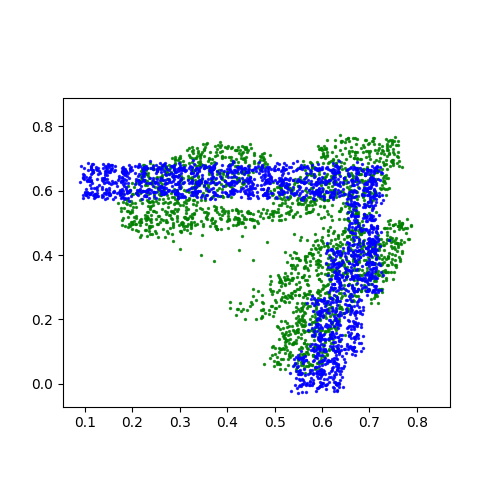}
\end{minipage}\hfill
\begin{minipage}{0.16\linewidth}
\centering
\includegraphics[width=1\linewidth]{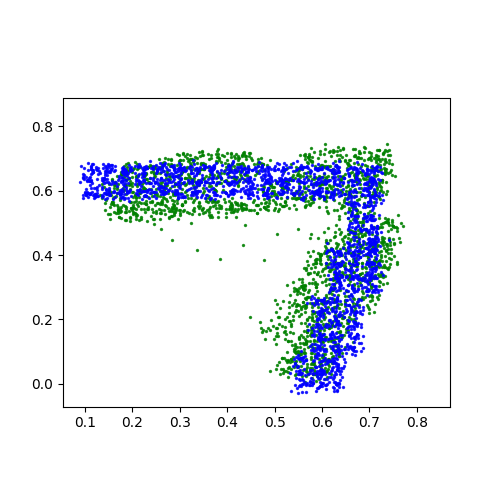}
\end{minipage}\hfill
\begin{minipage}{0.16\linewidth}
\centering
\includegraphics[width=1\linewidth]{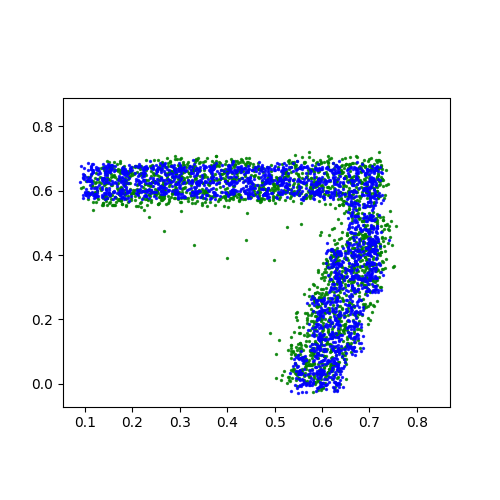}
\end{minipage}\hfill
\begin{minipage}{0.16\linewidth}
\centering
\includegraphics[width=1\linewidth]{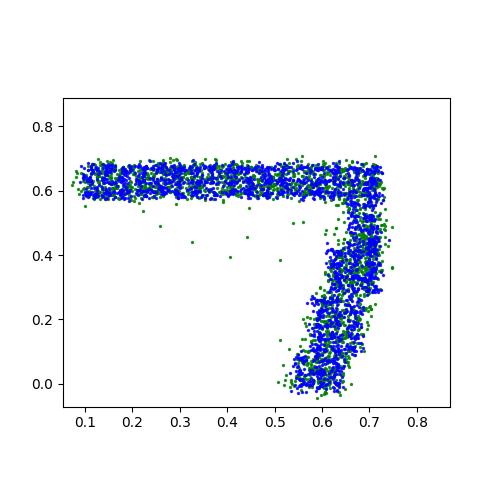}
\end{minipage}\hfill

\vskip -15pt
\begin{minipage}{0.16\linewidth}
\centering
\includegraphics[width=1\linewidth]{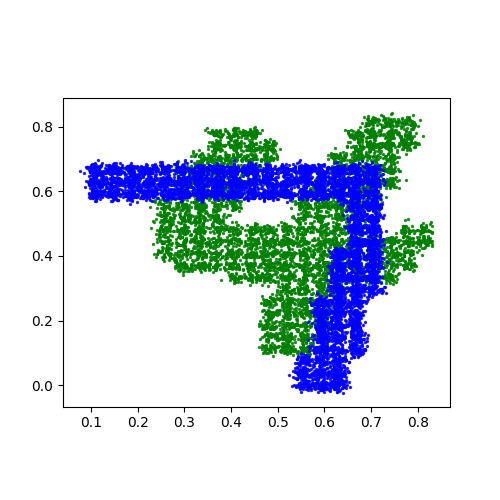}
\end{minipage}\hfill
\begin{minipage}{0.16\linewidth}
\centering
\includegraphics[width=1\linewidth]{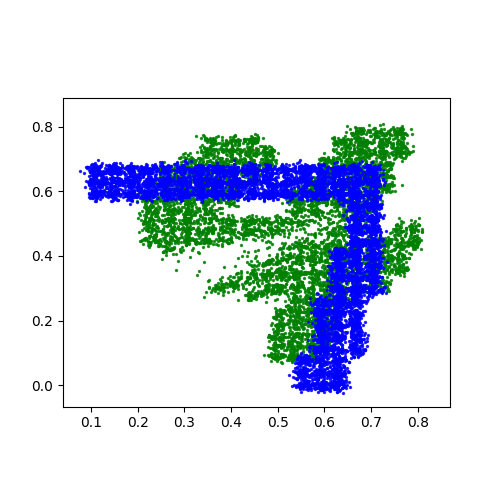}
\end{minipage}\hfill
\begin{minipage}{0.16\linewidth}
\centering
\includegraphics[width=1\linewidth]{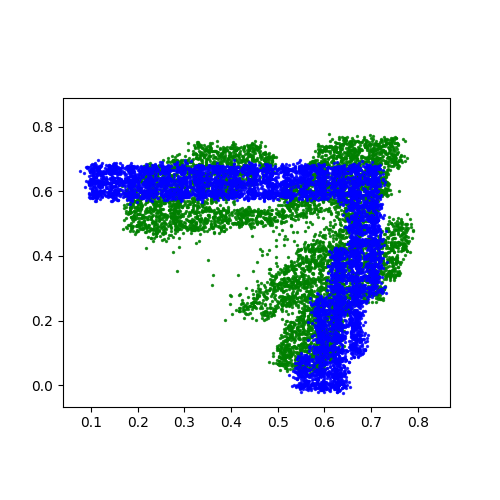}
\end{minipage}\hfill
\begin{minipage}{0.16\linewidth}
\centering
\includegraphics[width=1\linewidth]{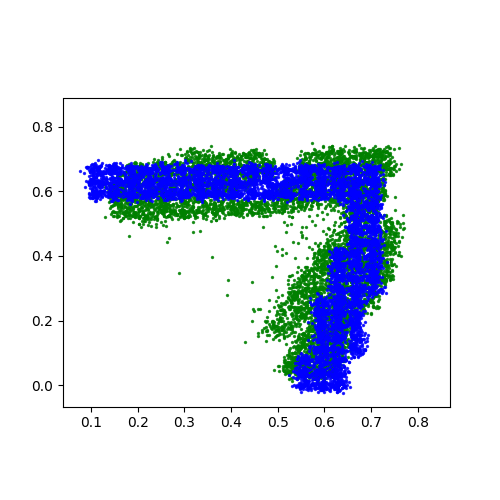}
\end{minipage}\hfill
\begin{minipage}{0.16\linewidth}
\centering
\includegraphics[width=1\linewidth]{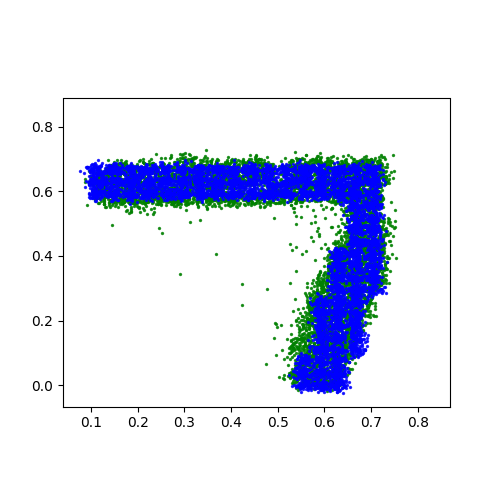}
\end{minipage}\hfill
\begin{minipage}{0.16\linewidth}
\centering
\includegraphics[width=1\linewidth]{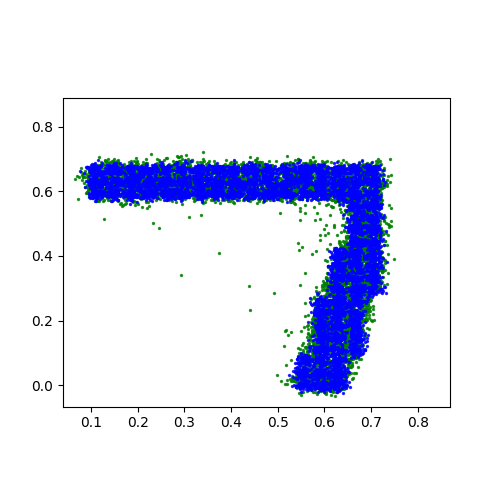}
\end{minipage}\hfill

\caption{Top row, left-most: MFG cost for models trained on Gaussian mixture with 1024 samples and evaluated on number of samples between 256 and 4096. Top row, right, 3 plots: Inferred couplings in $d=10$ with $n=1024, 2048, 4096$ samples. Second to last row: Inferred couplings between MNIST digits from the testing set with $n=1053, 2106, 7020$ samples.}
\label{fig:samping_inv}
\end{figure}

In this section, we numerically verify sampling-invariance of our model by studying a trained model's behavior on inputs of various sizes during inference. Once training is finished, the model undergoes no further weight updates, yet its sampling-invariant nature gives rise to consistent behaviors on different sample sizes.

The top left plot in Figure~\ref{fig:samping_inv} illustrates the computed MFG costs. The model underwent training using 1024 samples per distribution and is tested with sample sizes ranging from 256 to 4096. The procedure is repeated for dimensions 2, 5, 10, and 20. Across all dimensions, the MFG cost stabilizes as sample sizes increase, indicating the model's tendency toward a fixed mapping in the limit, affirming its sampling invariance. Furthermore, Figure~\ref{fig:samping_inv} includes qualitative results showcasing visual output consistency across different input sizes in the Gaussian mixture and the MNIST dataset.

\subsection{Crowd Motion}

\begin{figure}[h]
\centering

\vskip -15pt

\includegraphics[width=1\linewidth]{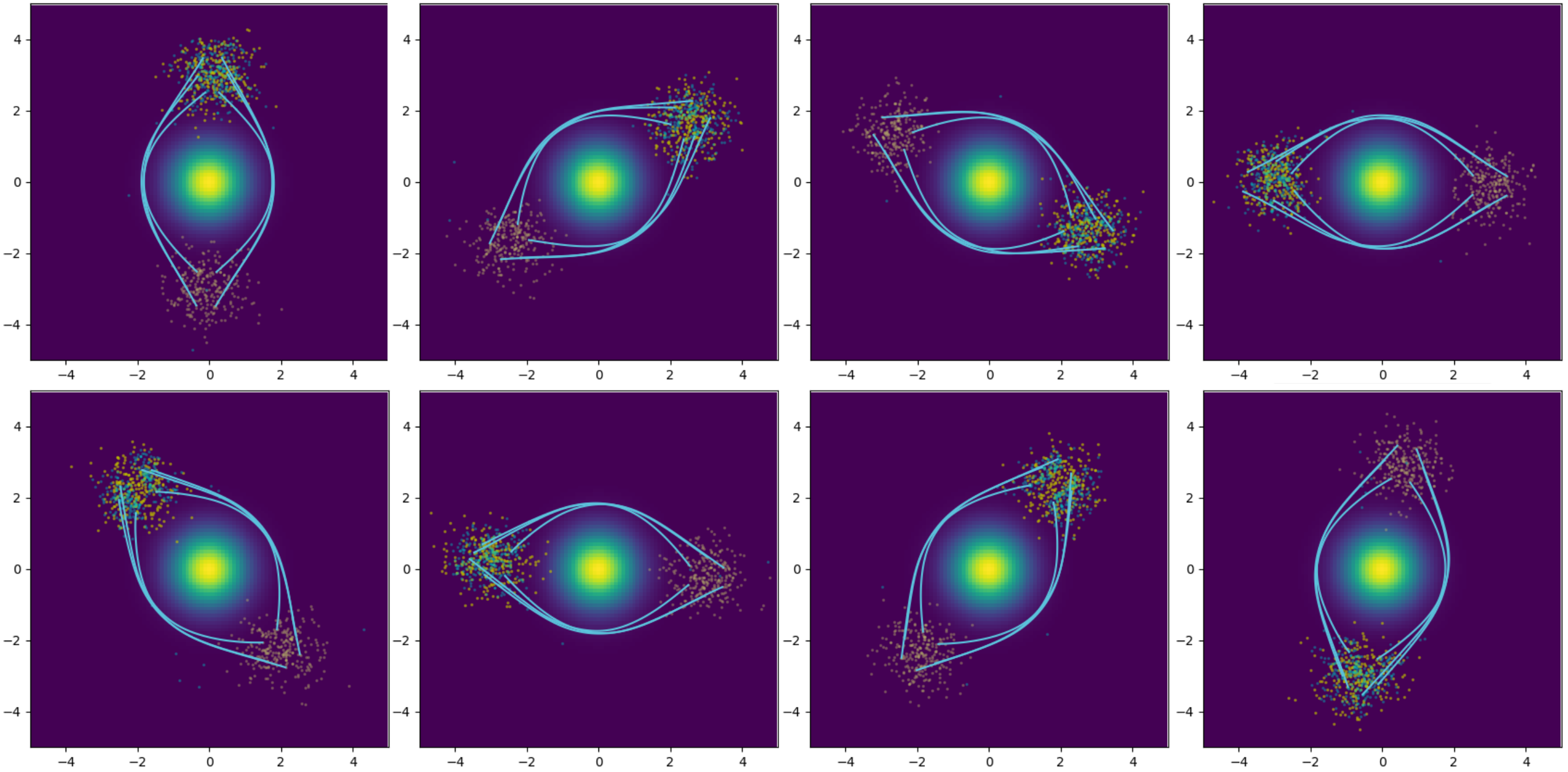}
\caption{Learned trajectories in 5(top) and 10(bottom) dimensions for crowd motion projected to the first two components. Each row shows 4 MFG instances. The center density represents the obstacle $Q(x)$. The khaki, yellow, and cyan points are samples from $P_0, P_1$, and $G_\theta(P_0,P_1)(\cdot, T)_*P_0$, respectively. For each instance, we plot six landmark trajectories to outline the flow of agents.}
\label{fig:crowd_motion}
\end{figure}

\begin{table}[h]
\vskip -10pt
\begin{center}
\begin{small}
\begin{sc}
\begin{tabular}{lccccccccccc}
\toprule
Method & Dim & MFG Cost & $L$ & $\Is$ & $\Ms$ & Time(s)\\
\midrule
APAC-Net\cite{MFG_GAN} & 2 & 3.616 & 28.106 & 0.098 & 0.707 & 9979 \\
APAC-Net & 5 & 4.030 & 29.171 & 0.099 & 1.013 & 10291 \\
APAC-Net & 10 & 4.199 & 28.541 & 0.099 & 1.246 & 10328 \\
APAC-Net & 20 & 4.722 & 32.187 & 0.102 & 1.401 & 10184 \\
\midrule
MFGNet\cite{NN_MFP} & 2 & 3.246 & 28.942 & 0.086 & 0.266 & 2838 \\
MFGNet & 5 & 3.325 & 29.666 & 0.086 & 0.273 & 2884 \\
MFGNet & 10 & 3.453 & 30.828 & 0.086 & 0.284 & 2870 \\
MFGNet & 20 & 3.712 & 33.201 & 0.086 & 0.306 & 2874 \\
\midrule
MFG-NF\cite{MFG_NF} & 2 & 3.252 & 28.951 & 0.086 & 0.266 & 7178 \\
MFG-NF & 5 & 3.336 & 29.611 & 0.086 & 0.273 & 7240 \\
MFG-NF & 10 & 3.460 & 30.888 & 0.086 & 0.283 & 7358 \\
MFG-NF & 20 & 3.717 & 33.261 & 0.086 & 0.305 & 7378 \\
\midrule
Ours & 2 & \textbf{3.246} & 28.944 & 0.086 & 0.267 & \textbf{0.06} \\
Ours & 5 & \textbf{3.325} & 29.642 & 0.086 & 0.275 & \textbf{0.07} \\
Ours & 10 & \textbf{3.453} & 30.819 & 0.086 & 0.284 & \textbf{0.07} \\
Ours & 20 & \textbf{3.712} & 33.194 & 0.086 & 0.307 & \textbf{0.07} \\
\bottomrule
\end{tabular}
\end{sc}
\end{small}
\end{center}
\caption{Baseline results for crowd motion in different dimensions. $L$: Transport cost; $\Is$: Interaction cost; $\Ms$: Terminal cost. Time: Wall clock time in seconds required for solving a single MFG (at inference time for our method).}
\label{tab:crowd_motion}
\vskip -0.1in
\end{table}

In this section, we illustrate the learning of a MFG with nontrivial time-dependent dynamics. Adapting the setup in~\cite{NN_MFP}, we consider a motion planning problem where $P_0, P_1$ are identical Gaussians. Different from the Gaussian setup in Section~\ref{sect:gaussian}, however, an obstacle is placed between the initial and terminal distributions to introduce an interaction term that penalizes agent's proximity to the obstacle. 

Formally, we consider initial and terminal distributions $P_0 = \mathrm{N}(3R_\theta e_2, 0.3 I), P_1 = \mathrm{N}(-3R_\theta e_2, 0.3 I)$, where $R_\theta$ is the rotation matrix for the angle $\theta \sim U[0,2\pi]$. Visually, $P_0, P_1$ are Gaussians with means uniformly sampled from a circle of radius 3 located diametrically from each other. The obstacle $Q(\vx)$ is fixed for different configurations:
\begin{equation}\label{Q}
\begin{split}
    Q(\vx) &= p_N(\vx; 0, \text{diag}(0.5,0.5)),
\end{split}
\end{equation}
where $p_N(\vx; m, \Sigma)$ denotes the probability density function of $\mathrm{N}(m, \Sigma)$. To encourage the agents to avoid the obstacle, we use the following interaction cost:
\begin{equation}\label{crowd_motion_interaction_cost}
\begin{split}
    \mathcal{I}(\Gs(P_0, P_1)(\cdot, t)_*P_0) &\coloneqq \int_{\R^d} Q(\vx) \der (\Gs(P_0, P_1)(\cdot, t)_*P_0)(\vx) = \int_{\R^d} Q(\Gs(P_0, P_1)(\vx,t)) p_0(\vx)\der \vx\\
\end{split}
\end{equation}
If $d>2$, we evaluate $Q$ on the first two components of the agent trajectory.


One way of computing the transport cost is to use an automatic differentiation engine such as PyTorch~\cite{pytorch} to compute $\partial_t G_\theta (X_0, X_1)(\vx,t)$, then use Monte Carlo sampling to approximate the integrals in both time and space. In practice, however, we found that optimizing the analytic derivative to be extremely slow and requiring a prohibitive amount of GPU memory. In addition, the model may learn mappings with sharp transitions in short time spans to create the mirage of having a low transport cost. To mitigate this behavior, one needs to batch a large number of time samples for each gradient step, further exacerbating memory consumption. In light of this observation, we opt to approximate transport and interaction costs with time discretizations, similar to~\cite{MFG_NF}. Note that while the estimated MFG costs are discretized in time, our architecture remains continuous in both space and time.

We present several inferred solution trajectories for different configurations in 5 and 10 dimensions in Figure~\ref{fig:crowd_motion}. Overall, it is evident that the agents have acquired the ability to navigate around the central obstacle and reach their intended destinations while minimizing travel distances. Three additional indicators affirm the accuracy of the results. First, there is symmetry in each problem instance. As the initial and terminal distributions align with the obstacle in their means, the agents split into two groups while traveling whose trajectories mirror each other with respect to the middle obstacle.

Second, we see symmetry across instances - learned agent trajectories for two problems are roughly identical up to a rotation, a phenomenon consistent with the fact that $P_0, P_1$ are identical after the same transformation. Lastly, the right plot of Figure~\ref{fig:err_and_loss} suggests problems in different dimensions share the same optimal MFG cost. This resonates with our expectation as the setup allows optimal trajectories to be constant on all but the first two dimensions. Similar to before, models in all dimensions are trained with $n=256$ samples, marking another MFG instance that can be estimated effectively with a sampling based representation.

Lastly, we compare our approach to existing single-instance neural MFG solvers to showcase its accuracy and efficiency. For fair comparison, we use the following simple terminal cost introduced in~\cite{MFG_GAN} for all methods in lieu of MMD. 
\begin{align}
    \Ms(P) = \int_{\R^d} \|x - x_T\|^2_2 dP(x), x_T = -3R_\theta e_2,
\end{align}
where $R_\theta$ is the rotation matrix for angle $\theta$, and the cost weights are $\lambda_L = 10^{-1}, \lambda_\Is = 1, \lambda_\Ms = 1$. Note that different problem instances share the same optimal cost, so we can obtain the single-instance results by solving the problem with the identity rotation. We adapted the implementation of APAC-Net~\cite{MFG_GAN} and MFG-NF~\cite{MFG_NF} from the their official github repositories. We reimplemented MFGNet~\cite{NN_MFP} in PyTorch since the official codebase is in Julia. We gave our best effort to tune the hyperparameters for each baseline. All wall clock results are benchmarked on a machine with i9-12900K, RTX 4080, and 32GB RAM.


Table~\ref{tab:crowd_motion} documents the computed costs and time to solve each problem for each method. Remarkably, our operator learning approach improves the inference time by multiple orders of magnitude without sacrificing the quality of computed solutions. The impressive time reduction is fundamental to the operator learning framework as it eliminates the need to retrain networks to solve new problems. With its inference time reduced to well under a second, our framework paves the way for exciting \textit{real-time} applications in optimal control and beyond.

\subsection{Path Planning}

\begin{figure}[H]
\centering

\vskip -11pt

\includegraphics[width=1\linewidth]{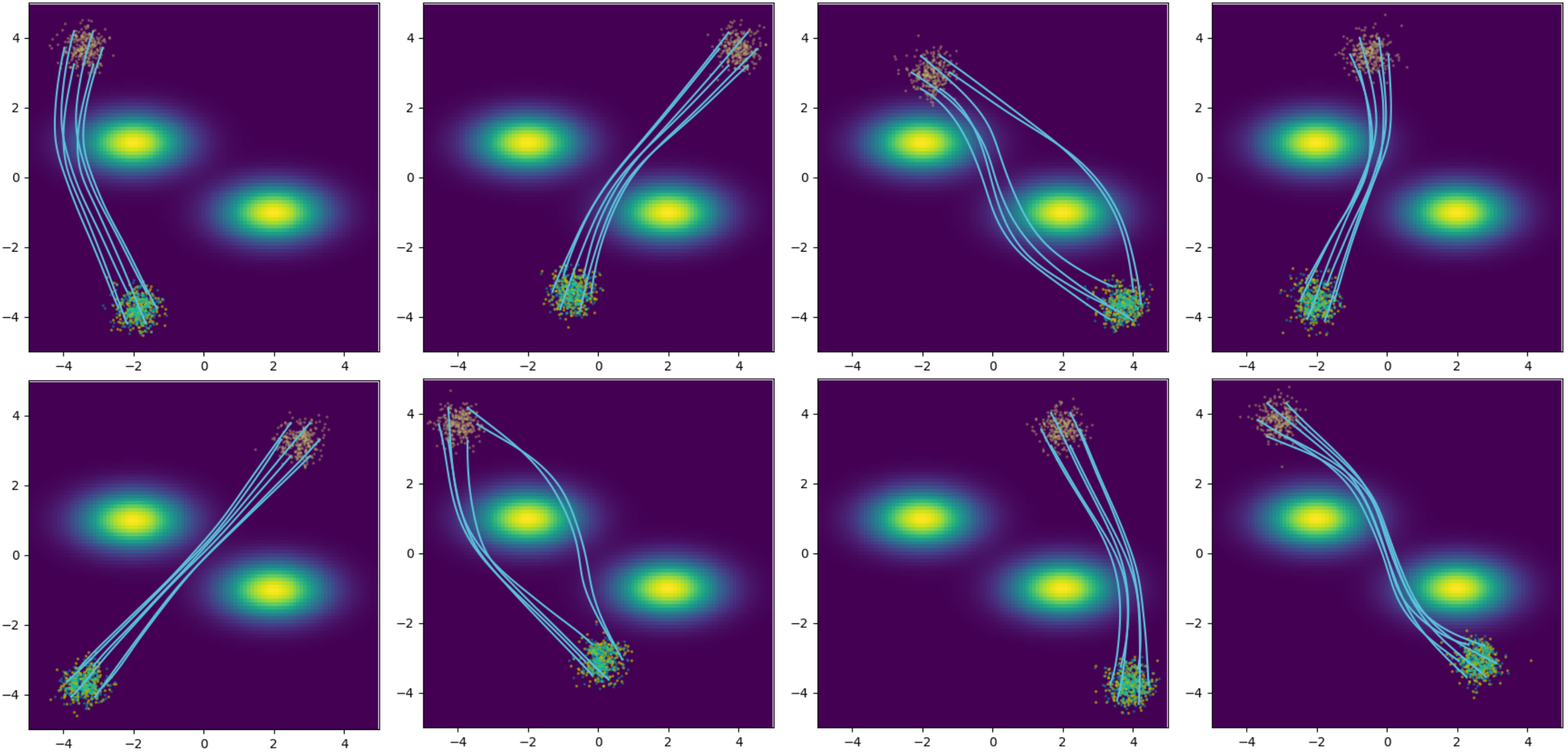}
\caption{Learned trajectories for path planning. Each row shows 4 MFG instances. The two densities in the center represent the obstacle $Q(x)$. The khaki, yellow, and cyan points are samples from $P_0, P_1$, and $G_\theta(P_0,P_1)(\cdot, T)_*P_0$, respectively. For each instance, we plot six landmark trajectories to sketch out the flow of agents.}
\label{fig:path_planning}
\end{figure}

We turn our attention to a setup that resembles a realistic path planning problem. Consider a two dimensional space with two obstacles $Q(\vx)$ placed near the origin. We solve a MFG that seeks to efficiently transport an initial distribution $P_0$ of agents starting above the obstacles to a terminal distribution $P_1$ below. The interaction term is identical to~\eqref{crowd_motion_interaction_cost} and discourages collision with the obstacles. Concretely, we set
\begin{align*}
    Q(\vx) &= \frac{1}{2} p_N(\vx; (-2,1), \text{diag}(1,0.3)) + \frac{1}{2}p_N(\vx; (2,-1), \text{diag}(1,0.3))\\
    P_0 &= \mathrm{N}((m^0_1,m^0_2),0.1I), \text{ where } m^0_1 \sim U[-4,4], m^0_2 \sim U[3,4]\\
    P_1 &= \mathrm{N}((m^1_1,m^1_2),0.1I), \text{ where } m^1_1 \sim U[-4,4], m^1_2 \sim U[-4,-3]
\end{align*}

In Figure~\ref{fig:path_planning}, we provide select optimal trajectories from the learned MFG solution operator on various configurations. Similar to crowd motion, the learned agent flow avoids both obstacles while maintaining a short distance traveled en route to its destination. We emphasize that the learned trajectories do not rely on any expert demonstrations on how to traverse the space.

\section{Proofs}\label{sec:proofs}
In this section, we provide proofs of all statements in Section~\ref{sect:theo_results}.

\subsection{Theorem~\ref{thm:sampling_inv}}


\begin{proof}
$G_\theta (X_0, X_1)(\vx,t)$ is a composition of point-wise MLPs, entry-wise activations, and self-attention blocks. Hence, it is permutation invariant since point-wise MLP, component-wise action and multi-headed attentions are all permutation equivariant, while the final evaluation selects a row in the output by condition and is thus permutation invariant. Furthermore, $G_\theta$ is defined for input samples of any size and can be evaluated at any point $x \in \R^d$ in the output domain. 

It remains to show that $G_\theta(X_0,X_1)(\vx,t)$ converges to a continuous operator as the number of samples $n\to \infty$. First, note that point-wise MLPs and entry-wise activations satisfy this property trivially, so it suffices to show the convergence for self-attention blocks. Assuming the input $X\in \R^{n\times d}$ has iid samples from $P$ as its rows, we define a self-attention block to be $H(X)(\vx)$, where $\vx\in \R^d$ is a point in the output domain. To be consistent with our parametrization~\eqref{fig:architecture}, the attention is applied on the concatenated input $\hat{X} =\begin{pmatrix} \vx^\top \\X \\ \end{pmatrix}$, and the first row in the output matrix (as it corresponds to $\vx$) is used as the final output. That is,


\begin{align}
    H(X)(\vx) &= [\text{softmax}(\frac{(\hat{X}A)(\hat{X}B)^\top }{\sqrt{m}})\hat{X}C]_1\\
    &= \sum_{i=1}^{n+1} \text{softmax} (\frac{(\hat{X}A)(\hat{X}B)^\top }{\sqrt{m}})_{1i} (\hat{X}C)_i\\
    &= \sum_{i=1}^{n+1} \frac{\exp{(\frac{(\hat{X}A)(\hat{X}B)^\top }{\sqrt{m}}})_{1i}}{\sum_{j=1}^{n+1} \exp{(\frac{(\hat{X}A)(\hat{X}B)^\top }{\sqrt{m}})}_{1j}} \hat{X}_i^\top  C
\end{align}
where $A,B,C \in \R^{d \times h}$ are trainable matrices, and $m\in \R$ is a hyperparameter. We write $\hat{X}^\top _i$ as the i-th row of $\hat{X}$.   Noting that $(XY^\top )_{ij} = \langle X_i, Y_j \rangle$, we get

\begin{align}
    H(X)(\vx) = \sum_{i=1}^{n+1} \frac{\exp{(\frac{\langle \hat{X}_1^\top A, \hat{X}_i^\top B\rangle}{\sqrt{m}}})}{\sum_{j=1}^{n+1} \exp{(\frac{\langle \hat{X}_1^\top A, \hat{X}_j^\top B\rangle}{\sqrt{m}})}} \hat{X}_i^\top  C
\end{align}

To simplify our notation, define $q_m(x,y) = \exp(\frac{\langle x, y\rangle}{\sqrt{m}})$. We can split the summations between its first term, which is associated with the input $x$, and the rest. 

\begin{align}
    H(X)(\vx) &= \sum_{i=1}^{n+1} \frac{q_m(\hat{X}_1^\top A, \hat{X}_i^\top B)}{\sum_{j=1}^{n+1} q_m(\hat{X}_1^\top A, \hat{X}_j^\top B)} \hat{X}_i^\top  C\\
    &= \frac{q_m(\hat{X}_1^\top A, \hat{X}_1^\top B)}{\sum_{j=1}^{n+1} q_m(\hat{X}_1^\top A, \hat{X}_j^\top B)} \hat{X}_1^\top  C + \sum_{i=2}^{n+1} \frac{q_m(\hat{X}_1^\top A, \hat{X}_i^\top B)}{q_m(\hat{X}_1^\top A, \hat{X}_1^\top B) + \sum_{j=2}^{n+1} q_m(\hat{X}_1^\top A, \hat{X}_j^\top B)} \hat{X}_i^\top  C \\
    &= \frac{1}{n} \frac{q_m(\hat{X}_1^\top A, \hat{X}_1^\top B)}{\frac{1}{n}\sum_{j=1}^{n+1} q_m(\hat{X}_1^\top A, \hat{X}_j^\top B)} \hat{X}_1^\top  C + \frac{1}{n}\sum_{i=2}^{n+1} \frac{q_m(\hat{X}_1^\top A, \hat{X}_i^\top B)}{\frac{1}{n} q_m(\hat{X}_1^\top A, \hat{X}_1^\top B) + \frac{1}{n}\sum_{j=2}^{n+1} q_m(\hat{X}_1^\top A, \hat{X}_j^\top B)} \hat{X}_i^\top  C \\
    &= \frac{1}{n} \frac{q_m(\vx^\top A, \vx^\top B)}{\frac{1}{n}\sum_{j=1}^{n+1} q_m(\vx^\top A, \hat{X}_j^\top B)} x^\top  C + \frac{1}{n}\sum_{i=2}^{n+1} \frac{q_m(\vx^\top A, \hat{X}_i^\top B)}{\frac{1}{n} q_m(\vx^\top A, \vx^\top B) + \frac{1}{n}\sum_{j=2}^{n+1} q_m(\vx^\top A, \hat{X}_j^\top B)} \hat{X}_i^\top  C 
\end{align}
Taking $n\to \infty$ and invoking the law of large numbers, the above converges to a continuous operator $\mathcal{H}(P)(x)$, where
\begin{align}
    \mathcal{H}(P)(x) \coloneqq \E_{\vy \sim P} \frac{q_m(\vx^\top A, \vy^\top B)}{\E_{\vz\sim P} q_m(\vx^\top A, \vz^\top B)} \vy^\top  C =  \E_{\vy \sim P} \frac{\exp{\frac{\langle \vx^\top  A, \vy^\top B \rangle}{\sqrt{m}}}}{\E_{\vz\sim P}\exp{\frac{\langle \vx^\top  A, \vz^\top B \rangle}{\sqrt{m}}}} \vy^\top  C
\end{align}





The result holds for multi-headed attentions as they are compositions of concatenated self-attention outputs and point-wise MLPs.
\end{proof}

\subsection{Theorem~\ref{thm:opt_obj}}

We first present a lemma used in the theorem's proof. It is a standard result in analysis, but we provide a proof for completeness. Our proof partially follows~\cite{stackexchange_proof}.

\begin{lem}\label{lem:measure_zero}
Let $(X,\mathcal{A}, \mu)$ be a measure space. Let $f: X\to \R_{\geq 0}$ be measurable and $A\in \mathcal{A}$ such that $\mu(A) > 0$. Then 
\begin{align*}
    \int_A f(\vx)d\mu(\vx) = 0 \iff f = 0, \mu\text{-a.e. in } A,
\end{align*}
\end{lem}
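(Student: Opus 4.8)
The plan is to prove the two directions of the equivalence separately, the reverse direction being immediate and the forward direction proceeding by contraposition via a standard decomposition of the superlevel sets of $f$.

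For the ``$\Leftarrow$'' direction, suppose $f = 0$ $\mu$-a.e.\ in $A$, so that $N \coloneqq \{\vx \in A : f(\vx) \neq 0\}$ has $\mu(N) = 0$. Then $\int_A f \, d\mu = \int_{A \setminus N} f \, d\mu + \int_N f \, d\mu = 0 + 0 = 0$, using that $f$ vanishes on $A \setminus N$ and that the integral over a null set is zero (which holds for nonnegative measurable $f$ by monotone convergence applied to $f \1_N$, or directly from the definition of the Lebesgue integral).

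For the ``$\Rightarrow$'' direction, I would argue by contraposition: assume $f$ is \emph{not} zero $\mu$-a.e.\ in $A$, i.e.\ $\mu(\{\vx \in A : f(\vx) > 0\}) > 0$, and show $\int_A f\, d\mu > 0$. Write the positivity set as a countable increasing union,
\begin{align*}
    \{\vx \in A : f(\vx) > 0\} = \bigcup_{k=1}^\infty A_k, \qquad A_k \coloneqq \Bigl\{\vx \in A : f(\vx) \geq \tfrac{1}{k}\Bigr\}.
\end{align*}
By continuity of measure from below, $\mu(A_k) \to \mu(\{\vx \in A : f(\vx) > 0\}) > 0$, so there exists $k$ with $\mu(A_k) > 0$. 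Then, by monotonicity of the integral and nonnegativity of $f$,
\begin{align*}
    \int_A f\, d\mu \;\geq\; \int_{A_k} f\, d\mu \;\geq\; \int_{A_k} \frac{1}{k}\, d\mu \;=\; \frac{\mu(A_k)}{k} \;>\; 0,
\end{align*}
which gives the contrapositive and completes the proof.

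There is no real obstacle here; the only point requiring a little care is the measurability of the sets $A_k$ (immediate since $f$ is measurable) and the legitimacy of the limiting argument, which is just continuity from below of $\mu$ together with the fact that the sets $A_k$ increase to the positivity set of $f$ on $A$. I would present the forward direction in the contrapositive form above rather than attempting a direct argument, since bounding $\int_A f$ from below is exactly what the superlevel-set decomposition delivers cleanly.
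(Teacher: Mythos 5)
Your proof is correct and follows essentially the same route as the paper's: both decompose the positivity set of $f$ on $A$ into the superlevel sets $\{f \geq 1/k\}$ (the paper uses $\{f > 1/n\}$), extract one of positive measure (you via continuity from below, the paper via countable sub-additivity --- interchangeable here), and bound the integral below by $\mu(A_k)/k$, with the reverse direction handled by the same splitting over a null set. No issues.
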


\begin{proof}

For the forward direction, let $E_n = \{\vx\in A | f(\vx) > \frac{1}{n}\}$ and $E = \bigcup_{n=1}^\infty E_n$. Then $f(\vx) = 0, \forall \vx \in A\backslash E$. If $\mu(E) > 0$, then by the sub-additivity of measures, $\exists n\in \mathbb{N}$ s.t. $\mu(E_n) > 0$. Hence,
\begin{align*}
    \int_A f(\vx)d\mu(\vx) \geq \frac{1}{n}\mu(E_n) > 0
\end{align*}
For the reverse direction, suppose $f(\vx) > 0, \forall \vx\in C\subseteq A$ s.t. $\mu(C) = 0$, then
\begin{align*}
    \int_A f(\vx)d\mu(x) = \int_C f(\vx)d\mu(x) + \int_{A\backslash C} f(\vx)d\mu(\vx) = 0
\end{align*}
\end{proof}

We now prove the theorem.
\begin{proof}
Let $B = \{(P_0,P_1) | \Gs^*(P_0,P_1) > \argmin_F \Ls(P_0, P_1, F)\} \subseteq \Ps(\R^d) \times \Ps(\R^d)$ and denote $B^c = \Ps(\R^d) \times \Ps(\R^d) \backslash B$. Consider another operator $\hat{\Gs}$ such that $\hat{\Gs}\rvert_{B^c} = \Gs^*\rvert_{B^c}$ and $\hat{\Gs}(P_0,P_1) \in \argmin_F \Ls(P_0,P_1,F), \forall (P_0,P_1) \in B$. 

Now suppose $\mu(B) > 0$, then we have:
\begin{equation}\label{training_obj_proof_pt1}
\begin{split}
    \E_{(P_0, P_1) \sim \mu} \Ls_s (P_0, P_1, \Gs^*(P_0,P_1)) &= \int_{\Ps(\R^d) \times \Ps(\R^d)} \Ls_s(P_0,P_1, \Gs^*(P_0,P_1)) d\mu(P_0,P_1)\\
    &= \int_{B} \Ls_s(P_0,P_1, \Gs^*(P_0,P_1)) d\mu(P_0,P_1) + \int_{B^c} \Ls_s(P_0,P_1, \Gs^*(P_0,P_1)) d\mu(P_0,P_1)\\
    &> \int_{B} \Ls_s(P_0,P_1, \hat{\Gs}(P_0,P_1)) d\mu(P_0,P_1) + \int_{B^c} \Ls_s(P_0,P_1, \hat{\Gs}(P_0,P_1)) d\mu(P_0,P_1)\\
    &= \E_{(P_0, P_1) \sim \mu} \Ls_s(P_0,P_1, \hat{\Gs}(P_0,P_1)),
\end{split}
\end{equation}
where the inequality follows from Lemma~\eqref{lem:measure_zero}. 

The derivation shows that $\Gs^* \notin \argmin_{\Gs} \E_{(P_0, P_1) \sim \mu} \Ls_s (P_0, P_1, \Gs(P_0,P_1))$, a contradiction. Therefore, $\mu(B) = 0$, and $\Gs(P_0,P_1) = \hat{\Gs}(P_0,P_1) \in \argmin_F \Ls(P_0,P_1,F)$ almost everywhere in the sense of $\mu$.

\end{proof}

\subsection{Proposition \ref{prop:gaussian_opt_soln}}

    

We first state and prove the following lemma:

\begin{lem}\label{lem:MFG_W2}
The optimizer and optimal value for the interaction-free MFG
\begin{align}
    \inf_{T} \quad \int_{\R^d}  \| T(\vx) - \vx\|_2^2 &p_0(\vx)\der\vx + \lambda \mathcal{M}(T_*P_0)
\end{align}
are identical to those of the following problem
\begin{align}
\inf_{T} \quad \mathbb{W}^2_2 (P_0, T_*P_0) + \lambda \Ms(T_*P_0)
\end{align}
where $\mathbb{W}_2$ denotes the Wasserstein-2 distance~\cite{OT_book}. 

\begin{proof}
    By the celebrated Brenier's theorem~\cite{Brenier_thm, Villani_OT}, we know that any mapping $T:\R^d \to \R^d$ such that $T = \nabla \phi, \phi $ convex is the OT Monge map between $P_0$ and $T_*P_0$. That is,
\begin{align}
   \mathbb{W}^2_2(P_0, T_*P_0) =  \int_{\R^d}  \| T(\vx) - \vx\|_2^2 &p_0(\vx)\der\vx \text{ s.t. } T = \nabla \phi, \phi \text{ convex} 
\end{align}

Adding $\lambda \Ms(T_*P_0)$ to both sides and take the infimum over $T$, we have:

\begin{align}
    \inf_T \quad \int_{\R^d}  &\| T(\vx) - \vx\|_2^2 p_0(\vx)\der\vx + \lambda \Ms(T_*P_0)= \inf_T \quad \mathbb{W}^2_2(P_0, T_*P_0) + \lambda \Ms(T_*P_0)\\
    &\text{s.t. } T = \nabla \phi, \phi \text{ convex}
\end{align}

Lastly, note the problem on the left is equivalent to the same problem without its constraint, because the unconstrained problem is an interaction-free MFG and thus identical to a related OT problem. Thus, its solution obeys the constraint $T = \nabla \phi, \phi $ convex without explicit enforcement~\cite{OT_book}.  
    
\end{proof}

\end{lem}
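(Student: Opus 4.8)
The plan is to show the two problems share an optimal value and a common minimizer by combining the variational (Kantorovich) characterization of the Wasserstein-2 distance with Brenier's theorem. The starting observation is that \emph{any} transport map $T:\R^d\to\R^d$ induces the coupling $(\mathrm{id},T)_*P_0$ of $P_0$ and $T_*P_0$, whose quadratic cost is exactly the transport term in the MFG. Since $\mathbb{W}_2^2(P_0,T_*P_0)$ is the infimum of quadratic cost over \emph{all} couplings, this one coupling furnishes the pointwise bound
\[
    \int_{\R^d}\|T(\vx)-\vx\|_2^2\,p_0(\vx)\der\vx \;\geq\; \mathbb{W}_2^2(P_0,T_*P_0).
\]
Adding $\lambda\Ms(T_*P_0)$ to both sides and taking the infimum over $T$ shows the MFG value is at least the Wasserstein-penalized value.

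For the reverse inequality I would fix an arbitrary $T$, set $Q=T_*P_0$, and invoke Brenier's theorem: because $P_0$ is absolutely continuous (it has density $p_0$), there is a map $\tilde T=\nabla\phi$ with $\phi$ convex such that $\tilde T_*P_0=Q$ and $\int\|\tilde T(\vx)-\vx\|_2^2\,p_0(\vx)\der\vx=\mathbb{W}_2^2(P_0,Q)$. As $\tilde T$ and $T$ have the same pushforward, the terminal term agrees, $\Ms(\tilde T_*P_0)=\Ms(T_*P_0)$, so the MFG objective at $\tilde T$ equals the Wasserstein-penalized objective at $T$. Taking infima delivers the matching upper bound, hence equality of the two optimal values.

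To match the minimizers I would examine the equality case in the displayed bound. If $T^*$ minimizes the MFG with common value $V$, then
\[
    V=\int_{\R^d}\|T^*(\vx)-\vx\|_2^2\,p_0(\vx)\der\vx+\lambda\Ms((T^*)_*P_0)\;\geq\;\mathbb{W}_2^2(P_0,(T^*)_*P_0)+\lambda\Ms((T^*)_*P_0)\;\geq\;V,
\]
so both inequalities are equalities. The first forces $T^*$ to realize the Wasserstein distance to its own pushforward, hence $T^*=\nabla\phi$ is a Brenier map; the second shows $T^*$ also minimizes the Wasserstein-penalized problem. Conversely, replacing any minimizer of the second problem by its (unique) Brenier map to the same target leaves the objective unchanged and produces a minimizer of the first, so the two problems are solved by the same map.

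The crux is the Monge--Kantorovich equivalence: that the infimum of quadratic transport cost over maps pushing $P_0$ onto a fixed target equals $\mathbb{W}_2^2$ and is attained by a gradient-of-convex map. This is precisely Brenier's theorem and hinges on $P_0$ being absolutely continuous with respect to Lebesgue measure, which holds in every instance considered here (e.g.\ Gaussian marginals). I would state this regularity hypothesis explicitly rather than, as the source sketch does, appeal to the interaction-free-MFG/OT equivalence, since invoking that equivalence to justify the present lemma is at risk of circularity.
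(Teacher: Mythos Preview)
Your argument is correct and rests on the same key ingredient as the paper's proof, namely Brenier's theorem applied to the absolutely continuous source $P_0$. The organization differs: the paper first restricts to maps $T=\nabla\phi$ with $\phi$ convex, where the two objectives coincide pointwise, and then removes the constraint by invoking the interaction-free MFG/OT equivalence to argue that the unconstrained MFG optimizer is automatically of Brenier form. You instead run a direct two-sided inequality: the easy direction uses that any $T$ induces a coupling, and the harder direction replaces an arbitrary $T$ by the Brenier map to the same pushforward. Your route is slightly more self-contained and your remark about circularity is fair, since the paper's appeal to the MFG/OT equivalence to drop the convexity constraint is exactly the kind of structural fact this lemma is meant to underpin; that said, the paper's step can also be justified non-circularly by observing that the Wasserstein-penalized objective depends on $T$ only through $T_*P_0$, so restricting to Brenier maps loses nothing on that side either.
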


Now, we proceed to prove the original proposition~\eqref{prop:gaussian_opt_soln}.

\begin{proof}
    By Lemma~\eqref{lem:MFG_W2}, our MFG problem with the linear MMD kernel is
\begin{align}
    \inf_T \quad \mathbb{W}^2_2(P_0, T_*P_0) + \lambda \|\vm_T - \vm\|^2_2
\end{align}

where $\vm_T \coloneqq \E_{x\sim P_0}[T(x)]$ is the mean of the pushforward distribution. Intuitively, if we consider all distributions whose mean is a given distance away from $m_1$, the one having the least transport cost to $P_0$ should simply be a shifted version of $P_0$. To make this rigorous, we recast the above problem as optimizing over distributions $P$:
\begin{align}
    \inf_P \quad \mathbb{W}^2_2(P_0, P) + \lambda \|\vm_P - \vm\|^2_2
\end{align}
where $\vm_P = \E_{\vx\sim P}[\vx]$. This is justified since the pushforward action can transport between any two absolutely continuous distributions~\cite{Probability_foundations}. In addition, it is known that the Wasserstein-2 distance between two distributions can be decomposed into squared distance between their means and the Wasserstein-2 distance between the centered distributions~\cite{OT_book}. Noting that $P_0$ in our setup is already centered, the above problem is equivalent to:
\begin{align}
    &\inf_{\hat{P}, \vm_P} \quad \|0 - \vm_P\|_2^2 +  \mathbb{W}^2_2(P_0, \hat{P}) + \lambda \|\vm_P - \vm\|^2_2 \\
    = &\inf_{\vm_P} \quad \|\vm_P\|_2^2 + \lambda \|\vm_P - \vm\|^2_2
\end{align}

Taking a gradient to solve for the critical points, we obtain $\vm_P^* = \frac{\lambda}{1+\lambda} \vm$. This means the Monge map is $T^*(\vx) = \vx + \vm_P^* = \vx + \frac{\lambda}{1+\lambda} \vm$. Substitute it back to the MFG, we have the optimal value: $v^* = \frac{\lambda}{1+\lambda}\|\vm\|^2_2$

\end{proof}

\subsection{Proposition \ref{prop:gaussian_Monge_est_err}}\label{app:gaussian_Monge_est_err}

    

\begin{proof}
\begin{equation}
\begin{split}
    \E_{\vx} \|\Bar{T}(\vx) - T^*(\vx)\|^2_2 &= \E_{\vx} \|(\vx+\frac{\lambda}{1+\lambda}\Bar{\vx}) - (x+\frac{\lambda}{1+\lambda}\vm)\|^2_2 = (\frac{\lambda}{1+\lambda})^2\E_{\vx} \|\Bar{\vx}-\vm\|^2_2 \\
    &= (\frac{\lambda}{1+\lambda})^2(\E_{\vx} [\|\Bar{\vx}\|^2 - 2\Bar{\vx}^\top  \vm + \|\vm\|^2_2]) \\
    &= (\frac{\lambda}{1+\lambda})^2(\E \|\Bar{\vx}\|^2_2 - 2 \E[\Bar{\vx}^\top ]\vm + \|\vm\|^2_2)\\
    &= (\frac{\lambda}{1+\lambda})^2(\E \|\Bar{\vx}\|^2_2 - \|\vm\|^2_2)
\end{split}
\end{equation}

We have
\begin{equation}
\begin{split}
    \E \|\Bar{\vx}\|^2_2 &= \E \|\frac{1}{n}\sum_i x_i\|^2_2 = \frac{1}{n^2} \E \|\sum_i x_i\|^2\\
    &= \frac{1}{n^2} \E (\sum_i \|x_i\|^2 + \sum_{i\ne j} x_i^\top x_j)\\
    \text{Note that } x_i &\sim N(m, \sigma^2 I) \implies x_{ij} \sim N(m_j, \sigma^2), \text{ and } \|x_i\|^2_2 = \sum_{j=1}^d x_{ij}^2\\
    &= \frac{1}{n^2} \E (\sum_{i,j} x_{ij}^2 + \sum_{i\ne j} x_i^\top x_j)\\
    &= \frac{1}{n^2} \E [\sum_{i,j} x_{ij}^2] + \frac{1}{n^2} \E [\sum_{i\ne j} x_i^\top x_j]\\
    &= \frac{1}{n^2} \sum_{i,j} \E [x_{ij}^2] + \frac{1}{n^2} \sum_{i\ne j} \E[x_i]^\top \E[x_j]\\
    &= \frac{1}{n^2} n \sum_j \E[x_{ij}^2] + \frac{1}{n^2} \sum_{i\ne j} m^\top m\\
    \text{Use: } &\E[X^2] = Var(X) + \E[X]^2 \implies \E[x_{ij}^2] = \sigma^2 + m_j^2\\
    &= \frac{1}{n^2} n \sum_j \sigma^2 + m_j^2 + \frac{1}{n^2} \sum_{i \ne j } \|\vm\|^2_2\\
    &= \frac{d\sigma^2 + \|\vm\|^2_2}{n} + \frac{n-1}{n} \|\vm\|^2_2\\
    &= \frac{d\sigma^2}{n} + \|\vm\|^2_2
\end{split}
\end{equation}

Therefore,
\begin{equation}
\begin{split}
    \E_{\vx} \|\Bar{T}(\vx) - T^*(\vx)\|^2_2 &= (\frac{\lambda}{1+\lambda})^2(\E \|\Bar{\vx}\|^2_2 - \|\vm\|^2_2)\\
    &= (\frac{\lambda}{1+\lambda})^2(\frac{d\sigma^2}{n} + \|\vm\|^2_2 - \|\vm\|^2_2)\\
    &= (\frac{\lambda}{1+\lambda})^2 \frac{d\sigma^2}{n}
\end{split}
\end{equation}

We also have
\begin{equation}
\begin{split}
    \E_{x\sim P_0} \|T^*(\vx)\|^2_2 &= \E[\|\vx+\frac{\lambda}{1+\lambda}\vm\|^2_2]\\
    &= \E[\|\vx\|^2] + 2\frac{\lambda}{1+\lambda}\E[\vx^\top \vm] + (\frac{\lambda}{1+\lambda})^2\|\vm\|^2_2\\
    &= \E[\sum_j x_j^2] + 2\frac{\lambda}{1+\lambda}\E[\vx]^\top \vm + (\frac{\lambda}{1+\lambda})^2\|\vm\|^2_2\\
    &= \sum_j \E[ x_j^2] + [2\frac{\lambda}{1+\lambda} + (\frac{\lambda}{1+\lambda})^2]\|\vm\|^2_2\\
    &= \sum_j \sigma^2 + m_j^2 + [2\frac{\lambda}{1+\lambda} + (\frac{\lambda}{1+\lambda})^2]\|\vm\|^2_2\\
    &= d\sigma^2 + [1+2\frac{\lambda}{1+\lambda} + (\frac{\lambda}{1+\lambda})^2]\|\vm\|^2_2\\
    &= d\sigma^2 + (1+ \frac{\lambda}{1+\lambda})^2\|\vm\|^2_2
\end{split}
\end{equation}

Taking their ratio completes the proof.
\end{proof}

\section{Conclusion}
\label{sec:conclusion}
In this work, we introduce a pioneering methodology to parameterize MFG solution operators. By training on a distribution of MFG configurations, our model effectively leverages shared characteristics between different MFG instances to output solutions of unseen problems with a single forward pass. Our contribution extends further by formalizing and prove the concept of sampling-invariance for our parametrization, thereby establishing its suitability for operator learning. Notably, our approach's ability to train without access to ground truth labels, coupled with its scalability to high dimensions, endows it with versatility across a broad spectrum of practical scenarios.

\bibliography{references}

\begin{thebibliography}{10}

\bibitem{pcMNIST}
Point cloud mnist 2d.
\newblock \url{https://www.kaggle.com/datasets/cristiangarcia/pointcloudmnist2d/data}.
\newblock Accessed: 2024-01-03.

\bibitem{stackexchange_proof}
Stackexchange post.
\newblock \url{https://math.stackexchange.com/questions/1772239/int-f-0-implies-f-0-almost-everywhere}.
\newblock Accessed: 2024-01-09.

\bibitem{MFG_econ1}
Yves Achdou, Francisco~J Buera, Jean-Michel Lasry, Pierre-Louis Lions, and Benjamin Moll.
\newblock Partial differential equation models in macroeconomics.
\newblock {\em Philosophical Transactions of the Royal Society A: Mathematical, Physical and Engineering Sciences}, 372(2028):20130397, 2014.

\bibitem{achdou2010mean}
Yves Achdou and Italo Capuzzo-Dolcetta.
\newblock Mean field games: numerical methods.
\newblock {\em SIAM Journal on Numerical Analysis}, 48(3):1136--1162, 2010.

\bibitem{MFG_econ2}
Yves Achdou, Jiequn Han, Jean-Michel Lasry, Pierre-Louis Lions, and Benjamin Moll.
\newblock Income and wealth distribution in macroeconomics: A continuous-time approach.
\newblock {\em The review of economic studies}, 89(1):45--86, 2022.

\bibitem{meta_OT}
Brandon Amos, Samuel Cohen, Giulia Luise, and Ievgen Redko.
\newblock Meta optimal transport.
\newblock {\em arXiv preprint arXiv:2206.05262}, 2022.

\bibitem{moore_aronszajn}
Nachman Aronszajn.
\newblock Theory of reproducing kernels.
\newblock {\em Transactions of the American mathematical society}, 68(3):337--404, 1950.

\bibitem{layernorm}
Jimmy~Lei Ba, Jamie~Ryan Kiros, and Geoffrey~E Hinton.
\newblock Layer normalization.
\newblock {\em arXiv preprint arXiv:1607.06450}, 2016.

\bibitem{benamou2000computational}
Jean-David Benamou and Yann Brenier.
\newblock A computational fluid mechanics solution to the monge-kantorovich mass transfer problem.
\newblock {\em Numerische Mathematik}, 84(3):375--393, 2000.

\bibitem{benamou2014augmented}
Jean-David Benamou and Guillaume Carlier.
\newblock Augmented lagrangian methods for transport optimization, mean-field games and degenerate pdes.
\newblock {\em Journal of Optimization Theory and Applications}, 167:1--26, 2015.

\bibitem{benamou2017variational}
Jean-David Benamou, Guillaume Carlier, and Filippo Santambrogio.
\newblock Variational mean field games.
\newblock In {\em Active Particles, Volume 1}, pages 141--171. Springer, 2017.

\bibitem{Brenier_thm}
Yann Brenier.
\newblock Polar factorization and monotone rearrangement of vector-valued functions.
\newblock {\em Communications on pure and applied mathematics}, 44(4):375--417, 1991.

\bibitem{cuturi2013sinkhorn}
Marco Cuturi.
\newblock Sinkhorn distances: Lightspeed computation of optimal transport.
\newblock {\em Advances in neural information processing systems}, 26:2292--2300, 2013.

\bibitem{MMD_net}
Gintare~Karolina Dziugaite, Daniel~M Roy, and Zoubin Ghahramani.
\newblock Training generative neural networks via maximum mean discrepancy optimization.
\newblock {\em arXiv preprint arXiv:1505.03906}, 2015.

\bibitem{manifold_hypo}
Charles Fefferman, Sanjoy Mitter, and Hariharan Narayanan.
\newblock Testing the manifold hypothesis.
\newblock {\em Journal of the American Mathematical Society}, 29(4):983--1049, 2016.

\bibitem{characteristic_kernel}
Kenji Fukumizu, Arthur Gretton, Xiaohai Sun, and Bernhard Sch{\"o}lkopf.
\newblock Kernel measures of conditional dependence.
\newblock {\em Advances in neural information processing systems}, 20, 2007.

\bibitem{MMD_kernel_test}
Arthur Gretton, Karsten~M Borgwardt, Malte~J Rasch, Bernhard Sch{\"o}lkopf, and Alexander Smola.
\newblock A kernel two-sample test.
\newblock {\em The Journal of Machine Learning Research}, 13(1):723--773, 2012.

\bibitem{MFG_game_thry}
Olivier Gu{\'e}ant, Jean-Michel Lasry, and Pierre-Louis Lions.
\newblock Mean field games and applications.
\newblock In {\em Paris-Princeton lectures on mathematical finance 2010}, pages 205--266. Springer, 2011.

\bibitem{Learning_MFG}
Xin Guo, Anran Hu, Renyuan Xu, and Junzi Zhang.
\newblock Learning mean-field games.
\newblock {\em Advances in Neural Information Processing Systems}, 32, 2019.

\bibitem{hypernetwork2}
David Ha, Andrew~M. Dai, and Quoc~V. Le.
\newblock Hypernetworks.
\newblock {\em CoRR}, abs/1609.09106, 2016.

\bibitem{GELU}
Dan Hendrycks and Kevin Gimpel.
\newblock Gaussian error linear units (gelus).
\newblock {\em arXiv preprint arXiv:1606.08415}, 2016.

\bibitem{MFG_NF}
Han Huang, Jiajia Yu, Jie Chen, and Rongjie Lai.
\newblock Bridging mean-field games and normalizing flows with trajectory regularization.
\newblock {\em Journal of Computational Physics}, 487:112155, 2023.

\bibitem{huang2007large}
Minyi Huang, Peter~E Caines, and Roland~P Malham{\'e}.
\newblock Large-population cost-coupled lqg problems with nonuniform agents: individual-mass behavior and decentralized $\varepsilon$-nash equilibria.
\newblock {\em IEEE transactions on automatic control}, 52(9):1560--1571, 2007.

\bibitem{huang2006large}
Minyi Huang, Roland~P Malham{\'e}, Peter~E Caines, et~al.
\newblock Large population stochastic dynamic games: closed-loop mckean-vlasov systems and the nash certainty equivalence principle.
\newblock {\em Communications in Information \& Systems}, 6(3):221--252, 2006.

\bibitem{jacobs2019solving}
Matt Jacobs, Flavien L{\'e}ger, Wuchen Li, and Stanley Osher.
\newblock Solving large-scale optimization problems with a convergence rate independent of grid size.
\newblock {\em SIAM Journal on Numerical Analysis}, 57(3):1100--1123, 2019.

\bibitem{MFG_industry2}
Yanxiang Jiang, Yabai Hu, Mehdi Bennis, Fu-Chun Zheng, and Xiaohu You.
\newblock A mean field game-based distributed edge caching in fog radio access networks.
\newblock {\em IEEE Transactions on Communications}, 68(3):1567--1580, 2019.

\bibitem{Probability_foundations}
Olav Kallenberg and Olav Kallenberg.
\newblock {\em Foundations of modern probability}, volume~2.
\newblock Springer, 1997.

\bibitem{ADAM}
Diederik~P Kingma and Jimmy Ba.
\newblock Adam: A method for stochastic optimization.
\newblock {\em arXiv preprint arXiv:1412.6980}, 2014.

\bibitem{NF_survey}
Ivan Kobyzev, Simon Prince, and Marcus Brubaker.
\newblock Normalizing flows: An introduction and review of current methods.
\newblock {\em IEEE Transactions on Pattern Analysis and Machine Intelligence}, 2020.

\bibitem{W2GAN}
Alexander Korotin, Vage Egiazarian, Arip Asadulaev, Alexander Safin, and Evgeny Burnaev.
\newblock Wasserstein-2 generative networks.
\newblock {\em arXiv preprint arXiv:1909.13082}, 2019.

\bibitem{neural_op}
Nikola Kovachki, Zongyi Li, Burigede Liu, Kamyar Azizzadenesheli, Kaushik Bhattacharya, Andrew Stuart, and Anima Anandkumar.
\newblock Neural operator: Learning maps between function spaces.
\newblock {\em arXiv preprint arXiv:2108.08481}, 2021.

\bibitem{MFG_varMFG}
Jean-Michel Lasry and Pierre-Louis Lions.
\newblock Mean field games.
\newblock {\em Japanese journal of mathematics}, 2(1):229--260, 2007.

\bibitem{FNO}
Zongyi Li, Nikola Kovachki, Kamyar Azizzadenesheli, Burigede Liu, Kaushik Bhattacharya, Andrew Stuart, and Anima Anandkumar.
\newblock Fourier neural operator for parametric partial differential equations.
\newblock {\em arXiv preprint arXiv:2010.08895}, 2020.

\bibitem{MGNO}
Zongyi Li, Nikola Kovachki, Kamyar Azizzadenesheli, Burigede Liu, Andrew Stuart, Kaushik Bhattacharya, and Anima Anandkumar.
\newblock Multipole graph neural operator for parametric partial differential equations.
\newblock {\em Advances in Neural Information Processing Systems}, 33:6755--6766, 2020.

\bibitem{Tong_APACNet}
Alex~Tong Lin, Samy~Wu Fung, Wuchen Li, Levon Nurbekyan, and Stanley~J. Osher.
\newblock Alternating the population and control neural networks to solve high-dimensional stochastic mean-field games.
\newblock {\em Proceedings of the National Academy of Sciences}, 118(31):e2024713118, 2021.

\bibitem{MFG_GAN}
Alex~Tong Lin, Samy~Wu Fung, Wuchen Li, Levon Nurbekyan, and Stanley~J Osher.
\newblock Alternating the population and control neural networks to solve high-dimensional stochastic mean-field games.
\newblock {\em Proceedings of the National Academy of Sciences}, 118(31):e2024713118, 2021.

\bibitem{schodinger_bridge_MFG}
Guan-Horng Liu, Tianrong Chen, Oswin So, and Evangelos Theodorou.
\newblock Deep generalized schr{\"o}dinger bridge.
\newblock {\em Advances in Neural Information Processing Systems}, 35:9374--9388, 2022.

\bibitem{MFG_industry1}
Zhiyu Liu, Bo~Wu, and Hai Lin.
\newblock A mean field game approach to swarming robots control.
\newblock In {\em 2018 Annual American Control Conference (ACC)}, pages 4293--4298. IEEE, 2018.

\bibitem{MMD_survey}
Zijian Liu, Ruoyi Wang, and Hanyu Zhou.
\newblock A primer on maximum mean discrepancy and its applications.

\bibitem{cos_LR}
Ilya Loshchilov and Frank Hutter.
\newblock Sgdr: Stochastic gradient descent with warm restarts.
\newblock {\em arXiv preprint arXiv:1608.03983}, 2016.

\bibitem{ICNN_OT}
Ashok Makkuva, Amirhossein Taghvaei, Sewoong Oh, and Jason Lee.
\newblock Optimal transport mapping via input convex neural networks.
\newblock In {\em International Conference on Machine Learning}, pages 6672--6681. PMLR, 2020.

\bibitem{papadakis2014optimal}
Nicolas Papadakis, Gabriel Peyr{\'e}, and Edouard Oudet.
\newblock Optimal transport with proximal splitting.
\newblock {\em SIAM Journal on Imaging Sciences}, 7(1):212--238, 2014.

\bibitem{pytorch}
Adam Paszke, Sam Gross, Francisco Massa, Adam Lerer, James Bradbury, Gregory Chanan, Trevor Killeen, Zeming Lin, Natalia Gimelshein, Luca Antiga, et~al.
\newblock Pytorch: An imperative style, high-performance deep learning library.
\newblock {\em Advances in neural information processing systems}, 32, 2019.

\bibitem{MFG_master_policy}
Sarah Perrin, Mathieu Lauri{\`e}re, Julien P{\'e}rolat, Romuald {\'E}lie, Matthieu Geist, and Olivier Pietquin.
\newblock Generalization in mean field games by learning master policies.
\newblock In {\em Proceedings of the AAAI Conference on Artificial Intelligence}, volume~36, pages 9413--9421, 2022.

\bibitem{OT_book}
Gabriel Peyr{\'e} and Marco Cuturi.
\newblock Computational optimal transport: With applications to data science.
\newblock {\em Foundations and Trends in Machine Learning}, 11(5-6):355--607, 2019.

\bibitem{PointNet}
Charles~R Qi, Hao Su, Kaichun Mo, and Leonidas~J Guibas.
\newblock Pointnet: Deep learning on point sets for 3d classification and segmentation.
\newblock In {\em Proceedings of the IEEE conference on computer vision and pattern recognition}, pages 652--660, 2017.

\bibitem{PointNet++}
Charles~Ruizhongtai Qi, Li~Yi, Hao Su, and Leonidas~J Guibas.
\newblock Pointnet++: Deep hierarchical feature learning on point sets in a metric space.
\newblock {\em Advances in neural information processing systems}, 30, 2017.

\bibitem{PINN}
Maziar Raissi, Paris Perdikaris, and George~E Karniadakis.
\newblock Physics-informed neural networks: A deep learning framework for solving forward and inverse problems involving nonlinear partial differential equations.
\newblock {\em Journal of Computational physics}, 378:686--707, 2019.

\bibitem{NN_MFP}
Lars Ruthotto, Stanley~J Osher, Wuchen Li, Levon Nurbekyan, and Samy~Wu Fung.
\newblock A machine learning framework for solving high-dimensional mean field game and mean field control problems.
\newblock {\em Proceedings of the National Academy of Sciences}, 117(17):9183--9193, 2020.

\bibitem{CAE}
Stefan Schonsheck, Jie Chen, and Rongjie Lai.
\newblock Chart auto-encoders for manifold structured data.
\newblock {\em arXiv preprint arXiv:1912.10094}, 2019.

\bibitem{pc_OT_reg}
Zhengyang Shen, Jean Feydy, Peirong Liu, Ariel~H Curiale, Ruben San Jose~Estepar, Raul San Jose~Estepar, and Marc Niethammer.
\newblock Accurate point cloud registration with robust optimal transport.
\newblock {\em Advances in Neural Information Processing Systems}, 34:5373--5389, 2021.

\bibitem{hypernetwork}
Kenneth~O Stanley, David~B D'Ambrosio, and Jason Gauci.
\newblock A hypercube-based encoding for evolving large-scale neural networks.
\newblock {\em Artificial life}, 15(2):185--212, 2009.

\bibitem{Attention}
Ashish Vaswani, Noam Shazeer, Niki Parmar, Jakob Uszkoreit, Llion Jones, Aidan~N Gomez, {\L}ukasz Kaiser, and Illia Polosukhin.
\newblock Attention is all you need.
\newblock {\em Advances in neural information processing systems}, 30, 2017.

\bibitem{Villani_OT}
C{\'e}dric Villani et~al.
\newblock {\em Optimal transport: old and new}, volume 338.
\newblock Springer, 2009.

\bibitem{ICON}
Liu Yang, Siting Liu, Tingwei Meng, and Stanley~J Osher.
\newblock In-context operator learning with data prompts for differential equation problems.
\newblock {\em Proceedings of the National Academy of Sciences}, 120(39):e2310142120, 2023.

\bibitem{foldingnet}
Yaoqing Yang, Chen Feng, Yiru Shen, and Dong Tian.
\newblock Foldingnet: Point cloud auto-encoder via deep grid deformation.
\newblock In {\em Proceedings of the IEEE conference on computer vision and pattern recognition}, pages 206--215, 2018.

\bibitem{deep_ritz}
Bing Yu et~al.
\newblock The deep ritz method: a deep learning-based numerical algorithm for solving variational problems.
\newblock {\em Communications in Mathematics and Statistics}, 6(1):1--12, 2018.

\bibitem{yu2023computational}
Jiajia Yu, Rongjie Lai, Wuchen Li, and Stanley Osher.
\newblock Computational mean-field games on manifolds.
\newblock {\em Journal of Computational Physics}, 484:112070, 2023.

\bibitem{yu2021fast}
Jiajia Yu, Rongjie Lai, Wuchen Li, and Stanley Osher.
\newblock A fast proximal gradient method and convergence analysis for dynamic mean field planning.
\newblock {\em Mathematics of Computation}, 93(346):603--642, 2024.

\bibitem{MFG_industry3}
Yaomin Zhang, Haijun Zhang, and Keping Long.
\newblock Energy efficient resource allocation in cache based terahertz vehicular networks: a mean-field game approach.
\newblock {\em IEEE Transactions on Vehicular Technology}, 70(6):5275--5285, 2021.

\end{thebibliography}
\bibliographystyle{plain}

\newpage

\appendix

\section{Architecture}\label{app:architecture}

Let $X_0, X_1 \in \R^{n\times d}$ be row-stacked iid samples from $P_0, P_1$, respectively. For an interaction-free MFG, let $y\in \R^d$ be the evaluation result of the output function at $\vx\in \R^d$ for our parametrization $G_\theta(X_0,X_1)$, i.e., $y\coloneqq G_\theta(X_0, X_1)(\vx)$, we have:
\begin{align*}
X_0 &\gets \begin{pmatrix} \vx^\top\\ X_0\end{pmatrix}\\
X_0' &\gets \mathrm{MLP_0}(X_0) \\
X_1' &\gets \mathrm{MLP_1}(X_1) \\
X' &\gets (X_0', X_1')  \\
X' &\xleftarrow{\times N} \mathrm{MHT}(X') \\
y &\gets [X']_{1,:},
\end{align*}
where $\mathrm{MLP_0}, \mathrm{MLP_1}$ are two distinct MLPs. Unless specified otherwise, all MLPs have one hidden layer with width $2048$ and the GELU\cite{GELU} activation. A dropout layer is also added before the hidden layer. In addition, $X' \in \R^{(2n+1)\times h}$, and $[X']_{1,:}$ means taking the first row of $X'$. For all experiments, we use $N=2$ attention blocks with $m=4$ heads each. 

For MFG with dynamics, let $\hat{y}\in \R^d$ be the evaluation result of the output function at $(\vx,t)\in \R^d \times \R$ for $\hat{G}_\theta(X_0,X_1)$, i.e., $y\coloneqq \hat{G}_\theta(X_0, X_1)(\vx,t)$.
\begin{align*}
[X_0]_{i,:} &\gets ([X_0]_{i,:}, t), i=1,...,n\\
[X_1]_{i,:} &\gets ([X_1]_{i,:}, t), i=1,...,n\\
X_0 &\gets \begin{pmatrix} (\vx^\top,t)\\ X_0\end{pmatrix}\\
X_0' &\gets \mathrm{MLP_0}(X_0) \\
X_1' &\gets \mathrm{MLP_1}(X_1) \\
X' &\gets (X_0', X_1')  \\
X' &\xleftarrow{\times N} \mathrm{MHT}(X') \\
\hat{y} &\gets [X']_{1,:},
\end{align*}

Finally, we put $G_\theta(X_0, X_1)(\vx,t) = \hat{G}_\theta(X_0, X_1)(\vx,t) - \hat{G}_\theta(X_0, X_1)(\vx,0)+ x $ to enforce $G_\theta(X_0, X_1)(\vx,0) = \vx$. Note that the optimal MFG trajectories may have spatially dependent dynamics, so we cannot use decoupled parametrizations such as $G_\theta(X_0, X_1)(\vx,t) = (1-f_{\theta_1}(t))\vx + f_{\theta_1}(t)g_{\theta_2}(\vx)$, where $\theta = (\theta_1, \theta_2)$.

\section{Experiment Details}

For all experiments, we use the parametrizations detailed in~\eqref{app:architecture} trained with the Adam~\cite{ADAM} optimizer and the cosine learning rate scheduler~\cite{cos_LR}. Other salient hyperparameters are documented in Table~\eqref{tab:hyperparam}. 

Overall, the training hyperparameters require minimal tuning between different datasets, which is a sign of robustness. The variations on $B_d$ and $B_s$ are mainly due to memory constraints. For reference, the crowd motion experiment uses about 24GB of VRAM, so training it with e.g., $B_s=1024$ is too expensive for us. However, we do expect the model to learn a more accurate solution operator with more samples.

\begin{table}[h]
\begin{center}
\begin{small}
\begin{sc}
\begin{tabular}{lccccccccccc}
\toprule
& \multicolumn{3}{c|}{MFG params} & \multicolumn{5}{c}{Training params}\\
\midrule
\toprule
Dataset & $\lambda_L$ & $\lambda_\Is$ & $\lambda_\Ms$ & Iterations & LR & $B_d$ & $B_s$ & $p$\\
\midrule
 Gaussian & 5E-3 & - & 1E0 & 5E4 & 3E-5 & 8 & 1024 & 1E-1 \\
 Gaussian Mixture & 1E-3 & - & 1E0 & 5E4 & 3E-5 & 8 & 1024 & 1E-1 \\
 Crowd Motion & 1E-3 & 1E0 & 1E0 & 2E5 & 3E-5 & 4 & 256 & 0 \\
 Path Planning & 1E-3 & 1E0 & 1E0 & 2E5 & 3E-5 & 4 & 256 & 0 \\
 MNIST & 2E-2 & - & 1E0 & 2E5 & 3E-5 & 16 & 1053 & 1E-1 \\
\bottomrule
\end{tabular}
\end{sc}
\end{small}
\end{center}
\caption{Hyperparameters for all numerical experiments. $\lambda_L, \lambda_\Is, \lambda_Ms$: Weights associated with the MFG transport, interaction, and terminal costs, respectively; LR: Learning rate; $B_d$: Number of $(P_0,P_1)$ used per training batch; $B_s$: Number of samples used to represent each $P_0, P_1$. $p$: Dropout probability }
\label{tab:hyperparam}
\end{table}

A few additional aspects are worth mentioning for MFG with dynamics, i.e., crowd motion and path planning. To approximate the integrals in time, we use Simpson's rule on 10 equidistant points. A fourth-order forward scheme is used on the same grid to approximate $\partial_tG_\theta(X_0,X_1)(\vx,t)$. In addition, we note that for an input $(\vx,t)$, the dropout randomness needs to be shared on the same $x$ with different $t$. Otherwise, $G_\theta(X_0, X_1)(\vx,t) = \hat{G}_\theta(X_0, X_1)(\vx,t) - \hat{G}_\theta(X_0, X_1)(\vx,0)+ \vx $ may not satisfy $G_\theta(X_0,X_1)(\vx,0) = \vx$. Having no straightforward way of achieving this, we removed dropout layers for the parametrization with dynamics.

\end{document}